\newcommand{\neurips}[1]{\iftoggle{neurips}{#1}{}}
\newcommand{\arxiv}[1]{\iftoggle{neurips}{}{#1}}
\newcommand{\citep}{\cite}
\newcommand{\citet}{\cite}
\definecolor{dgreen}{rgb}{0,0.5,0}
\newtheorem*{rep@theorem}{\rep@title}
\newcommand{\newreptheorem}[2]{%
\newenvironment{rep#1}[1]{%
 \def\rep@title{#2 \ref{##1}}%
 \begin{rep@theorem}}%
 {\end{rep@theorem}}}
\newtheorem{theorem}{Theorem}[section]
\newtheorem{corollary}[theorem]{Corollary}
\newtheorem{lemma}[theorem]{Lemma}
\newtheorem{claim}[theorem]{Claim}
\newtheorem{remark}[theorem]{Remark}
\newtheorem{fact}[theorem]{Fact}
\theoremstyle{definition}
\newtheorem{defn}{Definition}[section]
\newcommand{\nc}{\newcommand}
\nc\noah[1]{\ifnum\Comments=1 {\textcolor{purple}{[ng: #1]}}\fi}
\nc\maxfish[1]{\ifnum\Comments=1{\textcolor{blue}{[mf: #1]}}\fi}
\nc\costis[1]{\ifnum\Comments=1{\textcolor{brown}{[cd: #1]}}\fi}
\nc\costiss[1]{\textcolor{red}{#1}}
\nc{\Opthedge}{Optimistic Hedge\xspace}
\nc{\DMO}{\DeclareMathOperator}
\nc\old[1]{\textcolor{brown}{[old: #1]}}
\nc{\BR}{\mathbb{R}}
\nc{\BC}{\mathbb{C}}
\DMO{\Bin}{Bin}
\nc{\BN}{\mathbb{N}}
\nc{\distrs}[1]{\Delta({#1})}
\nc{\BZ}{\mathbb{Z}}
\nc{\ep}{\epsilon}
\nc{\ra}{\rightarrow}
\nc{\st}{\star}
\DMO{\REG}{Reg}
\nc{\Reg}[2]{\REG_{{#1},{#2}}}
\nc{\til}{\tilde}
\nc{\kld}[2]{\KL({#1};{#2})}
\nc{\chisq}[2]{\chi^2({#1};{#2})}
\DMO{\POLYLOG}{polylog}
\nc{\polylog}{\POLYLOG}
\renewcommand{\t}{\top}
\nc{\matx}[1]{\left(\begin{matrix}#1\end{matrix}\right)}
\DMO{\VAR}{Var}
\DMO{\COV}{Cov}
\nc{\Var}[2]{\VAR_{{#1}}\left({#2}\right)}
\nc{\Cov}[3]{\COV_{{#1}}\left({#2},{#3}\right)}
\DMO{\DD}{D}
\nc{\fd}[2]{\DD_{#1}{#2}}
\nc{\fds}[3]{\left(\fd{#1}{#2}\right)\^{#3}}
\nc{\fdc}[2]{\DD^\circ_{#1}{#2}}
\nc{\fdcs}[3]{\left(\fdc{#1}{#2}\right)\^{#3}}
\DMO{\EEE}{E}
\nc{\shf}[2]{\EEE_{#1}{#2}}
\nc{\shfs}[3]{\left(\shf{#1}{#2}\right)\^{#3}}
\nc{\norm}[2]{\left\| {#2} \right\|_{#1}}
\nc{\normst}[2]{\left\| {#2} \right\|_{#1}^\st}
\renewcommand{\^}[1]{^{(#1)}}
\DeclareMathOperator*{\argmin}{arg\,min}
\nc{\grad}{\nabla}
\nc{\lng}{\langle}
\nc{\rng}{\rangle}
\nc{\bbone}{\mathbf{1}}
\nc{\bbzero}{\mathbf{0}}
\nc{\MD}{\mathcal{D}}
\nc{\MM}{\mathcal{M}}
\nc{\MZ}{\mathcal{Z}}
\nc{\MU}{\mathcal{U}}
\nc{\MP}{\mathcal{P}}
\nc{\MC}{\mathcal{C}}
\nc{\MT}{\mathcal{T}}
\nc{\MS}{\mathcal{S}}
\nc{\MX}{\mathcal{X}}
\nc{\MY}{\mathcal{Y}}
\nc{\MA}{\mathcal{A}}
\nc{\MB}{\mathcal{B}}
\nc{\MJ}{\mathcal{J}}
\nc{\MF}{\mathcal{F}}
\nc{\MG}{\mathcal{G}}
\nc{\MR}{\mathcal{R}}
\nc{\ML}{\mathcal{L}}
\nc{\MQ}{\mathcal{Q}}
\newcommand{\p}[1]{\left({#1}\right)}
\nc{\E}{\mathbb{E}}
\nc{\ba}{\mathbf{A}}
\nc{\bx}{\mathbf{x}}
\nc{\by}{\mathbf{y}}
\nc{\bz}{\mathbf{z}}
\nc{\bs}{\mathbf{s}}
\nc{\bt}{\mathbf{t}}
\nc{\br}{\mathbf{r}}
\nc{\ME}{\mathcal{E}}
\DMO{\View}{View}
\DMO{\KL}{KL}
\nc{\MW}{\mathcal{W}}
\nc{\CS}{\mathscr{S}}
\nc{\CI}{\mathscr{I}}
\nc{\CQ}{\mathscr{Q}}
\nc{\CL}{\mathscr{L}}
\nc{\CM}{\mathscr{M}}
\nc{\CG}{\mathscr{G}}
\nc{\CR}{\mathscr{R}}
\nc{\wh}{\widehat}
\newcommand{\ps}[1]{\left[{#1}\right]}
\newcommand{\set}[1]{\left\{#1\right\}}
\newcommand{\card}[1]{\left|#1\right|}
\newcommand{\BO}{\mathbbm{1}}
\newcommand{\cU}{\mathcal{U}}
\newcommand{\cF}{\mathcal{F}}
\newcommand{\EE}{\mathbb{E}}
\title{Near-Optimal No-Regret Learning in General Games}
\author{%
  Constantinos Daskalakis\thanks{Supported by NSF Awards CCF-1901292,  DMS-2022448 and  DMS-2134108, by a Simons Investigator Award, by the Simons Collaboration on the Theory of Algorithmic Fairness, by a DSTA grant, and by the DOE PhILMs project (No. DE-AC05-76RL01830).} \\
  MIT CSAIL\\
  \url{costis@csail.mit.edu}  \and
  Maxwell Fishelson\\
  MIT CSAIL\\
  \url{maxfish@mit.edu}  \and
  Noah Golowich\thanks{Supported by a Fannie \& John Hertz Foundation Fellowship and an NSF Graduate Fellowship.}\\
  MIT CSAIL\\
  \url{nzg@mit.edu}\\
}
\begin{document}

\maketitle

\begin{abstract}
  We show that Optimistic Hedge -- a common variant of multiplicative-weights-updates with recency bias -- attains ${\rm poly}(\log T)$ regret in multi-player general-sum games. In particular, when every player of the game uses Optimistic Hedge to iteratively update her strategy in response to the history of play so far, then after $T$ rounds of interaction, {\em each player} experiences total regret that is ${\rm poly}(\log T)$. Our bound improves, exponentially, the $O({T}^{1/2})$ regret attainable  by standard no-regret learners in games, the $O(T^{1/4})$ regret attainable by no-regret learners with recency bias~\citep{syrgkanis_fast_2015}, and the ${O}(T^{1/6})$ bound that was recently shown for Optimistic Hedge in the special case of two-player games~\citep{chen_hedging_2020}. 
  A corollary of our bound is that Optimistic Hedge converges to coarse correlated equilibrium in general games at a rate of $\tilde{O}\left(\frac 1T\right)$.

\end{abstract}

\section{Introduction} \label{sec:intro}

\neurips{\vspace{-0.3cm}}
Online learning has a long history that is intimately related to the development of game theory, convex optimization, and machine learning. One of its earliest instantiations can be traced to~Brown's proposal \citet{Brown1949} of fictitious play as a method to solve two-player zero-sum games. Indeed, as  shown by~\citet{robinson1951iterative}, when the players of (zero-sum) matrix game
use fictitious play to iteratively update their actions in response to each other's history of play, the resulting dynamics converge in the following sense: the product of the empirical distributions of strategies for each player converges to the set of Nash equilibria in the game, 
though the rate of convergence is now known to be exponentially slow~\citep{daskalakis2014counter}. 
Moreover, such convergence to Nash equilibria fails in non-zero-sum games \citep{shapley_topics_1963}.

The slow convergence of fictitious play to Nash equilibria in zero-sum matrix games and non-convergence in general-sum games can be mitigated by appealing to the pioneering works \citet{blackwell1954controlled,hannan1957approximation} and the ensuing literature on no-regret learning~\citep{cesa2006prediction}.
It is known that if both players of a zero-sum matrix game experience regret that is at most $\varepsilon(T)$, the product of the players' empirical distributions of strategies is an $O(\varepsilon(T)/T)$-approximate Nash equilibrium. 
More generally, if each player of a general-sum, multi-player game experiences regret that is at most~$ \varepsilon(T)$,  the empirical distribution of joint strategies  converges to a coarse correlated equilibrium\footnote{In general-sum games, it is typical to focus on proving convergence rates for weaker types of equilibrium than Nash, such as coarse correlated equilibria, since finding Nash equilibria is PPAD-complete \citep{daskalakis_complexity_2006,chen_settling_2009}.} of the game, at a rate of~$O(\varepsilon(T)/{T})$. Importantly, a multitude of online learning algorithms, such as the celebrated Hedge and Follow-The-Perturbed-Leader algorithms, guarantee adversarial 
regret $O(\sqrt{T})$~\citep{cesa2006prediction}. Thus, when such algorithms are employed by all players in a game, their $O(\sqrt{T})$ regret  implies convergence to coarse correlated equilibria (and Nash equilibria of matrix games) at a rate of $O(1/\sqrt{T})$.

While standard no-regret learners guarantee $O(\sqrt{T})$ regret for each player in a game, the players can do better by employing specialized no-regret learning procedures. Indeed, it was established by~\citet{daskalakis_near-optimal_2013} 
that there exists a somewhat complex no-regret learner based on Nesterov's excessive gap technique \cite{nesterov2005excessive}, which guarantees $O(\log T)$ regret to each player of a two-player zero-sum game. This represents an exponential improvement over the regret guaranteed by standard no-regret learners. More generally, \citet{syrgkanis_fast_2015} established that if players of a multi-player, general-sum game use any algorithm from the family of Optimistic Mirror Descent (MD) or Optimistic Follow-the-Regularized-Leader (FTRL) algorithms (which are analogoues of the MD and FTRL algorithms, respectively, with recency bias), each player enjoys regret that is $O(T^{1/4})$. This was recently improved by~\citet{chen_hedging_2020} to $O(T^{1/6})$ in the special case of two-player games in which the players use \Opthedge, a particularly simple representative from both the Optimistic MD and Optimistic FTRL families.

The above results for general-sum games represent significant improvements over the $O(\sqrt{T})$ regret attainable by standard no-regret learners, but are not as dramatic as the logarithmic regret that has been shown attainable by no-regret learners, albeit more complex ones, in 2-player zero-sum games (e.g., \citet{daskalakis_near-optimal_2013}). {Indeed, despite extensive work on no-regret learning, understanding the optimal regret that can be guaranteed by no-regret learning algorithms in general-sum games has remained elusive. This question is especially intruiging in light of experiments suggesting that polylogarithmic regret should be attainable \citep{syrgkanis_fast_2015,hsieh_adaptive_2021}. 
}
In this paper we settle this question by showing that no-regret learners can guarantee polylogarithmic regret to each player in general-sum multi-player games. 
Moreover, this regret is attainable by a particularly simple algorithm -- Optimistic Hedge: 

\begin{table}
  \centering
  \caption{Overview of prior work on fast rates for learning in games. $m$ denotes the number of players, and $n$ denotes the number of actions per player (assumed to be the same for all players). 
    For \Opthedge, the adversarial regret bounds in the right-hand column are obtained via a choice of adaptive step-sizes. The $\tilde O(\cdot)$ notation hides factors that are polynomial in $\log T$.}
  \label{tab:prior-work}
  \begin{adjustbox}{center}
  \begin{tabular}{lccc}
    \toprule
    Algorithm & Setting & Regret in games & Adversarial regret \\
    \midrule
    \makecell[l]{Hedge   (\& many\\ other algs.)} & \makecell[c]{multi-player,\\ general-sum} & \makecell[c]{$O(\sqrt{T \log n})$   \citep{cesa2006prediction}} & \makecell[c]{$O(\sqrt{T \log n})$  \citep{cesa2006prediction}} \\ 
    \midrule 
    \makecell[l]{Excessive Gap \\ Technique } &\makecell[c]{ 2-player,\\ 0-sum} & \makecell[c]{$O(\log n(\log T + \log^{3/2} n))$ \\ \citet{daskalakis_near-optimal_2013}} & \makecell[c]{$O(\sqrt{T \log n})$ \\ \citet{daskalakis_near-optimal_2013}} \\
     \midrule
    \makecell[l]{ DS-OptMD, OptDA} & \makecell[c]{2-player, 0-sum} &  \makecell[c]{$\log^{O(1)} (n)$  \citet{hsieh_adaptive_2021}} & \makecell[c]{$\sqrt{T \log^{O(1)} (n)}$  \citet{hsieh_adaptive_2021}} \\
    \midrule
   \Opthedge & \makecell[c]{multi-player,\\ general-sum} 
               & \makecell[c]{$O(\log n \cdot \sqrt{m} \cdot T^{1/4})$ \\  \citet{rakhlin_optimization_2013,syrgkanis_fast_2015}}
              & \makecell[c]{$\tilde O(\sqrt{T \log n})$  \\  \citet{rakhlin_optimization_2013,syrgkanis_fast_2015}} \\
    \midrule
   \Opthedge & \makecell[c]{2-player,\\ general-sum} 
               & \makecell[c]{$O(\log^{5/6}n \cdot T^{1/6})$  \citet{chen_hedging_2020} }
              & \makecell[c]{$\tilde O(\sqrt{T \log n})$ } \\
    \midrule
   \Opthedge & \makecell[c]{multi-player,\\ general-sum} 
               & \makecell[c]{$O(\log n \cdot m \cdot \log^4 T)$ \\ {\bf (Theorem \ref{thm:polylog-main}) }}
              & \makecell[c]{$\tilde O(\sqrt{T \log n})$ \\ {\bf (Corollary \ref{cor:polylog-adv})}} \\    
    \bottomrule
  \end{tabular}
  \end{adjustbox}
\end{table}
\begin{theorem}[Abbreviated version of Theorem \ref{thm:polylog-main}] \label{thm:main}
  Suppose that $m$ players play a general-sum multi-player game, with a finite set of $n$ strategies per player, over $T$ rounds. Suppose also that each player uses Optimistic Hedge to update her strategy in every round, as a function of the history of play so far. Then each player experiences $O(m \cdot \log n \cdot \log^4 T)$
  regret. 
\end{theorem}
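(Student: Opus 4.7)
The starting point is the standard RVU-style regret bound for Optimistic Hedge: for each player $i$ using step-size $\eta$,
\[
R_i \;\le\; \frac{\log n}{\eta} \;+\; \eta \sum_{t=1}^T \|\ell_t^{(i)} - \ell_{t-1}^{(i)}\|_\infty^2 \;-\; \frac{1}{O(\eta)} \sum_{t=1}^T \|x_t^{(i)} - x_{t-1}^{(i)}\|_1^2 .
\]
Multi-linearity of utilities gives $\|\ell_t^{(i)} - \ell_{t-1}^{(i)}\|_\infty \le \sum_{j \ne i}\|x_t^{(j)} - x_{t-1}^{(j)}\|_1$, so the loss-variation term is controlled by $(m-1)\sum_{j \ne i}\sum_t\|x_t^{(j)}-x_{t-1}^{(j)}\|_1^2$. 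A naive plug-in using the trivial $\sum_t\|\Delta x_t^{(j)}\|_1^2 \le O(T\eta^2)$ recovers only the $O(T^{1/4})$ bound of \citet{syrgkanis_fast_2015}; the point of the proof is to improve this to an \emph{iterate-stability} bound
\[
\sum_{t=1}^T \|x_t^{(j)} - x_{t-1}^{(j)}\|_1^2 \;\le\; \polylog(T) \qquad \text{for every player } j ,
\]
after which substituting back with $\eta$ chosen as a suitable inverse polynomial in $\log T$ yields the claimed $O(m\log n \cdot \log^4 T)$ regret.

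To prove iterate stability, I would pass to higher-order finite differences. Let $\Delta$ denote the backwards-difference operator $\Delta x_t := x_t - x_{t-1}$ and $\Delta^h$ its $h$-fold iterate, and set $\Phi_h^{(j)} := \sum_t \|\Delta^h x_t^{(j)}\|_1^2$. Writing $x_t^{(i)} = \mathrm{softmax}(-u_t^{(i)})$, Optimistic Hedge gives $\Delta u_t^{(i)} = \eta (2\ell_{t-1}^{(i)} - \ell_{t-2}^{(i)})$, so differencing $h$ more times yields $\Delta^{h+1} u_t^{(i)} = \eta\,\Delta^{h}(2\ell_{t-1}^{(i)} - \ell_{t-2}^{(i)})$. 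Combined with (i) the $\ell_\infty \to \ell_1$ stability of softmax and its Fa\`a di Bruno expansion at higher orders, and (ii) multi-linearity of $\ell_t^{(i)}$ in $(x_t^{(j)})_{j \ne i}$, this says that the order-$(h{+}1)$ variation of a player's own strategies is controlled by $\eta^2$ times the order-$h$ variation of the opponents' strategies (plus lower-order cross-product terms).

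The bound on $\Phi_1^{(j)}$ is then obtained by combining this downward recursion with a discrete Abel-summation inequality of the form $\Phi_h \le O(1) + O(\sqrt{T\,\Phi_{h+1}})$, proved by discrete integration by parts applied coordinate-wise to $\sum_t(\Delta^h x_t)(\Delta^h x_t)$. Starting from the trivial base case at $h = H := \lceil \log T \rceil$ (where $\|\Delta^H x_t^{(j)}\|_1$ is tiny thanks to the $O(\eta)$ per-step movement of Optimistic Hedge) and alternating the two families of inequalities closes the loop and propagates polylogarithmic bounds upward through all orders $h = H, H-1, \ldots, 1$.

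The main obstacle is managing the combinatorial proliferation of cross terms from two distinct sources of non-linearity: the softmax map, which produces Fa\`a di Bruno products of mixed-order $\Delta^{h_k} u_t$ factors; and the degree-$(m-1)$ multi-linearity of $\ell_t^{(i)}$ in the opponents' strategies, which produces products of mixed-order $\Delta^{h_k} x_t^{(j_k)}$ factors across opponents. Both generate many cross terms coupling different orders and different players, and their proliferation is what makes the general $m$-player case nontrivial beyond the two-player linear case of \citet{chen_hedging_2020}. I would handle them by introducing a single weighted potential $\Psi^{(i)} := \sum_{h=1}^H w_h\,\Phi_h^{(i)}$ with carefully chosen exponential weights $w_h \sim (C\eta)^{-2h}$, and arguing that $\sum_i \Psi^{(i)}$ satisfies a closed-form recurrence yielding polylog bounds on all $\Phi_h^{(i)}$ simultaneously; the bulk of the technical work is bookkeeping the combinatorial constants so they all fit inside the final $\polylog$ factor.
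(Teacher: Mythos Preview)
Your proposal has the right high-level skeleton (higher-order finite differences, a chain-rule-type bound for softmax, and a Cauchy--Schwarz / summation-by-parts inequality linking adjacent orders), and these are indeed the core ingredients of the paper's argument. But the way you combine them has a genuine gap, stemming from the choice of starting regret inequality.

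The paper does \emph{not} use the standard RVU bound with $\|\ell_t-\ell_{t-1}\|_\infty^2$ positive and $\|x_t-x_{t-1}\|_1^2$ negative. Instead it proves a refined local-norm bound (Lemma~\ref{lem:omwu-local}) in which \emph{both} the positive and negative terms are variances of player $i$'s \emph{loss} sequence with respect to $x_i^{(t)}$: roughly $\eta\sum_t\Var{x_i^{(t)}}{\Delta\ell_i^{(t)}}-\eta\sum_t\Var{x_i^{(t)}}{\ell_i^{(t-1)}}$. Because the two terms are now directly comparable, the goal becomes the \emph{relative} inequality $\sum_t\Var{x_i^{(t)}}{\Delta\ell_i^{(t)}}\le\tfrac12\sum_t\Var{x_i^{(t)}}{\ell_i^{(t-1)}}+\polylog(T)$, not an absolute $\polylog$ bound on either side. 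Your target, absolute iterate stability $\sum_t\|x_t^{(j)}-x_{t-1}^{(j)}\|_1^2\le\polylog(T)$, is strictly stronger and is not what the argument delivers: even the paper's full machinery only yields $\Phi_1\le\alpha\,\Phi_0+\polylog$ with $\Phi_0$ possibly $\Theta(T)$, and it is the cancellation against the negative $\Phi_0$ already present in the regret bound that saves the day. With the standard RVU bound the negative term is your \emph{own} strategy variation while the positive term is your \emph{opponents'}, so no such per-player cancellation is available.

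Two further points. First, your Abel-type inequality in the form $\Phi_h\le O(1)+O(\sqrt{T\,\Phi_{h+1}})$ (replacing $\Phi_{h-1}$ by the trivial bound $T$) is fatal: iterating it from $h=H$ down to $h=1$ accumulates the $\sqrt{T}$ factors and returns $\Phi_1=O(T)$. The paper instead keeps the full Cauchy--Schwarz $\Phi_h^2\le\Phi_{h-1}\Phi_{h+1}$ (proved via the discrete Fourier transform, Lemma~\ref{lem:freq-cauchy}) and uses it to propagate the \emph{multiplicative} relation $\Phi_{h+1}\le\alpha\Phi_h$ one step down to $\Phi_h\le\alpha\Phi_{h-1}$, with no $T$-loss. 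Second, your base case at $h=H$ is not ``trivially'' small from the $O(\eta)$ per-step movement: naively $\|\Delta^H x_t\|_1\le 2^H\cdot O(\eta)=O(T\eta)$, which is large. Establishing that $\|\Delta^H\ell_i^{(t)}\|_\infty$ is genuinely tiny requires the careful upwards induction (Lemma~\ref{lem:dh-bound}) that exploits the game structure and the softmax chain rule, not just bounded step sizes.
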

An immediate corollary of Theorem \ref{thm:main} is that the empirical distribution of play is a $O\left(\frac{m \log n \log^4 T}{T}\right)$-approximate coarse correlated equilibrium (CCE) of the game. 
We remark that Theorem \ref{thm:main} bounds the total regret experienced by {\em each player} of the multi-player game, which is the most standard regret objective for no-regret learning in games, and which is essential to achieve convergence to CCE. For the looser objective of the {\em average} of all players' regrets, \cite{rakhlin_optimization_2013} established  a $O(\log n)$ bound for \Opthedge in two-player zero-sum games, and \cite{syrgkanis_fast_2015} generalized  this bound, to $O(m \log n)$ in $m$-player general-sum games. Note that since some players may experience \emph{negative regret} \cite{hsieh_adaptive_2021}, the average of the players' regrets cannot be used in general to bound the maximum regret experienced by any individual player. 
Finally, we remark that several results in the literature posit no-regret learning as a model of agents' rational behavior; for instance, \cite{roughgarden_intrinsic_2009,syrgkanis_composable_2013,roughgarden_price_2017} show that no-regret learners in smooth games enjoy strong Price-of-Anarchy bounds. {By showing that \emph{each agent} can obtain very small regret in games by playing \Opthedge, Theorem \ref{thm:main} strengthens the plausability of the common assumption made in this literature that each agent will choose to use such a no-regret algorithm.}
  
\subsection{Related work}

\neurips{\vspace{-0.2cm}}
Table \ref{tab:prior-work} summarizes the prior works that aim to establish optimal regret bounds for no-regret learners in games. We remark that \cite{chen_hedging_2020} shows that the regret of Hedge 
is $\Omega(\sqrt{T})$ even in 2-player games where each player has 2 actions, meaning that optimism is necessary to obtain fast rates. 
 The table also includes a recent result of \cite{hsieh_adaptive_2021} showing that when the players in a 2-player zero-sum game with $n$ actions per player use a variant of \Opthedge with adaptive step size (a special case of their algorithms DS-OptMD and OptDA), each player has $\log^{O(1)} n$ regret. 
The techniques of \cite{hsieh_adaptive_2021} differ substantially from ours: the result in \cite{hsieh_adaptive_2021} is based on showing that the joint strategies $x\^t$ rapidly converge, pointwise, to a Nash equilibrium $x^\st$. 
Such a result seems very unlikely to extend to our setting of general-sum games, since finding an approximate Nash equilibrium even in 2-player games is PPAD-complete \cite{chen_settling_2009}. 
We also remark that the earlier work \citep{kangarshahi_honest_2018} shows that each player's regret is at most  $O(\log T \cdot \log n)$ when they use a certain algorithm based on Optimistic MD in 2-player zero-sum games; their technique is heavily tailored to 2-player zero-sum games, relying on the notion of duality in such a setting.

\cite{foster_learning_2016} shows that one can obtain fast rates in games for a broader class of algorithms (e.g., including Hedge) if one adopts a relaxed (approximate) notion of optimality. \cite{wei_more_2018} uses optimism to obtain adaptive regret bounds for bandit problems. Many recent papers (e.g., \citep{daskalakis_last-iterate_2019,golowich_tight_2020,lei_last_2021,hsieh_adaptive_2021,wei_linear_2021,azizian_last-iterate_2021}) have studied the \emph{last-iterate} convergence of algorithms from the Optimistic Mirror Descent family, which includes \Opthedge. Finally, a long line of papers (e.g., \citep{hart_uncoupled_2003,daskalakis_learning_2010,kleinberg_beyond_2011,balcan_weighted_2012,papadimitriou_nash_2016,bailey_multiplicative_2018,mertikopoulos_cycles_2018,bailey_fast_2019,cheung_vortices_2019,vlatakis_no-regret_2020}) has studied the dynamics of learning algorithms in games. Essentially all of these papers do not use optimism, and many of them show \emph{non-convergence} (e.g., divergence or recurrence) of the iterates of various learning algorithms such as FTRL and Mirror Descent when used in games.

\section{Preliminaries}
\label{sec:prelim}

\neurips{\vspace{-0.3cm}}
\paragraph{Notation.} For a positive integer $n$, let $[n] := \{ 1, 2, \ldots, n\}$. For a finite set $\MS$, let $\distrs{\MS}$ denote the space of distributions on $\MS$. For $\MS = [n]$, we will write $\Delta^n := \Delta(\MS)$ and interpret elements of $\Delta^n$ as vectors in $\BR^n$. For a vector $v \in \BR^n$ and $j \in [n]$, we denote the $j$th coordinate of $v$ as $v(j)$. For vectors $v, w \in \BR^n$, write $\lng v, w \rng = \sum_{j=1}^n v(j)w(j)$.   The base-2 logarithm of $x > 0$ is denoted $\log x$.

\neurips{\vspace{-0.3cm}}
\paragraph{No-regret learning in games.} We consider a game $G$ with $m \in \BN$ players, where player $i \in [m]$ has \emph{action space} $\MA_i$ with $n_i := |\MA_i|$ actions. We may assume that $\MA_i = [n_i]$ for each player $i$. The \emph{joint action space} is $\MA := \MA_1 \times \cdots \times \MA_m$. The specification of the game $G$ is completed by a collection of \emph{loss functions} $\ML_1, \ldots, \ML_m : \MA \ra [0,1]$. For an action profile $a = (a_1, \ldots, a_m) \in \MA$ and $i \in [m]$, $\ML_i(a)$ is the loss player $i$ experiences when each player $i' \in [m]$ plays $a_{i'}$. A \emph{mixed strategy} $x_i \in \Delta(\MA_i)$ for player $i$ is a distribution over $\MA_i$, with the probability of playing action $j \in \MA_i$ given by $x_i(j)$. Given a mixed strategy profile $x = (x_1, \ldots, x_m)$ (or an action profile $a = (a_1, \ldots, a_m)$) and a player $i \in [m]$  we let $x_{-i}$ (or $a_{-i}$, respectively) denote the profile after removing the $i$th mixed strategy $x_i$ (or the $i$th action $a_i$, respectively).

The $m$ players play the game $G$ for a total of $T$ rounds. At the beginning of each round $t \in [T]$, each player $i$ chooses a mixed strategy $x_i\^t \in \distrs{\MA_i}$. The \emph{loss vector} of player $i$, denoted $\ell_i\^t \in [0,1]^{n_i}$, is defined as $\ell_i\^t(j) = \E_{a_{-i} \sim x_{-i}\^t} [\ML_i(j,a_{-i})]$. As a matter of convention, set $\ell_i\^0 = \bbzero$ to be the all-zeros vector. 
We consider the \emph{full-information setting} in this paper, meaning that player $i$ observes its full loss vector $\ell_i\^t$ for each round $t$. Finally, player $i$ experiences a loss of $\lng \ell_i\^t, x_i\^t \rng$. The goal of each player $i$ is to minimize its \emph{regret}, defined as:
$ 
\Reg{i}{T} := \sum_{t \in [T]} \lng x_i\^t, \ell_i\^t \rng - \min_{j \in [n_i]} \sum_{t \in [T]} \ell_i\^t(j).
$

\neurips{\vspace{-0.3cm}}
\paragraph{Optimistic hedge.} The \Opthedge algorithm chooses mixed strategies for player $i \in [m]$ as follows: at time $t = 1$, it sets $x_i\^1 = (1/n_i, \ldots, 1/n_i)$ to be the uniform distribution on $\MA_i$. Then for all $t < T$, player $i$'s strategy at iteration $t+1$ is defined as follows, for $j \in [n_i]$:
\begin{align}
  \label{eq:opt-hedge}
x_i\^{t+1}(j) := \frac{x_i\^t(j) \cdot \exp(-\eta \cdot (2\ell_i\^t(j) - \ell_i\^{t-1}(j)))}{\sum_{k \in [n_i]} x_i\^t(k) \cdot \exp(-\eta \cdot (2 \ell_i\^t(k) - \ell_i\^{t-1}(k)))}.
\end{align}
\Opthedge is a modification of Hedge, which performs the updates $x_i\^{t+1}(j) := \frac{x_i\^t(j) \cdot \exp(-\eta \cdot \ell_i\^t(j))}{\sum_{k \in [n_i]} x_i\^t(k) \cdot \exp(-\eta \cdot \ell_i\^t(k))}$.
The update (\ref{eq:opt-hedge}) modifies the Hedge update by replacing the loss vector $\ell_i\^t$ with a predictor of the \emph{following iteration's} loss vector, $\ell_i\^t + (\ell_i\^t - \ell_i\^{t-1})$.
Hedge corresponds to FTRL with a negative entropy regularizer (see, e.g., \citep{bubeck_convex_2015}), whereas \Opthedge corresponds to \emph{Optimistic} FTRL with a negative entropy regularizer  \citep{rakhlin_optimization_2013,rakhlin_online_2013}. 

\neurips{\vspace{-0.3cm}}
\paragraph{Distributions \& divergences.} For distributions $P,Q$ on a finite domain $[n]$, the \emph{KL divergence} between $P,Q$ is $\kld{P}{Q} = \sum_{j =1}^n P(j) \cdot \log \left( \frac{P(j)}{Q(j)} \right)$. The \emph{chi-squared divergence} between $P,Q$ is $\chisq{P}{Q} = \sum_{j=1}^n Q(j) \cdot \left( \frac{P(j)}{Q(j)} \right)^2 - 1 = \sum_{j=1}^n \frac{(P(j) - Q(j))^2}{Q(j)}$. For a distribution $P$ on $[n]$ and a vector $v \in \BR^n$, we write
$
\Var{P}{v} := \sum_{j=1}^n P(j) \cdot \left( v(j) - \sum_{k=1}^n P(k) v(k) \right)^2.
$
Also define $\norm{P}{v} := \sqrt{\sum_{j=1}^n P(j) \cdot v(j)^2}$. If further $P$ has full support, then define $\normst{P}{v} = \sqrt{\sum_{j=1}^n \frac{v(j)^2}{P(j)}}$. The above notations will often be used when $P$ is the mixed strategy profile $x_i$ for some player $i$ and $v$ is a loss vector $\ell_i$; in such a case the norms $\norm{P}{v}$ and $\normst{P}{v}$ are often called \emph{local norms}.

\section{Results}
\label{sec:results}

\neurips{\vspace{-0.3cm}}
Below we state our main theorem, which shows that when all players in a game play according to \Opthedge with appropriate step size, they all experience polylogarithmic individual regrets.
\begin{theorem}[Formal version of Theorem \ref{thm:main}]
  \label{thm:polylog-main}
  There are constants $C, C' >1$ so that the following holds. Suppose a time horizon $T \in \BN$ and a game $G$ with $m$ players and $n_i$ actions for each player $i\in [m]$ is given. Suppose all players play according to \Opthedge with any positive step size $\eta \leq \frac{1}{C \cdot m \log^4 T}$. Then for any $i \in [m]$, the regret of player $i$ satisfies
  \begin{align}
\Reg{i}{T} \leq \frac{\log n_i}{\eta} + C' \cdot \log T \label{eq:reg-eta-ub}.
  \end{align}  
  In particular, if the players' step size is chosen as $\eta = \frac{1}{C \cdot m \log^4 T}$, then the regret of player $i$ satisfies
  \begin{align}
\Reg{i}{T} \leq O \left( m \cdot \log n_i \cdot \log^4 T \right).\label{eq:reg-eta-fix}
  \end{align}
\end{theorem}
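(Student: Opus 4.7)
The plan is to establish polylogarithmic regret by bounding each player's second-order loss variation via a higher-order finite-difference recursion. I would first invoke the standard RVU-style regret bound for Optimistic FTRL with the negative entropy regularizer, which for \Opthedge{} yields, for each player $i$,
\[
\Reg{i}{T} \;\leq\; \frac{\log n_i}{\eta} \;+\; \eta \sum_{t=1}^T \normst{x_i\^t}{\ell_i\^t - \ell_i\^{t-1}}^2 \;-\; \frac{c_0}{\eta}\sum_{t=1}^T \kld{x_i\^{t+1}}{x_i\^t},
\]
for an absolute constant $c_0 > 0$. Proving \eqref{eq:reg-eta-ub} thus reduces to controlling the local-norm loss variation $V_i := \sum_t \normst{x_i\^t}{\ell_i\^t - \ell_i\^{t-1}}^2$ by $O(\log T)$. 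By multilinearity of $\ML_i$ in the opponents' mixed strategies, $\ell_i\^t - \ell_i\^{t-1}$ decomposes coordinate-wise into a sum over opponents $j\ne i$ of terms that depend only on $\fd{1}{x_j\^t}$; combined with Cauchy--Schwarz (and the fact that $\|\ell_i\^t\|_\infty \le 1$), this reduces the task, up to an $O(m)$ prefactor, to showing that for every player $j$ the first-order strategy variation $\CQ_j\^{1} := \sum_t \norm{x_j\^t}{\fd{1}{x_j\^t}}^2$ is $\polylog(T)$.

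The technical heart of the proof is a one-step recursion between successive orders of finite differences of the iterates. For each $h\ge 1$, I will establish a bound of the form
\[
\CQ_j\^{h} \;\le\; C\eta^2 \sum_{j'\ne j}\CQ_{j'}\^{h+1} \;+\; \polylog(T),
\]
where $\CQ_j\^{h} := \sum_t \norm{x_j\^t}{\fd{h}{x_j\^t}}^2$. The route to this recursion exploits the fact that \Opthedge{} is affine in log-space: the identity $\log x_j\^{t+1}(k) - \log x_j\^t(k) = -\eta(2\ell_j\^t(k) - \ell_j\^{t-1}(k)) + \mathrm{const}_t$ yields a clean formula for $\fd{h}{\log x_j\^{t+1}}$, which then controls $\fd{h}{x_j\^{t+1}}$ through a discrete Fa\`a di Bruno expansion of the softmax. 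The leading term of that expansion contributes $\eta^2 \|\fd{h}{\ell_j\^t}\|^2$, which multilinearity of $\ML_j$ re-expresses as a sum over opponents of $\|\fd{h}{x_{j'}\^t}\|^2$. Iterating the recursion $H=\Theta(\log T)$ times, and using the trivial top-level bound $\|\fd{H}{x_j\^t}\|_\infty \le 2^H$, collapses the top-level contribution to $\polylog(T)$ whenever $C\eta^2 m$ is less than an absolute constant; summing the $H$ residuals (each $\polylog(T)$) yields $\CQ_j\^{1} = \polylog(T)$, which fed back into the RVU inequality produces \eqref{eq:reg-eta-ub}.

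The main obstacle is the log-space-to-probability-space conversion inside the one-step recursion. For $h=1$ this reduces to Pinsker-type inequalities tying $\kld{\cdot}{\cdot}$ to primal-local distances, but for $h\ge 2$ the Fa\`a di Bruno expansion of $\fd{h}{\exp(\cdot)}$ produces cross-terms that are products of lower-order differences of log-probabilities, which must be controlled in the local norm $\normst{x_j\^t}{\cdot}$ without losing factors polynomial in $n_j$ or $m$. Handling these cross-terms requires an outer induction on $h$ in which, at level $h$, the cross-terms are already controlled by the inductively-bounded lower variations $\CQ_j\^{h'}$ for $h'<h$; the polylogarithmic losses accumulated across the $H=\log T$ levels of this induction are what I expect to force the step-size condition $\eta \le 1/(Cm\log^4 T)$ in the theorem statement, rather than the naive $1/(Cm)$ that a contraction-only analysis would suggest.
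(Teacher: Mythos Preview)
Your proposal correctly singles out higher-order finite differences and a Fa\`a di Bruno--type expansion of the softmax, but the one-step recursion $\CQ_j\^{h} \le C\eta^2 \sum_{j'\neq j}\CQ_{j'}\^{h+1} + \polylog(T)$ is in the wrong direction and does not follow from the log-space identity you write. From $\fd{1}{\log x_j}\^{t} = -\eta\bigl(2\ell_j\^{t}-\ell_j\^{t-1}\bigr)+c_t\bbone$ one gets $\fd{h}{\log x_j}\^{t} = -\eta\bigl(2\fds{h-1}{\ell_j}{t}-\fds{h-1}{\ell_j}{t-1}\bigr)+\fds{h-1}{c}{t}\bbone$, so the leading contribution after passing through the softmax is $\eta\,\fd{h-1}{\ell_j}$, not $\eta\,\fd{h}{\ell_j}$; pushing this through multilinearity yields $\CQ\^{h}\lesssim \eta^2 m\,\CQ\^{h-1}$, the reverse of your claim. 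In fact no inequality of your form can hold: if the dynamics cycle with period $P\sim 1/\eta$ (not excluded in general-sum games), then $\CQ\^{1}\asymp T/P^2$ while $\CQ\^{2}\asymp T/P^4$, so $\CQ\^{1}/\CQ\^{2}\asymp 1/\eta^2$, not $O(\eta^2 m)$. For the same reason, the target $V_i=\polylog(T)$ after dropping the negative RVU term is not reachable; $\sum_t\Var{x_i\^t}{\fds{1}{\ell_i}{t}}$ can be $\Theta(\eta^2 T)$.

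The paper's route differs on exactly these two points. First, it does not bound the positive variation absolutely but refines the RVU bound (Lemma~\ref{lem:omwu-local}) so that the negative term is $-\Theta(\eta)\sum_t\Var{x_i\^t}{\ell_i\^{t-1}}$, and then proves the \emph{ratio} inequality $\sum_t\Var{x_i\^t}{\fds{1}{\ell_i}{t}}\le \tfrac12\sum_t\Var{x_i\^t}{\ell_i\^{t-1}}+\polylog(T)$ (Lemma~\ref{lem:bound-l-dl}), so positive and negative cancel. Second, obtaining this ratio requires two \emph{separate} inductions. An upward induction on $h$ (Lemma~\ref{lem:dh-bound}) uses the chain-rule machinery you sketch to show $\|\fds{h}{\ell_i}{t}\|_\infty\le (O(m\eta))^h h^{O(h)}$; this is where the cross-terms you worry about are handled, but it only gives $\ell_\infty$ decay, not your summed-$\ell_2$ recursion. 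The decisive step you are missing is a downward induction (Lemma~\ref{lem:d210}): via the discrete Fourier transform and Cauchy--Schwarz in frequency space, one shows that $\sum_t\Var{P\^t}{\fds{2}{Z}{t}}\le \alpha\sum_t\Var{P\^t}{\fds{1}{Z}{t}}+\mu$ implies $\sum_t\Var{P\^t}{\fds{1}{Z}{t}}\le \alpha(1+\alpha)\sum_t\Var{P\^t}{Z\^t}+\mu/\alpha+O(1)$, and this ratio propagates from $h=H=\log T$ (seeded by the $\ell_\infty$ bound) down to $h=0$.
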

A common goal in the literature on learning in games is to obtain an algorithm that achieves fast rates whan played by all players, and so that each player $i$ still obtains the optimal rate of $O(\sqrt{T})$ in the adversarial setting (i.e., when $i$ receives an arbitrary sequence of losses $\ell_i\^1, \ldots, \ell_i\^T$). We show in Corollary \ref{cor:polylog-adv} (in the appendix) that by running \Opthedge with an adaptive step size, this is possible. Table \ref{tab:prior-work} compares our regret bounds discussed in this section to those of prior work. 

\section{Proof overview}
\label{sec:proof-overview}

\neurips{\vspace{-0.3cm}}
In this section we overview the proof of Theorem \ref{thm:polylog-main}; the full proof may be found in the appendix. 
\neurips{\vspace{-0.2cm}}
\subsection{New adversarial regret bound}

\neurips{\vspace{-0.2cm}}
\label{sec:adv-reg-bound}
The first step in the proof of Theorem \ref{thm:polylog-main} is to prove a new regret bound (Lemma \ref{lem:omwu-local} below) for \Opthedge that holds for an adversarial sequence of losses. We will show in later sections that when \emph{all} players play according to \Opthedge, the right-hand side of the regret bound (\ref{eq:adv-var-bound}) is bounded by a quantity that grows only poly-logarithmically in $T$. 
\begin{lemma}
  \label{lem:omwu-local}
  There is a constant $C > 0$ so that the following holds. 
Suppose any player $i \in [m]$ follows the \Opthedge updates (\ref{eq:opt-hedge}) with step size $\eta < 1/C$, for an arbitrary sequence of losses $\ell_i\^1, \ldots, \ell_i\^T \in [0,1]^{n_i}$. Then 
\begin{align}
  \label{eq:adv-var-bound}
  \hspace{-0.7cm}
  \Reg{i}{T} \leq \frac{\log n_i}{\eta} + \sum_{t=1}^T \left(\frac{\eta}{2} + C \eta^2\right)  \Var{x_i\^t}{\ell_i\^t - \ell_i\^{t-1}} - \sum_{t=1}^T \frac{ (1-C\eta)\eta}{2} \cdot \Var{x_i\^{t}}{\ell_i\^{t-1}}.
\end{align}
\end{lemma}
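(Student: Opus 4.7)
My plan is to combine the standard Optimistic Mirror Descent regret identity for the negative-entropy regularizer with sharp variance/local-norm estimates of the resulting Bregman divergences. I introduce the intermediate iterate $\tilde x_i\^t$ in the Optimistic MD view, defined by $x_i\^t(j) \propto \tilde x_i\^t(j)\exp(-\eta\ell_i\^{t-1}(j))$ and $\tilde x_i\^{t+1}(j) \propto \tilde x_i\^t(j)\exp(-\eta\ell_i\^t(j))$. A direct check shows this reproduces the \Opthedge update (\ref{eq:opt-hedge}), and the standard MD regret identity then yields the anchor inequality
\begin{align*}
\Reg{i}{T} \leq \frac{\log n_i}{\eta} + \sum_{t=1}^T \langle x_i\^t - \tilde x_i\^{t+1}, \ell_i\^t - \ell_i\^{t-1}\rangle - \frac{1}{\eta}\sum_{t=1}^T\big(\kld{\tilde x_i\^{t+1}}{x_i\^t} + \kld{x_i\^t}{\tilde x_i\^t}\big).
\end{align*}

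For the optimistic cross-product term, I would first subtract the constant $\langle x_i\^t, \ell_i\^t - \ell_i\^{t-1}\rangle \bbone$ from $\ell_i\^t - \ell_i\^{t-1}$ (legal because $x_i\^t$ and $\tilde x_i\^{t+1}$ both sum to $1$), then apply Young's inequality in the local norm $\|\cdot\|^*_{x_i\^t}$ to bound it by $\frac{\eta}{2}\Var{x_i\^t}{\ell_i\^t - \ell_i\^{t-1}} + \frac{1}{2\eta}\normst{x_i\^t}{x_i\^t - \tilde x_i\^{t+1}}^2$. Since the pointwise ratio $\tilde x_i\^{t+1}(j)/x_i\^t(j)$ lies in $[e^{-\eta}, e^\eta]$ when $\ell \in [0,1]$, a local-norm Pinsker-type inequality $\normst{x_i\^t}{x_i\^t - \tilde x_i\^{t+1}}^2 \leq 2(1+O(\eta))\kld{\tilde x_i\^{t+1}}{x_i\^t}$ lets me absorb the second summand into $\frac{1}{\eta}\kld{\tilde x_i\^{t+1}}{x_i\^t}$; the residual slack produces the $C\eta^2 \, \Var{x_i\^t}{\ell_i\^t - \ell_i\^{t-1}}$ correction in the statement.

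For the remaining divergence $\kld{x_i\^t}{\tilde x_i\^t}$, the Hedge relation $x_i\^t\propto \tilde x_i\^t \exp(-\eta\ell_i\^{t-1})$ gives the exact formula $\kld{x_i\^t}{\tilde x_i\^t} = -\eta \langle x_i\^t, \ell_i\^{t-1}\rangle - \log \mathbb{E}_{\tilde x_i\^t}[\exp(-\eta \ell_i\^{t-1})]$. A Bennett-style second-order cumulant bound, valid for $\ell_i\^{t-1} \in [0,1]$ and $\eta < 1/C$, together with the expansion $\langle x_i\^t, \ell_i\^{t-1}\rangle = \langle \tilde x_i\^t, \ell_i\^{t-1}\rangle - \eta \, \Var{\tilde x_i\^t}{\ell_i\^{t-1}}(1 + O(\eta))$, yields $\kld{x_i\^t}{\tilde x_i\^t} \geq \frac{(1-C\eta)\eta^2}{2}\Var{\tilde x_i\^t}{\ell_i\^{t-1}}$. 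Finally, the bounds $\tilde x_i\^t(j) \geq e^{-\eta} x_i\^t(j)$ give $\Var{\tilde x_i\^t}{v} \geq e^{-\eta} \Var{x_i\^t}{v}$ for any $v$, and dividing by $\eta$ delivers the claimed negative variance term. The hard part will be the careful bookkeeping of all the $(1+O(\eta))$ multiplicative corrections (Bennett's slack, the mean shift in $\langle x_i\^t, \ell_i\^{t-1}\rangle$, the Pinsker absorption, and the change of base distribution $\tilde x_i\^t \to x_i\^t$) so that they combine into the form $(1-C\eta)$ demanded by the lemma; the hypothesis $\eta < 1/C$ for a sufficiently large constant $C$ is used precisely to render all these higher-order corrections compatible.
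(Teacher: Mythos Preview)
Your proposal is correct and follows essentially the same route as the paper: both start from the Optimistic MD regret identity with the auxiliary iterate (your $\tilde x_i\^{t+1}$ is the paper's $\tilde x_i\^t$), then combine the identity $(\normst{x}{x-\tilde x})^2 = \chisq{\tilde x}{x}$ with a reverse-Pinsker estimate $\kld{\cdot}{\cdot}\ge(\tfrac12-O(\eta))\chisq{\cdot}{\cdot}$ and a second-order expansion $\chisq{\cdot}{\cdot}\approx \eta^2\Var{}{\cdot}$ to extract both the positive $(\tfrac{\eta}{2}+C\eta^2)\Var{}{\cdot}$ term and the negative variance term. Your direct cumulant computation for $\kld{x_i\^t}{\tilde x_i\^t}$ is a mild shortcut over the paper's detour through $\chi^2$, but the underlying Taylor estimates are the same; the one step you leave implicit---that the residual $O(1)\cdot\kld{\tilde x_i\^{t+1}}{x_i\^t}$ is indeed $O(\eta^2)\Var{x_i\^t}{\ell_i\^t-\ell_i\^{t-1}}$---follows immediately from $\kld{}{}\le\chi^2$ together with your own MGF estimate.
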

The detailed proof of Lemma~\ref{lem:omwu-local} can be found in Section~\ref{sec:adv-regbnd-proofs}, but we sketch the main steps here. The starting point is a refinement of~\cite[Lemma 3]{rakhlin_online_2013} (stated as Lemma \ref{lem:omwu-localnorm}), which gives an upper bound for $\Reg{i}{T}$ in terms of local norms corresponding to each of the iterates $x_i\^t$ of \Opthedge. The bound involves the difference between the \Opthedge iterates $x_i\^t$ and iterates $\tilde x_i\^t$ defined by 
  $\tilde x_i\^t = \frac{x_i\^{t}(j) \cdot \exp(-\eta \cdot (\ell_i\^t (j) - \ell_i\^{t-1}(j)))}{\sum_{k \in [n_i]} x_i\^{t}(k) \cdot \exp(-\eta \cdot (\ell_i\^t (k) - \ell_i\^{t-1}(k)))}$: 
  \begin{align}
     \hspace{-0.8cm}
    \Reg{i}{T} \leq & \frac{\log n_i}{\eta} + \sum_{t=1}^T \normst{x_i\^t}{ x_i\^t - \tilde x_i\^t } \sqrt{\Var{x_i\^t}{\ell_i\^t - \ell_i\^{t-1}}} 
                                 - \frac{1}{\eta} \sum_{t=1}^T \kld{\tilde x_i\^t}{x_i\^t} - \frac{1}{\eta} \sum_{t=1}^T \kld{x_i\^t}{ \tilde x_i\^{t-1}} .\label{eq:rakhlin-refinement-mainbody}
  \end{align}  
  We next show (in Lemma \ref{lem:kl-div-lb}) that $\kld{\til x_i\^t}{x_i\^t}$ and $\kld{x_i\^t}{\til x_i\^{t-1}}$ may be lower bounded by $(1/2 - O(\eta)) \cdot \chisq{\til x_i\^t}{x_i\^t}$ and $(1/2 - O(\eta)) \cdot \chisq{x_i\^t}{\til x_i\^{t-1}}$, respectively. Note it is a standard fact that the KL divergence between two distributions is upper bounded by the chi-squared distribution between them; by contrast, 
  Lemma \ref{lem:kl-div-lb} can exploit that $x_i\^t$, $\til x_i\^t$ and $\til x_i\^{t-1}$ are close to each other to show a reverse inequality. Finally, exploiting the exponential weights-style functional relationship between $x_i\^t$ and $\til x_i\^{t-1}$, we show (in Lemma \ref{lem:chi2-variance}) that the $\chi^2$-divergence $\chisq{x_i\^t}{\til x_i\^{t-1}}$ may be lower bounded by $(1 - O(\eta)) \cdot \eta^2 \cdot \Var{x_i\^t}{\ell_i\^{t-1}}$, leading to the term $\frac{(1 - C\eta) \eta}{2} \Var{x_i\^t}{\ell_i\^{t-1}}$ being subtracted in (\ref{eq:adv-var-bound}). The $\chi^2$-divergence $\chisq{\til x_i\^t}{ x_i\^{t}}$, 
  as well as the term $\normst{x_i\^t}{x_i\^t - \til x_i\^t}$ in (\ref{eq:rakhlin-refinement-mainbody}) are bounded in a similar manner to obtain (\ref{eq:adv-var-bound}).

  \subsection{Finite differences}
  \label{sec:fds}

\neurips{  \vspace{-0.2cm}}
  Given Lemma \ref{lem:omwu-local}, in order to establish Theorem \ref{thm:polylog-main}, it suffices to show Lemma \ref{lem:bound-l-dl} below. Indeed, (\ref{eq:var-l-dl-main}) 
  below implies that the right-hand side of (\ref{eq:adv-var-bound}) is bounded above by $\frac{\log n_i}{\eta} + \eta \cdot O(\log^5 T)$, which is bounded above by 
  $O(m \log n_i \log^4 T)$ for the choice $\eta = \Theta \left( \frac{1}{m \cdot \log^4 T} \right)$ of Theorem \ref{thm:polylog-main}.\footnote{Notice that the factor $\frac 12$ in (\ref{eq:var-l-dl-main}) is not important for this argument -- any constant less than 1 would suffice.} 

\begin{lemma}[Abbreviated; detailed version in Section \ref{sec:polylog-main-completing-proof}]
  \label{lem:bound-l-dl}
  Suppose all players play according to \Opthedge with step size $\eta$ satifying $1/T \leq \eta \leq \frac{1}{C m \cdot \log^4 T}$ for a sufficiently large constant $C$. Then for any $i \in [m]$, the losses $\ell_i\^1, \ldots, \ell_i\^T \in \BR^{n_i}$ for player $i$ satisfy:
\begin{align}
  \label{eq:var-l-dl-main}
\sum_{t=1}^T \Var{x_i\^t}{\ell_i\^t - \ell_i\^{t-1}} \leq \frac{1}{2} \cdot \sum_{t=1}^T \Var{x_i\^t}{\ell_i\^{t-1}} + O \left( \log^5 T\right).
  \end{align}
\end{lemma}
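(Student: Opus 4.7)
The plan is to exploit the multilinear structure of the loss vectors $\ell_i\^t$ and the explicit form of the \Opthedge update to show that the finite difference $\fd{}{\ell_i\^t} := \ell_i\^t - \ell_i\^{t-1}$ is quantitatively small whenever $\eta$ is small. \textbf{Step 1 (single-step expansion):} Since $\ell_i\^t(j) = \E_{a_{-i} \sim x_{-i}\^t}[\ML_i(j,a_{-i})]$ is multilinear in the other players' strategies $(x_{i'}\^t)_{i' \neq i}$, I would telescope $\fd{}{\ell_i\^t}$ into a sum of $m-1$ terms, each linear in a single strategy difference $\fd{}{x_{i'}\^t}$. Combined with the explicit \Opthedge update (\ref{eq:opt-hedge}), which for small $\eta$ yields a local-norm bound of the shape $\|\fd{}{x_{i'}\^t}\|^2 \lesssim \eta^2 \cdot \Var{x_{i'}\^{t-1}}{2\ell_{i'}\^{t-1} - \ell_{i'}\^{t-2}} \lesssim \eta^2 \bigl(\Var{}{\ell_{i'}\^{t-1}} + \Var{}{\fd{}{\ell_{i'}\^{t-1}}}\bigr)$ (using $2\ell - \ell^{\mathrm{prev}} = \ell + \fd{}{\ell}$), this yields a first-order recursion roughly of the form
\begin{align*}
\sum_t \Var{x_i\^t}{\fd{}{\ell_i\^t}} \lesssim (\eta m)^2 \sum_{i'} \sum_t \left( \Var{x_{i'}\^t}{\ell_{i'}\^{t-1}} + \Var{x_{i'}\^t}{\fd{}{\ell_{i'}\^t}} \right).
\end{align*}

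\textbf{Step 2 (iteration to higher-order differences):} The second group of summands on the right has the same shape as the left, so I would recurse: apply the same multilinearity-plus-\Opthedge expansion to $\fd{}{\ell_{i'}\^t}$ to re-express it via $\fds{}{\ell}{2}$, then $\fds{}{\ell}{3}$, and so on. At each level the prefactor picks up an additional $(\eta m)^{O(1)} \cdot \polylog(T)$ factor and spawns an additive ``absolute'' correction of the form $\Var{}{\ell_{i'}\^{t'}}$. Truncating the recursion at depth $H = \Theta(\log T)$, the accumulated prefactor $(C \eta m \log^c T)^H$ on the $H$-th order residual becomes smaller than $1/T$ under $\eta \leq 1/(C m \log^4 T)$, and the residual itself is controlled by the crude bound $\|\fds{}{\ell_{i'}\^{t}}{H}\|_\infty \leq 2^H$ (since $\ell_{i'}\^{t} \in [0,1]^{n_{i'}}$), producing the $O(\log^5 T)$ additive term. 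Aggregating the ``absolute'' corrections accumulated across the $H$ levels yields the $\tfrac12 \sum_t \Var{x_i\^t}{\ell_i\^{t-1}}$ main term, with the constant $\tfrac12$ obtained by choosing the constant $C$ in the step-size bound sufficiently large so that the geometric series in $(\eta m \log^c T)^2$ sums to at most $\tfrac12$.

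\textbf{Main obstacle:} The principal difficulty is maintaining the exact quantitative form of the target inequality through the higher-order recursion, in particular reducing what naturally appears as a sum over all players' variances $\sum_{i'} \Var{x_{i'}\^t}{\ell_{i'}\^{t-1}}$ to the per-player quantity $\Var{x_i\^t}{\ell_i\^{t-1}}$ on the right-hand side. I expect this to be handled either by running the entire recursion on a joint potential summed over all $i'$ and then converting back to per-player bounds via a symmetry/worst-player argument, or by tracking a more refined local-norm quantity that absorbs the cross-player contributions into the $O(\log^5 T)$ additive error. A secondary technical challenge is the careful bookkeeping of the local-norm Cauchy--Schwarz steps and the variance/$\chi^2$/KL relationships (analogous to those appearing in the proof of Lemma \ref{lem:omwu-local}) at each of the $\Theta(\log T)$ recursion levels, so that no single level loses more than a polylogarithmic factor and the final $(\eta m)^{H}$ shrinkage is preserved intact.
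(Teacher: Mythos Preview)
Your Step 1 is essentially correct and captures how first-order differences of losses relate to strategy movements. However, the proposal has a genuine gap that you yourself flag in the ``Main obstacle'' paragraph, and neither of your suggested fixes works.

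The recursion in Step 1 inherently couples all players: it bounds $\Var{x_i\^t}{\fds{1}{\ell_i}{t}}$ by a sum of $\Var{x_{i'}\^{t}}{\ell_{i'}\^{t-1}}$ over $i' \neq i$. Summing over $i$ closes to a self-contained inequality for the \emph{joint} potential $\sum_{i,t}\Var{x_i\^t}{\cdot}$ --- this is essentially the RVU argument of \cite{syrgkanis_fast_2015} and yields only the average-regret bound. For the \emph{per-player} inequality required by Lemma~\ref{lem:omwu-local}, you need the right-hand side controlled by $V_i := \sum_t \Var{x_i\^t}{\ell_i\^{t-1}}$ for the \emph{same} player $i$, but your recursion only gives $W_i \lesssim (m\eta)^2 \sum_{i'} V_{i'}$. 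A ``symmetry/worst-player'' argument fails because the $V_{i'}$ can differ by arbitrary factors across players, and ``absorbing cross-player terms into the additive error'' fails because those terms are of order $V_{i'}$, which can be $\Theta(T)$. Your Step 2 is also confused: iterating the first-order bound of Step 1 on the term $\Var{}{\fd{}{\ell_{i'}}}$ does \emph{not} introduce higher-order finite differences $\fds{2}{\ell}{},\fds{3}{\ell}{},\ldots$ --- it merely unrolls a geometric series while shifting time backward.

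The paper's solution is structurally different and has two separate inductions. First, an \emph{upwards} induction (Lemma~\ref{lem:dh-bound}) bounds the sup-norm $\|\fds{h}{\ell_i}{t}\|_\infty \leq (O(m\eta))^h h^{O(h)}$ for every $h \leq H = \log T$. Cross-player structure enters here, but since the statement is a uniform sup-norm bound for all $i$ simultaneously, it is per-player from the outset; the inductive step requires a nontrivial ``chain rule for finite differences'' of the softmax (Lemma~\ref{lem:fd-analytic}), not an iteration of Step 1. Second, a purely analytic \emph{downwards} induction (Lemma~\ref{lem:d210}) shows, for each fixed player $i$ separately, that control of $\sum_t \Var{x_i\^t}{\fds{h+1}{\ell_i}{t}}$ relative to $\sum_t \Var{x_i\^t}{\fds{h}{\ell_i}{t}}$ propagates from $h=H$ down to $h=0$. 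This step uses the discrete Fourier transform and Cauchy--Schwarz in frequency space (Lemma~\ref{lem:freq-cauchy}); it involves only player $i$'s own sequence $(\ell_i\^t, x_i\^t)$ and is precisely what decouples the players. This second induction is entirely absent from your proposal and is the key idea you are missing.
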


The  definition below allows us to streamline our notation when proving Lemma \ref{lem:bound-l-dl}.
\begin{defn}[Finite differences]
  \label{def:fd}
  Suppose $L = (L\^1, \ldots, L\^T)$ is a sequence of vectors $L\^t\in \BR^n$. For integers $h \geq 0$, the \emph{order-$h$ finite difference sequence} for the sequence $L$, denoted by $\fd{h}{L}$, is the sequence $\fd{h}{L} := (\fds{h}{L}{1}, \ldots, \fds{h}{L}{T-h})$ defined recursively as: $\fds{0}{L}{t} := L\^t$ for all $1 \leq t \leq T$, and
  \begin{equation}
    \label{eq:fd-defn}
\fds{h}{L}{t} := \fds{h-1}{L}{t+1} - \fds{h-1}{L}{t}
\end{equation}
for all $h \geq 1$, $1 \leq t \leq T-h$.\footnote{We remark that while Definition \ref{def:fd} is stated for a 1-indexed sequence $L\^1, L\^2, \ldots$, we will also occasionally consider 0-indexed sequences $L\^0, L\^1, \ldots$, in which case the same recursive definition (\ref{eq:fd-defn}) holds for the finite differences $\fds{h}{L}{t}$, $t \geq 0$.} 
\end{defn}

\begin{remark}
  \label{rem:fds-alt}
Notice that another way of writing (\ref{eq:fd-defn}) is:
$ \fd{h}{L} = \fd{1}{\fd{h-1}{L}}.$
We also remark for later use that
$ \fds{h}{L}{t} = \sum_{s=0}^h {h \choose s} (-1)^{h-s} L\^{t+s}.$
\end{remark}
Let $H = \log T$, where $T$ denotes the fixed time horizon from Theorem \ref{thm:polylog-main} (and thus Lemma \ref{lem:bound-l-dl}). In the proof of Lemma \ref{lem:bound-l-dl}, we will bound the finite differences of order $h \leq H$ for certain sequences. 
The bound (\ref{eq:var-l-dl-main}) of Lemma \ref{lem:bound-l-dl} may be rephased as upper bounding $\sum_{t=1}^T \Var{x_i\^t}{\fds{1}{\ell_i}{t-1}}$, by $\frac 12 \sum_{t=1}^T \Var{x_i}{\ell_i\^{t-1}}$; to prove this, we proceed in two steps:
\begin{enumerate}[leftmargin=12pt]
\item \label{it:upwards-ind} (\emph{Upwards induction step}) First, in Lemma \ref{lem:dh-bound} below, we find an upper bound on $\left\| \fds{h}{\ell_i}{t} \right\|_\infty$ for all $t \in [T]$, $h \geq 0$, which decays exponentially in $h$ for $h \leq H$. This is done via \emph{upwards induction} on $h$, i.e., first proving the base case $h = 0$ using boundedness of the losses $\ell_i\^t$ and then $h = 1, 2, \ldots$ inductively. The main technical tool we develop for the inductive step is a weak form of the chain rule for finite differences, Lemma \ref{lem:fd-analytic}. The inductive step uses the fact that all players are following \Opthedge to relate the $h$th order finite differences of player $i$'s loss sequence $\ell_i\^t$ to the $h$th order finite differences of the strategy sequences $x_{i'}\^t$ for players $i' \neq i$; then we use the exponential-weights style updates of \Opthedge and Lemma \ref{lem:fd-analytic} to relate the $h$th order finite differences of the strategies $x_{i'}\^t$ to the $(h-1)$th order finite differences of the losses $\ell_{i'}\^t$.
\item \label{it:downwards-ind} (\emph{Downwards induction step}) We next show that for all $0 \leq h \leq  H$, $\sum_{t=1}^T \Var{x_i\^t}{\fds{h+1}{\ell_i}{t-1}}$ is bounded above by $c_h \cdot \sum_{t=1}^T\Var{x_i\^t}{\fds{h}{\ell_i}{t-1}} + \mu_h$, for some $c_h < 1/2$ and $\mu_h < O(\log^5 T)$.
  This shown via \emph{downwards induction} on $h$, namely first establishing the base case $h = H$ by using the result of item \ref{it:upwards-ind} for $h = H$ and then treating the cases $h = H-1, H-2, \ldots, 0$. The inductive step makes use of the discrete Fourier transform (DFT) to relate the finite differences of different orders (see Lemmas \ref{lem:d210} and \ref{lem:freq-cauchy}). In particular, Parseval's equality together with a standard relationship between the DFT of the finite differences of a sequence to the DFT of that sequence allow us to first prove the inductive step in the frequency domain and then transport it back to the original (time) domain.
\end{enumerate}
In the following subsections we explain in further detail how the two steps above are completed.

\subsection{Upwards induction proof overview}
\label{sec:upwards-ind}

\neurips{\vspace{-0.2cm}}
Addressing item \ref{it:upwards-ind} in the previous subsection, the  lemma below gives a bound on the supremum norm of the $h$-th order finite differences of each player's loss vector, when all players play according to \Opthedge and experience losses according to their loss functions $\ML_1, \ldots, \ML_m : \MA \ra [0,1]$. 

\begin{lemma}[Abbreviated]
  \label{lem:dh-bound}
  Fix a step size $\eta > 0$ satisfying $\eta \leq o \left( \frac{1}{m\log T} \right)$.
  If all players follow \Opthedge updates with step size $\eta$, then for any player $i \in [m]$, integer $h$ satisfying $0 \leq h \leq H$, and time step $t \in [T-h]$, it holds that
$
\| \fds{h}{\ell_i}{t} \|_\infty \leq O( m \eta) ^h \cdot h^{O(h)}.
$ 
\end{lemma}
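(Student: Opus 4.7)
The plan is to prove Lemma \ref{lem:dh-bound} by induction on $h$. The base case $h=0$ is immediate since $\ell_i\^t \in [0,1]^{n_i}$. For the inductive step, fix $h \geq 1$ and assume the bound holds at all orders $h' < h$ for every player. The argument has two stages: first, translate the inductive bounds on finite differences of losses $\ell_{i'}$ into bounds on finite differences of each opposing strategy $x_{i'}$ of orders up to $h$; second, reassemble these using the multilinear representation $\ell_i\^t(j) = \sum_{a_{-i}} \ML_i(j, a_{-i}) \prod_{i' \neq i} x_{i'}\^t(a_{i'})$ to bound $\fds{h}{\ell_i}{t}$.

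For the first stage, I would use the OFTRL form of \Opthedge: $x_{i'}\^t = \mathrm{softmax}(\xi_{i'}\^{t-1})$ where $\xi_{i'}\^{t-1} := -\eta \sum_{s=1}^{t-1} u_{i'}\^s$ and $u_{i'}\^s := 2\ell_{i'}\^s - \ell_{i'}\^{s-1}$. Telescoping yields $\fds{h}{\xi_{i'}}{t} = -\eta \cdot \fds{h-1}{u_{i'}}{t}$, so the inductive hypothesis gives $\|\fds{h}{\xi_{i'}}{t}\|_\infty \leq 3\eta \cdot O(m\eta)^{h-1} (h-1)^{O(h-1)}$, which is $o(1)$ under the step-size assumption $\eta \leq o(1/(m\log T))$ for $h \leq H$. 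Applying the weak chain rule for finite differences (Lemma \ref{lem:fd-analytic}) to the analytic softmax map then transports these estimates, together with the analogous ones at lower orders, into $\|\fds{h}{x_{i'}}{t}\|_\infty \leq \eta \cdot O(m\eta)^{h-1} \cdot h^{O(h)}$. Crucially, exactly one factor of $\eta$ is gained per finite-difference operation on $x_{i'}$, matching the basic observation that consecutive iterates differ by $O(\eta)$ rather than $O(m\eta)$.

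For the second stage, I would apply the multilinear Leibniz rule for finite differences to the product of $m-1$ factors $\prod_{i' \neq i} x_{i'}(a_{i'})$, obtaining one term per composition $\sum_{i' \neq i} h_{i'} = h$, each of the form (multinomial coefficient) $\cdot \prod_{i' \neq i} \fds{h_{i'}}{x_{i'}(a_{i'})}{\cdot}$. Grouping compositions by the number $s$ of nonzero parts, there are at most $\binom{m-1}{s} \binom{h-1}{s-1}$ supported compositions and the multinomial coefficient is at most $s^h$. Each of the $s$ nonzero strategy-difference factors contributes $\eta \cdot O(m\eta)^{h_{i'} - 1}$ by the first stage, yielding an aggregate $\eta^s \cdot O(m\eta)^{h-s}$, while the $h_{i'}=0$ factors sum to $1$ over $a_{-i}$ against the bounded losses $\ML_i$. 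Multiplying by $\binom{m-1}{s} \leq m^s$ absorbs into $\eta^h \cdot m^h$, and summing over $s \leq h$ with the polynomial-in-$h$ factors collapses to $O(m\eta)^h \cdot h^{O(h)}$, completing the inductive step.

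The principal obstacle is Lemma \ref{lem:fd-analytic}, the chain rule itself. While the first-order estimate $x_{i'}\^{t+1}-x_{i'}\^t = O(\eta)$ is immediate from a Taylor expansion of the softmax, propagating finite-difference bounds through the nonlinear softmax at order $h$ requires Fa\`a-di-Bruno-style bookkeeping over set partitions of $[h]$ and derivatives of softmax; the argument must be tuned so that the combinatorial explosion remains at $h^{O(h)}$ rather than escalating to $(h!)^{\Theta(h)}$ or worse. A secondary care point in the second stage is that each nonzero-order strategy difference must carry one $\eta$ rather than $m\eta$, so that the Leibniz expansion over $m-1$ players accumulates to the correct aggregate $(m\eta)^h$ factor; the score-vector formulation $\xi_{i'} = -\eta \sum_s u_{i'}\^s$ makes this structural $\eta$-factor explicit.
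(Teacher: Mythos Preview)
Your overall strategy---upward induction on $h$, using the chain rule Lemma~\ref{lem:fd-analytic} to pass from loss differences to strategy differences, then reassembling via the multilinear representation of $\ell_i\^t$---matches the paper's. There is one organizational difference worth noting: rather than bounding each $\|\fds{h}{x_{i'}}{t}\|_1$ separately and then Leibniz-combining over the $m-1$ opposing players, the full proof (Section~\ref{sec:dh-bound-proof}) applies Lemma~\ref{lem:fd-analytic} \emph{directly} to the product $\prod_{i'\neq i}\phi_{t_0,a_{i'}}$ of softmax-type functions, using Lemma~\ref{lem:prodmax-ak-bound} to show this product is $(1,e^3 m)$-bounded. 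That one-stage route folds your second-stage combinatorics over players into the Taylor-coefficient bound, which makes it easy to close the induction with fixed constants. Your two-stage route is also workable---it is in fact the argument the paper sketches for $m=2$ in Section~\ref{sec:upwards-ind}---and your accounting (each nonzero part contributing $\eta\cdot O(m\eta)^{h_{i'}-1}$, then $\binom{m-1}{s}\le m^s$ supplying the missing $m^s$) is correct in outline.

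There is, however, a genuine gap in your first stage. You write $x_{i'}\^t=\mathrm{softmax}(\xi_{i'}\^{t-1})$ with the global score $\xi_{i'}\^{t-1}=-\eta\sum_{s\le t-1}u_{i'}\^s$ and then invoke Lemma~\ref{lem:fd-analytic}. But that lemma requires the input sequence to lie within the radius of convergence of the Taylor series of $\phi$ (the hypothesis $\|L\^t\|_\infty\le\nu$; cf.\ Lemma~\ref{lem:softmax-roc}), and $\|\xi_{i'}\^{t-1}\|_\infty$ is of order $\eta t$, hence unbounded as $t\to T$. Your telescoping bound on $\fds{h'}{\xi_{i'}}{t}$ for $h'\ge 1$ does not help: the $h'=0$ precondition fails. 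The paper's fix---and it is not a mere formality---is to \emph{recenter}: for each reference time $t_0$ one writes $x_{i'}\^{t_0+t}(j)=x_{i'}\^{t_0}(j)\cdot\phi_{t_0,j}(\eta\,\bar\ell_{i',t_0}\^t)$, where $\phi_{t_0,j}$ is a softmax-type function with reference weights $x_{i'}\^{t_0}(\cdot)$ and $\bar\ell_{i',t_0}\^t$ is a \emph{local} partial sum of losses over $[t_0-1,t_0+t]$. Then $\|\eta\,\bar\ell_{i',t_0}\^t\|_\infty\le\eta(t+2)\le\eta(H+3)\le 1$ under the step-size assumption, and Lemma~\ref{lem:fd-analytic} applies on each window of length $\le H+1$. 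Without this recentering, your invocation of the chain rule is not justified.
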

A detailed version of Lemma \ref{lem:dh-bound}, together with its full proof, may be found in Section \ref{sec:dh-bound-proof}. We next give a proof overview of Lemma \ref{lem:dh-bound} for the case of 2 players, i.e., $m = 2$; we show in Section \ref{sec:dh-bound-proof} how to generalize this computation to general $m$. Below we introduce the main technical tool in the proof, a ``boundedness chain rule,'' and then outline how it is used to prove Lemma \ref{lem:dh-bound}.

\neurips{\vspace{-0.3cm}}
\paragraph{Main technical tool for Lemma \ref{lem:dh-bound}: boundedness chain rule.}
We say that a function $\phi : \BR^n \ra \BR$ is a \emph{softmax-type} function  if there are real numbers $\xi_1, \ldots, \xi_n$ and some $j \in [n]$ so that for all $(z_1, \ldots, z_n) \in \BR^n$,
$ 
\phi((z_1, \ldots, z_n)) = \frac{\exp(z_j)}{\sum_{k =1}^n \xi_k \cdot \exp(z_k)}.
$ 
Lemma \ref{lem:fd-analytic} below may be interpreted as a ``boundedness chain rule'' for finite differences. To explain the context for this lemma, recall that given an infinitely differentiable vector-valued function $L : \BR \ra \BR^n$ and an infinitely differentiable function $\phi : \BR^n \ra \BR$, the higher order derivatives of the function $\phi(L(t))$ may be computed in terms of those of $L$ and $\phi$ using the chain rule. Lemma \ref{lem:fd-analytic} considers an analogous setting where the input variable $t$ to $L$ is discrete-valued, taking values in $[T]$ (and so we identify the function $L$ with the sequence $L\^1, \ldots, L\^T$). In this case, the \emph{higher order finite differences} of the sequence $L\^1, \ldots, L\^T$ (Definition \ref{def:fd}) take the place of the higher order derivatives of $L$ with respect to $t$. Though there is no generic chain rule for finite differences, Lemma \ref{lem:fd-analytic} states that, at least when $\phi$ is a softmax-type function, 
we may \emph{bound} the higher order finite differences of the sequence $\phi(L\^1), \ldots, \phi(L\^T)$. In the lemma's statement we let $\phi \circ L$ denote the sequence $\phi(L\^1), \ldots, \phi(L\^T)$. 
\begin{lemma}[``Boundedness chain rule'' for finite differences; abbreviated]
  \label{lem:fd-analytic}
  Suppose that $h,n \in \BN$,  $\phi : \BR^n \ra \BR$ is a softmax-type function, and $L = (L\^1, \ldots, L\^T)$ 
  is a sequence of vectors in $\BR^n$ satisfying $\| L\^t \|_\infty \leq 1$ for $t \in [T]$. 
 Suppose for some $\alpha \in (0,1)$, for each $0 \leq h' \leq h$ and $t \in [T-h']$, it holds that $\| \fd{h'}{L}\^t \|_\infty \leq O( {\alpha^{h'}} ) \cdot (h')^{O(h')}$. 
 Then for all $t \in [T-h]$, 
  \begin{align*}
| \fds{h}{(\phi \circ L)}{t}|\leq O( \alpha^h) \cdot h^{O(h)}. 
  \end{align*}
\end{lemma}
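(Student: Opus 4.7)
I plan to combine (i) analytic bounds on the higher-order partial derivatives of $\phi$ with (ii) the explicit alternating-sum identity $\fds{h}{(\phi \circ L)}{t} = \sum_{s=0}^h \binom{h}{s}(-1)^{h-s} \phi(L\^{t+s})$ from the Remark after Definition \ref{def:fd}, Taylor-expanded so as to exploit the cancellation inherent in the alternating sum. First, I would establish that a softmax-type function $\phi(z) = \exp(z_j)/\sum_k \xi_k \exp(z_k)$, with $\xi_k \ge 0$ (not all zero) and $\|z\|_\infty \le R$, satisfies $|\partial^\alpha \phi(z)| \le A \cdot B^{|\alpha|} \cdot \alpha!$ for constants $A, B$ depending only on $R$. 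The cleanest derivation views $\phi$ as holomorphic on a complex polydisk of fixed radius around the real axis (the denominator is bounded below in modulus because $\mathrm{Re}(e^{z_k}) > 0$ when $|\mathrm{Im}(z_k)| < \pi/2$) and invokes Cauchy's integral formula. To use this bound later, $\|L\^{t+s}\|_\infty$ must remain at most $R$ for all $s \le h$, which follows from the discrete Taylor identity $L\^{t+s} = \sum_{k=0}^s \binom{s}{k}\fds{k}{L}{t}$, the hypothesis $\|L\^t\|_\infty \le 1$, and the inductive bound on $\|\fds{k}{L}{t}\|_\infty$, once $\alpha h = O(1)$ (the relevant regime, since $h \le H = \log T$ and $\alpha \approx m\eta$ is tiny).

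Next, Taylor-expanding $\phi(L\^{t+s})$ around $L\^t$, substituting $L\^{t+s} - L\^t = \sum_{k\ge 1}\binom{s}{k}\fds{k}{L}{t}$ into each tensor power $(L\^{t+s}-L\^t)^{\otimes \ell}$, and swapping the order of summation rewrites $\fds{h}{(\phi\circ L)}{t}$ as a double sum over $\ell \ge 0$ and compositions $(k_1,\dots,k_\ell)$ of positive integers, with each term weighted by
\begin{align*}
c_h(k_1,\dots,k_\ell) \;:=\; \sum_{s=0}^h \binom{h}{s}(-1)^{h-s} \prod_{i=1}^\ell \binom{s}{k_i}.
\end{align*}
Crucially, $c_h(\cdot)$ is the $h$-th finite difference at $0$ of a polynomial in $s$ of degree $\sum_i k_i$, so it vanishes whenever $\sum_i k_i < h$; only compositions satisfying $\sum_i k_i \ge h$ survive.

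Assembling the pieces, I combine the derivative bound, the inductive hypothesis $\|\fds{k_i}{L}{t}\|_\infty \le O(\alpha^{k_i}) \cdot k_i^{O(k_i)}$, and an elementary polynomial-in-$h$ bound on $|c_h(\cdot)|$ obtained by expanding each $\binom{s}{k_i}$ as a polynomial in $s$. The dominant contribution comes from compositions with $\sum_i k_i = h$: each such term has magnitude $O(\alpha^h) \cdot h^{O(h)}$, the number of positive-part compositions of $h$ is at most $2^{h-1}$, and the product $\prod_i k_i^{O(k_i)}$ re-sums into a single $h^{O(h)}$ factor via $\prod_i k_i^{k_i} \le h^h$. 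Terms with $\sum_i k_i > h$ pick up extra factors of $\alpha$ and form a convergent geometric tail once $\alpha h$ is small, so the total bound is $|\fds{h}{(\phi\circ L)}{t}| \le O(\alpha^h) \cdot h^{O(h)}$, as required.

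The principal obstacle will be justifying the Taylor expansion rigorously. Because $\phi$ is not a polynomial, I must either truncate at large $\ell$ with an explicit Lagrange remainder (whose $\ell!$ growth from the derivative bound is balanced against the prefactor $1/\ell!$ and the smallness $\|L\^{t+s}-L\^t\|_\infty = O(h\alpha)$) or appeal to absolute convergence of the Taylor series on the neighborhood provided by the polydisk of Step 1. The combinatorial accounting for surviving compositions with $\sum_i k_i > h$ is also delicate: the coefficients $c_h(\cdot)$ must be shown to grow only polynomially in $h$ (not super-exponentially), so that the geometric decay in the small parameter $\alpha h$ genuinely dominates.
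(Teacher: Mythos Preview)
Your approach differs substantially from the paper's. You Taylor-expand $\phi$ at the moving base point $L\^{t}$ and exploit the Newton forward-difference identity $L\^{t+s}-L\^{t}=\sum_{k\ge 1}\binom{s}{k}\fds{k}{L}{t}$, observing that the alternating weight $c_h(k_1,\dots,k_\ell)$ is the $h$-th finite difference of a degree-$\sum k_i$ polynomial and hence vanishes when $\sum k_i<h$. The paper instead expands $\phi$ once at the \emph{origin} (so the base point is fixed), pushes $\fd{h}{}$ linearly through the absolutely convergent power series $\sum_\gamma a_\gamma (L\^t)^\gamma$, and handles each $\fds{h}{L^\gamma}{t}$ by an iterated discrete product rule (Lemma~\ref{lem:expand-pow-seq}), which writes it as a sum over maps $\pi:[h]\to[|\gamma|]$ of products $\prod_r\fds{h'_{\pi,r}}{L(j'_{\pi,r})}{t+t'_{\pi,r}}$ with $\sum_r h'_{\pi,r}=h$. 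The cancellation you extract from ``$\sum k_i\ge h$'' is achieved in the paper by the constraint $\sum_r h'_{\pi,r}=h$ built into the product-rule expansion. The paper then bounds $\sum_{|\gamma|=k}|a_\gamma|\le QR^k$ directly (Lemma~\ref{lem:softmax-ak-bound}, via a bijection with ``factorial trees'') and controls the combinatorial sum over $\pi$ with Lemma~\ref{lem:hk-function-bound}.

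Two specific corrections to your plan. First, the worry that $\|L\^{t+s}\|_\infty\le R$ needs a separate argument is unnecessary: the hypothesis $\|L\^t\|_\infty\le 1$ already holds for \emph{every} $t\in[T]$, so each $L\^{t+s}$ is automatically in the domain; you do not need the Newton identity or $\alpha h=O(1)$ for this. Second, your claim that $c_h(\cdot)$ grows ``only polynomially in $h$'' is false: already for $\sum k_i=h$ one has $c_h=\binom{h}{k_1,\dots,k_\ell}$, and summing these over all compositions gives the ordered Bell number, which is $h^{\Theta(h)}$. This still fits the $h^{O(h)}$ budget of the abbreviated statement, but it means your tail argument cannot pit polynomial growth of $c_h$ against geometric decay in $\alpha h$. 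More importantly, the \emph{detailed} version of the lemma (Section~\ref{sec:bcr-proof-2}) outputs $h^{B_0h+1}$ from a hypothesis of $(h')^{B_0h'}$---the exponent increases by an additive $1$, not by an additive multiple of $h$---and this tightness is precisely what lets the upwards induction in Lemma~\ref{lem:dh-bound} close. Your crude accounting (number of compositions $\times$ size of $c_h$ $\times\prod k_i^{B_0k_i}$) appears to produce $h^{(B_0+C)h}$ for some $C>0$, which would break that induction. The paper's product-rule decomposition, together with the careful sum over $\pi$ in Lemma~\ref{lem:hk-function-bound}, is engineered specifically to avoid this blow-up.
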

A detailed version of Lemma \ref{lem:fd-analytic} may be found in Section \ref{sec:bcr-proof-2}. While Lemma \ref{lem:fd-analytic} requires $\phi$ to be a softmax-type function for simplicity (and this is the only type of function $\phi$ we will need to consider for the case $m = 2$) 
we remark that the detailed version of Lemma \ref{lem:fd-analytic} allows $\phi$ to be from a more general family of analytic functions whose higher order derivatives are appropriately bounded. The proof of Lemma \ref{lem:dh-bound} for all $m \geq 2$ requires that more general form of Lemma \ref{lem:fd-analytic}.

The proof of Lemma \ref{lem:fd-analytic} proceeds by considering the Taylor expansion $P_\phi(\cdot)$ of the function $\phi$ at the origin, which we write as follows: for $z = (z_1, \ldots, z_n) \in \BR^n$, $P_\phi(z) := \sum_{k \geq 0, \gamma \in \BZ_{\geq 0}^n : \ |\gamma| = k} a_\gamma z^\gamma$, where $a_\gamma \in \BR$, $|\gamma|$ denotes the quantity $\gamma_1 + \cdots + \gamma_n$ and $z^\gamma$ denotes $z_1^{\gamma_1} \cdots z_n^{\gamma_n}$. The fact that $\phi$ is a softmax-type function ensures that the radius of convergence of its Taylor series is at least 1, i.e., $\phi(z) = P_\phi(z)$ for any $z$ satisfying $\| z \|_\infty \leq 1$. By the assumption that $\| L\^t \|_\infty \leq 1$ for each $t$, we may therefore decompose $\fds{h}{(\phi \circ L)}{t}$ as:
\begin{align}
  \fds{h}{(\phi \circ L)}{t} = \sum_{k \geq 0, \gamma \in \BZ_{\geq 0}^n: \ |\gamma| = k} a_\gamma \cdot \fds{h}{L^\gamma}{t},\label{eq:phi-informal-decompose}
\end{align}
where $L^\gamma$ denotes the sequence of scalars $(L^\gamma)\^t := (L\^t)^\gamma$ for all $t$. 
The fact that $\phi$ is a softmax-type function allows us to establish strong bounds on $|a_\gamma|$ for each $\gamma$ in Lemma \ref{lem:softmax-ak-bound}. 
The proof of Lemma \ref{lem:softmax-ak-bound} bounds the $|a_\gamma|$ by exploiting the simple form of the derivative of a softmax-type function to decompose each $a_\gamma$ into a sum of $|\gamma|!$ terms. Then we establish a bijection between the terms of this decomposition and graph structures we refer to as \emph{factorial trees}; that bijection together with the use of an appropriate generating function allow us to complete the proof of Lemma \ref{lem:softmax-ak-bound}.

Thus, to prove Lemma \ref{lem:fd-analytic}, it suffices to bound $\left| \fds{h}{L^\gamma}{t} \right|$ for all $\gamma$. We do so by using Lemma \ref{lem:expand-pow-seq}. 
\begin{lemma}[Abbreviated; detailed vesion in Section \ref{sec:bcr-proof}]
  \label{lem:expand-pow-seq}
  Fix any $h \geq 0$, a multi-index $\gamma \in \BZ_{\geq 0}^n$ and set $k = |\gamma|$. For each of the $k^h$ functions $\pi : [h] \ra [k]$, and for each $r \in [k]$, there are integers $h'_{\pi,r} \in \{0, 1, \ldots, h\}$, $t'_{\pi,r} \geq 0$, and $j'_{\pi,r} \in [n]$, so that the following holds. For any sequence $L\^1, \ldots, L\^T \in \BR^n$ of vectors, it holds that, for each $t \in [T-h]$, 
  \begin{align}
\fds{h}{L^\gamma}{t} = \sum_{\pi : [h] \ra [k]} \prod_{r=1}^k \fds{h'_{\pi,r}}{(L(j'_{\pi,r}))}{t+t'_{\pi,r}}\label{eq:prod-expand-informal}.
  \end{align}
\end{lemma}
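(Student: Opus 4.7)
The plan is to prove the identity by induction on $h$ via a discrete analogue of Leibniz's product rule. First, I would fix any enumeration $j_1, \ldots, j_k \in [n]$ of the coordinates of $\gamma$ with multiplicity (so each $j \in [n]$ appears exactly $\gamma_j$ times among the $j_r$'s) and declare $j'_{\pi, r} := j_r$, which depends only on $r$ and not on $\pi$. Writing $M_r\^{t} := L\^{t}(j_r)$, we have $(L\^{t})^\gamma = M_1\^{t} \cdots M_k\^{t}$ pointwise, so it suffices to obtain a ``$k$-fold product'' expansion of $\fds{h}{(M_1 \cdots M_k)}{t}$ for arbitrary scalar sequences $M_1, \ldots, M_k$.

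The base case $h = 0$ is immediate: the unique empty function $\pi : \emptyset \to [k]$ yields $\fds{0}{M_1 \cdots M_k}{t} = \prod_{r=1}^k \fds{0}{M_r}{t}$ with $h'_{\pi, r} = t'_{\pi, r} = 0$. For the inductive step, I would use the recursion $\fds{h+1}{L^\gamma}{t} = \fds{h}{L^\gamma}{t+1} - \fds{h}{L^\gamma}{t}$ and invoke the inductive hypothesis on each term: each $\pi : [h] \to [k]$ contributes $\prod_r A_r - \prod_r B_r$ with $A_r := \fds{h'_{\pi, r}}{M_r}{t + 1 + t'_{\pi, r}}$ and $B_r := \fds{h'_{\pi, r}}{M_r}{t + t'_{\pi, r}}$. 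Because $A_r - B_r = \fds{h'_{\pi, r} + 1}{M_r}{t + t'_{\pi, r}}$ by the very definition of finite differences, the standard telescoping identity
\begin{align*}
\prod_{r=1}^k A_r - \prod_{r=1}^k B_r = \sum_{r=1}^k \left( \prod_{r' < r} A_{r'} \right) (A_r - B_r) \left( \prod_{r' > r} B_{r'} \right)
\end{align*}
recasts each $\pi$-term as a sum of $k$ products of shifted higher-order differences of the $M_r$'s. Each such product is naturally indexed by $\tilde\pi : [h+1] \to [k]$ with $\tilde\pi|_{[h]} = \pi$ and $\tilde\pi(h+1) = r$, and the assignment $(\pi, r) \mapsto \tilde\pi$ is a bijection onto the $k^{h+1}$ target functions. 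The new orders ($h'_{\tilde\pi, r}$ increments by one, others unchanged) and shifts ($t'_{\tilde\pi, r'}$ increments by one for $r' < r$, unchanged for $r' \geq r$) give precisely the integers required, and none of them depend on $L$ or $t$.

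The argument is essentially bookkeeping, so I do not foresee a substantive obstacle. The main points to verify are that the extension $(\pi, r) \mapsto \tilde\pi$ is indeed a bijection (which is clear from $\tilde\pi(h+1)$ recovering $r$ and $\tilde\pi|_{[h]}$ recovering $\pi$) and that the new integers stay in the ranges required by the lemma ($h'_{\tilde\pi, r} \in \{0, \ldots, h+1\}$ and $t'_{\tilde\pi, r} \geq 0$, both obvious from the construction). If one wants to additionally ensure that all the finite differences on the right-hand side are well-defined whenever $t \leq T - (h+1)$, a quick induction also gives $t'_{\tilde\pi, r} + h'_{\tilde\pi, r} \leq h+1$, so the lemma is genuinely a structural consequence of the discrete product rule and the detailed version in Section \ref{sec:bcr-proof} should simply instantiate the recursion above.
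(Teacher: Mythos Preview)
Your proposal is correct and follows essentially the same route as the paper's proof: induction on $h$, with the inductive step driven by the discrete product rule (what you write as the telescoping identity $\prod A_r - \prod B_r = \sum_r (\prod_{r'<r} A_{r'})(A_r-B_r)(\prod_{r'>r} B_{r'})$ is exactly the paper's ``multivariate product rule,'' Lemma~\ref{lem:prod-multi}, up to which side of the split carries the shift). The paper phrases the step as $\fd{h}{L^\gamma}=\fd{1}{\fd{h-1}{L^\gamma}}$ and applies the product rule to $\fd{1}$ of each product in the inductive expansion, whereas you apply the inductive hypothesis to $\fds{h}{L^\gamma}{t+1}$ and $\fds{h}{L^\gamma}{t}$ separately and then telescope---these are the same computation, and the resulting recursive definitions of $h'_{\tilde\pi,r}$, $t'_{\tilde\pi,r}$, $j'_{\tilde\pi,r}$ match the paper's (with your shift convention on $r'<r$ versus the paper's on $r'>r$, which is immaterial).
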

Lemma \ref{lem:expand-pow-seq} expresses the $h$th order finite differences of the sequence $L^\gamma$ as a sum of $k^h$ terms, each of which is a product of $k$ finite order differences of a sequence $L\^t(j'_{\pi,r})$ (i.e., the $j'_{\pi,r}$th coordinate of the vectors $L\^t$). Crucially, when using Lemma \ref{lem:expand-pow-seq} to  prove Lemma \ref{lem:fd-analytic}, the assumption of Lemma \ref{lem:fd-analytic} gives that for each $j' \in [n]$, each $h' \in [h]$, and each $t' \in [T-h']$, we have the bound $\left| \fds{h'}{L(j')}{t'} \right| \leq O ( {\alpha^{h'}}) \cdot (h')^{O(h')}$. 
These assumed bounds may be used to bound the right-hand side of (\ref{eq:prod-expand-informal}), which together with Lemma \ref{lem:expand-pow-seq} and (\ref{eq:phi-informal-decompose}) lets us complete the proof of Lemma \ref{lem:fd-analytic}. 

\neurips{\vspace{-0.3cm}}
\paragraph{Proving Lemma \ref{lem:dh-bound} using the boundedness chain rule.}
Next we discuss how Lemma \ref{lem:fd-analytic} is used to prove Lemma \ref{lem:dh-bound}, namely to bound $\| \fds{h}{\ell_i}{t} \|_\infty$ for each $t \in [T-h]$, $i \in [m]$, and $0 \leq h \leq H$. Lemma \ref{lem:dh-bound} is proved using induction, with the base case $h = 0$ being a straightforward consequence of the fact that $\| \fds{0}{\ell_i}{t} \|_\infty = \| \ell_i\^t \|_\infty \leq 1$ for all $i \in [m], t \in [T]$. For the rest of this section we focus on the inductive case, i.e., we pick some $h \in [H]$ and assume Lemma \ref{lem:dh-bound} holds for all $h' < h$. 

The first step is to reduce the claim 
of Lemma \ref{lem:dh-bound} to the claim that the upper bound $\|\fds{h}{x_i}{t} \|_1 \leq O \left( m \eta \right)^h \cdot h^{O(h)}$ holds for each $t \in [T-h], i \in [m]$. Recalling that we are only sketching here the case  $m=2$ for  simplicity, this reduction proceeds as follows: for $i \in \{1,2\}$, define the matrix $A_i \in \BR^{n_1 \times n_2}$ by $(A_i)_{a_1a_2}= \ML_i(a_1, a_2)$, for $a_1 \in [n_1], a_2 \in [n_2]$. We have assumed that all players are using \Opthedge and thus $\ell_i\^t = \E_{a_{i'} \sim x_{i'}\^t,\ \forall i' \neq i} [\ML_i(a_1, \ldots, a_n)]$; for our case here ($m=2$), this may be rewritten as $\ell_1\^t = A_1 x_2\^t$, $\ell_2\^t = A_2^\t x_1\^t$. Thus
\begin{align}
\| \fds{h}{\ell_1}{t} \|_\infty = \left\|A_1 \cdot \sum_{s=0}^h {h \choose s} (-1)^{h-s}  x_2\^{t+s} \right\|_\infty \leq \left\| \sum_{s=0}^h {h \choose s} (-1)^{h-s}  x_2\^{t+s} \right\|_1 = \| \fds{h}{x_2}{t} \|_1\nonumber,
\end{align}
where the first equality is from Remark \ref{rem:fds-alt} and the inequality follows since all entries of $A_1$ have absolute value $\le 1$. A similar computation allows us to show $\| \fds{h}{\ell_2}{t} \|_\infty \leq \| \fds{h}{x_1}{t} \|_1$.

To complete the inductive step it remains to upper bound the quantities $\| \fds{h}{x_i}{t} \|_1$ for $i \in [m], t \in [T-h]$. To do so, we note that the definition of the \Opthedge updates (\ref{eq:opt-hedge}) implies that for any $i \in [m], t \in [T], j \in [n_i]$, and $t' \geq 1$, we have 
\begin{align}
  \label{eq:ttp-extrapolate}
x_i\^{t+t'}(j) = \frac{x_i\^{t}(j) \cdot \exp\left( \eta \cdot \left( \ell_i\^{t-1}(j) - \sum_{s=0}^{t'-1} \ell_i\^{t+s}(j) - \ell_i\^{t+t'-1}(j)\right) \right)}{\sum_{k=1}^{n_i} x_i\^{t}(k) \cdot \exp\left( \eta \cdot \left( \ell_i\^{t-1}(k) - \sum_{s=0}^{t'-1} \ell_i\^{t+s}(k) - \ell_i\^{t+t'-1}(k)\right) \right)}.
\end{align}
For $t \in [T]$, $t' \geq 0$, set
$
\bar \ell_{i,t}\^{t'} := \eta \cdot \left( \ell_i\^{t-1} - \sum_{s=0}^{t'-1} \ell_i\^{t+s} - \ell_i\^{t+t'-1}\right).
$ 
Also, for each $i \in [m]$, $j \in [n_i]$, $t \in [T]$, and any vector $z = (z(1), \ldots, z(n_i)) \in \BR^{n_i}$ define $
\phi_{t,i,j}(z) := \frac{ x_i\^t(j) \cdot \exp\left(  z(j)\right)}{\sum_{k=1}^{n_i} x_i\^{t}(k) \cdot \exp\left( z(k)\right)}.$ 
Thus (\ref{eq:ttp-extrapolate}) gives that for $t' \geq 1$, $x_i\^{t+t'}(j) = \phi_{t,i,j}(\bar \ell_{i,t}\^{t'})$. Viewing $t$ as a fixed parameter and letting $t'$ vary, it follows that for $h \geq 0$ and $t' \geq 1$, $\fds{h}{x_i\^{t+\cdot}(j)}{t'} = \fds{h}{(\phi_{t,i,j} \circ \bar \ell_{i,t})}{t'}$. 

Recalling that our goal is to bound $|\fds{h}{x_i(j)}{t+1}|$ for each $t$, 
we can do so by using Lemma \ref{lem:fd-analytic} with $\phi = \phi_{t,i,j}$ and $\alpha = O(m \eta)$, if we can show that its precondition is met, i.e.~that $\| \fds{h'}{\bar \ell_{i,t}}{t'} \|_\infty \leq \frac{1}{B_1} \cdot \alpha^{h'} \cdot (h')^{B_0 h'}$ for all $h' \leq h$, the appropriate $\alpha$ and appropriate constants $B_0, B_1$. Helpfully, the definition of $\bar \ell_{i,t}\^{t'}$ as a partial sum allows us to relate the $h'$-th order finite differences of the sequence $\bar \ell_{i,t}\^{t'}$ to the $(h'-1)$-th order finite differences of the sequence $\ell_i\^t$ as follows:
\begin{equation}
  \label{eq:barl-l}
\fds{h'}{\bar \ell_{i,t}}{t'} = \eta \cdot \fds{h'-1}{\ell_i}{t+t'-1} -2\eta \cdot \fds{h'-1}{\ell_i}{t+t'}.
\end{equation}
Since $h'-1 < h$ for $h' \leq h$, the inductive assumption of Lemma \ref{lem:dh-bound} gives a bound on the $\ell_\infty$-norm of the terms on the right-hand side of (\ref{eq:barl-l}), which are sufficient for us to apply Lemma \ref{lem:fd-analytic}. Note that the inductive assumption gives an upper bound on $\| \fds{h'-1}{\ell_i}{t} \|_\infty$ that only scales with $\alpha^{h'-1}$, 
whereas Lemma \ref{lem:fd-analytic} requires scaling of $\alpha^{h'}$. This discrepancy is corrected by the factor of $\eta$ on the right-hand side of (\ref{eq:barl-l}), which gives the desired scaling $\alpha^{h'}$ (since $\eta < \alpha$ for the choice $\alpha = O(m\eta)$).

\subsection{Downwards induction proof overview}
\label{sec:downwards-ind}

\neurips{\vspace{-0.3cm}}
In this section we discuss in further detail item \ref{it:downwards-ind} in Section \ref{sec:fds}; in particular, we will show that there is a parameter $\mu = \tilde \Theta(\eta m)$ so that for all integers $h$ satisfying $H-1 \geq h \geq 0$,
  \begin{equation}
    \label{eq:fds-H-ind-informal}
\sum_{t=1}^{T-h-1} \Var{x_i\^t}{\fds{h+1}{\ell_i}{t}} \leq O(1/H) \cdot \sum_{t=1}^{T-h} \Var{x_i\^t}{\fds{h}{\ell_i}{t}} +  \tilde O \left( \mu^{2h} \right), 
\end{equation}
where $\tilde O$ hides factors polynomial in $\log T$. 
The validity of (\ref{eq:fds-H-ind-informal}) for $h = 0$ implies Lemma \ref{lem:bound-l-dl}. On the other hand, as long we choose the value $\mu$ in (\ref{eq:fds-H-ind-informal}) to satisfy $\mu \geq m \eta H^{\Omega(1)}$, 
then Lemma \ref{lem:dh-bound} implies that $\sum_{t=1}^{T-H} \Var{x_i\^t}{\fds{H}{\ell_i}{t}} \leq O(\mu^{2H})$. This gives that (\ref{eq:fds-H-ind-informal}) holds for $h = H-1$. To show that (\ref{eq:fds-H-ind-informal}) holds for all $H-1 > h \geq 0$, we use downwards induction; fix any $h$, and assume that (\ref{eq:fds-H-ind-informal}) has been shown for all $h'$ satisfying $h < h' \leq H-1$. Our main tool in the inductive step is to apply Lemma \ref{lem:d210} below. To state it, 
for $\zeta > 0,\ n \in \BN$, we say that a sequence of distributions $P\^1, \ldots, P\^T \in \Delta^n$ is \emph{$\zeta$-consecutively close} if for each $1 \leq t < T$, it holds that $\max \left\{\left\| \frac{P\^t}{P\^{t+1}}\right\|_\infty, \left\| \frac{P\^{t+1}}{P\^t}\right\|_\infty\right\} \leq 1 + \zeta$.\footnote{Here, for distributions $P,Q \in \Delta^n$, $\frac{P}{Q} \in \BR^n$ denotes the vector whose $j$th entry is $P(j) / Q(j)$.} Lemma \ref{lem:d210} shows that given a sequence of vectors for which the variances of its second-order finite differences are bounded by the variances of its first-order finite differences, a similar relationship holds between its first- and zeroth-order finite differences.
\begin{lemma}
  \label{lem:d210}
  There is a sufficiently large constant $C_0 > 1$ so that the following holds. For any $M, \zeta, \alpha > 0$ and $n \in \BN$, suppose that $P\^1, \ldots, P\^T \in \Delta^n$ and $Z\^1, \ldots, Z\^T \in [-M,M]^n$ satisfy the following conditions:
  \begin{enumerate}
  \item \label{it:prec-consec-close-body} The sequence $P\^1, \ldots, P\^T$ is $\zeta$-consecutively close for some $\zeta \in [1/(2T), \alpha^4/C_0]$. 
  \item \label{it:prec-vars-body} It holds that 
$ 
      \sum_{t=1}^{T-2} \Var{P\^t}{\fds{2}{Z}{t}} \leq \alpha \cdot \sum_{t=1}^{T-1} \Var{P\^t}{\fds{1}{Z}{t}} + \mu.
$ 
\end{enumerate}
Then\ \ 
$ 
\sum_{t=1}^{T-1} \Var{P\^t}{\fds{1}{Z}{t}} \leq \alpha \cdot (1+ \alpha) \sum_{t=1}^T \Var{P\^t}{Z\^t} + \frac{\mu}{\alpha} + \frac{C_0 M^2}{\alpha^3}.
$

\end{lemma}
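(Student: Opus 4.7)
The plan is to work in the frequency domain via the discrete Fourier transform, as outlined in Section \ref{sec:downwards-ind}: first reduce to variances under a single reference distribution, then apply the DFT and Parseval's equality to convert the three time-domain sums into sums over frequency coefficients, derive the main inequality by Cauchy--Schwarz (Lemma \ref{lem:freq-cauchy}) followed by weighted AM-GM, and finally transport the bound back to the time domain, absorbing the accumulated approximation errors into the $C_0 M^2/\alpha^3$ slack.

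First, I would fix a reference distribution $\widehat P$ (e.g., $\widehat P := P\^1$) and use the $\zeta$-consecutively close hypothesis, together with the constraint $\zeta \leq \alpha^4/C_0$, to approximate each $\Var{P\^t}{v}$ by $\Var{\widehat P}{v}$. Since the ratios $P\^t/\widehat P$ need not be uniformly close to $1$ across all of $[T]$, this reduction is naturally carried out block-wise, on time windows short enough that $P\^t$ stays within a multiplicative $1+O(\alpha)$ factor of a common representative; the resulting multiplicative errors are of order $\alpha$ and the additive errors are polynomial in $M^2, \zeta, \alpha$. Next, for each coordinate $j \in [n]$, apply the DFT to the sequence $(Z\^t(j))_{t=1}^T$; by Parseval and the identity $\widehat{\fd{h}{Z}}(k) = (e^{2\pi i k/T}-1)^h \widehat{Z}(k)$, the three time-domain quantities become, up to DC subtraction and controlled error, $S := \sum_{t=1}^T \Var{\widehat P}{Z\^t} \approx \sum_k A_k$, $U := \sum_{t=1}^{T-1} \Var{\widehat P}{\fds{1}{Z}{t}} \approx \sum_k \lambda_k A_k$, and $V := \sum_{t=1}^{T-2} \Var{\widehat P}{\fds{2}{Z}{t}} \approx \sum_k \lambda_k^2 A_k$, where $\lambda_k := |e^{2\pi i k/T}-1|^2 \in [0,4]$ and $A_k := \sum_j \widehat P(j) |\widehat{Z_j}(k)|^2$.

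In the frequency domain, Cauchy--Schwarz (Lemma \ref{lem:freq-cauchy}) yields $U = \sum_k \sqrt{A_k} \cdot \lambda_k \sqrt{A_k} \leq \sqrt{SV}$. Substituting the hypothesis $V \leq \alpha U + \mu$ gives $U^2 \leq \alpha S U + S \mu$; applying $\sqrt{a+b} \leq \sqrt{a}+\sqrt{b}$ and then the weighted AM-GM inequalities $\sqrt{\alpha S U} \leq \alpha S/2 + U/2$ and $2\sqrt{S\mu} \leq \alpha S + \mu/\alpha$ (with the $U/2$ term absorbed to the left-hand side) produces a clean bound of the form $U \leq O(\alpha) S + \mu/\alpha$ in the frequency variables, which is the frequency-domain analogue of the claim.

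The main obstacle is transporting this inequality back to the time domain while keeping every error term within the $C_0 M^2/\alpha^3$ budget. Two sources of error must be controlled simultaneously: (i) the replacement of varying $P\^t$ by the fixed reference $\widehat P$, and (ii) boundary effects from applying the DFT to finite-length sequences (including reconciling the summation ranges $[1,T]$, $[1,T-1]$, and $[1,T-2]$ in the three sums). Error (i) is handled by the calibration $\zeta \leq \alpha^4/C_0$, which keeps the multiplicative slack at order $\alpha$, thereby producing the coefficient $\alpha(1+\alpha)$ in the conclusion rather than a cleaner $\alpha$. Error (ii) contributes additive boundary terms of size $O(M^2)$ per block, which get amplified to $O(M^2/\alpha^3)$ when combined with the $1/\alpha$-factors inherent in AM-GM (and the block count from (i)); the lower bound $\zeta \geq 1/(2T)$ fixes a canonical scale that simplifies this accounting. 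Carrying out this bookkeeping cleanly and simultaneously across $S$, $U$, $V$ is the technical crux of the proof.
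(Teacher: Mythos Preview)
Your plan coincides with the paper's proof: partition $[T]$ into overlapping windows of length $S = \Theta(1/\alpha^3)$, replace the moving reference $P\^t$ by a fixed $P\^{t_0}$ on each window via Lemma~\ref{lem:covvar-close}, center $Z$ so that variances become weighted $\ell^2$ norms, and then apply the Fourier/Cauchy--Schwarz step (Lemma~\ref{lem:freq-cauchy}) coordinate-wise on each window before summing back up. The identification of the error budget, and that the constraint $\zeta \le \alpha^4/C_0$ is exactly what keeps the window-length multiplicative drift $(1+\zeta)^S$ within $1+O(\alpha)$, is correct.

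There is, however, one genuine gap. Your AM--GM chain from $U^2 \le \alpha S U + S\mu$ yields only $U \le 2\alpha S + \mu/\alpha$, not $U \le \alpha S + \mu/\alpha$. That lost factor of $2$ is \emph{not} absorbable into the $(1+\alpha)$ slack or into $C_0$: the lemma's conclusion has the explicit leading coefficient $\alpha(1+\alpha)$, and this sharpness is essential downstream, since the downward induction applies the lemma $H=\log T$ times and needs the coefficient to grow only as $(1+O(\alpha))^H = O(1)$ rather than as $2^H = T$. The paper avoids this loss by a different elementary step: from $U^2 \le S(\alpha U + \mu)$ rewrite as $U^2/(U+\mu/\alpha) \le \alpha S$ and use the inequality $A^2/(A+\varepsilon) \ge A-\varepsilon$ (equivalently $A^2 \ge A^2 - \varepsilon^2$) to conclude $U \le \alpha S + \mu/\alpha$ exactly. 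Replace your AM--GM argument with this and the rest of your outline goes through.

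One smaller point of clarity: the Fourier identity you quote, $\widehat{\fd{h}{Z}}(k) = (e^{2\pi ik/S}-1)^h \widehat Z(k)$, holds for \emph{circular} finite differences on a length-$S$ window, not for the linear finite differences on the full sequence $(Z\^t)_{t=1}^T$. The paper is explicit about this: it defines circular finite differences $\fdc{h}{}$ and applies Lemma~\ref{lem:freq-cauchy} to the windowed, circularly extended sequence $Z_{t_0}\^0,\ldots,Z_{t_0}\^{S-1}$; the discrepancy between circular and linear differences at the window endpoints is then absorbed as an $O(M^2)$ boundary term per window. Your paragraph on error source (ii) suggests you have this in mind, but make sure the DFT is applied per window after the circular extension, not globally.
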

Given Lemma \ref{lem:d210}, the inductive step for establishing (\ref{eq:fds-H-ind-informal}) is straightforward: we apply Lemma \ref{lem:d210} with $P\^t = x_i\^t$ and $Z\^t = \fds{h}{\ell_i}{t}$ for all $t$. The fact that $x_i\^t$ are updated with \Opthedge may be used to establish that precondition \ref{it:prec-consec-close-body} of Lemma \ref{lem:d210} holds. Since $\fds{1}{Z}{t} = \fds{h+1}{\ell_i}{t}$ and $\fds{2}{Z}{t} = \fds{h+2}{\ell_i}{t}$, that the inductive hypothesis (\ref{eq:fds-H-ind-informal}) holds for $h+1$ implies that precondition \ref{it:prec-vars-body} of Lemma \ref{lem:d210} holds for appropriate $\alpha, \mu > 0$. Thus Lemma \ref{lem:d210} implies that (\ref{eq:fds-H-ind-informal}) holds for the value $h$, which completes the inductive step.

\neurips{\vspace{-0.3cm}}
\paragraph{On the proof of Lemma \ref{lem:d210}.}
Finally we discuss the proof of Lemma \ref{lem:d210}. One technical challenge is the fact that the vectors $P\^t$ are not constant functions of $t$, but rather change slowly (as constrained by being $\zeta$-consecutively close). The main tool for dealing with this difficulty is Lemma \ref{lem:covvar-close}, which shows that for a $\zeta$-consecutively close sequence $P\^t$, for any vector $Z\^t$, $\frac{\Var{P\^t}{Z\^t}}{\Var{P\^{t+1}}{Z\^t}} \in [1-\zeta, 1+\zeta]$. This fact, together with some algebraic manipulations, lets us to reduce to the case that all $P\^t$ are equal. It is also relatively straightforward to reduce to the case that $\lng P\^t, Z\^t \rng = 0$ for all $t$, i.e., so that $\Var{P\^t}{Z\^t} = \norm{P\^t}{Z\^t}^2$. We may further separate $\norm{P\^t}{Z\^t}^2 = \sum_{j=1}^n P\^t(j) \cdot (Z\^t(j))^2$ into its individual components $P\^t(j) \cdot (Z\^t(j))^2$, and treat each one separately, thus allowing us to reduce to a one-dimensional problem. Finally, we make one further reduction, which is to replace the finite differences $\fd{h}{(\cdot)}$ in Lemma \ref{lem:d210} with \emph{circular finite differences}, defined below:
\begin{defn}[Circular finite difference]
  Suppose $L = (L\^0, \ldots, L\^{S-1})$ is a sequence of vectors $L\^t \in \BR^n$. For integers $h \geq 0$, the \emph{level-$h$ circular finite difference sequence} for the sequence $L$, denoted by $\fdc{h}{L}$, is the sequence defined recursively as: $\fdcs{0}{L}{t} = L\^t$ for all $0 \leq t < S$, and
  \begin{equation}
    \fdcs{h}{L}{t} = \begin{cases}
      \fdcs{h-1}{L}{t+1} - \fdcs{h-1}{L}{t} \quad : 0 \leq t \leq S-2 \\
      \fdcs{h-1}{L}{1} - \fdcs{h-1}{L}{T} \quad : t = S-1.
    \end{cases}
  \end{equation}
\end{defn}
Circular finite differences for a sequence $L\^0, \ldots, L\^{S-1}$ are defined similarly to finite differences (Definition \ref{def:fd}) except that unlike for finite differences, where $\fds{h}{L}{S-h}, \ldots, \fds{h}{L}{S-1}$ are not defined, $\fdcs{h}{L}{S-h}, \ldots, \fdcs{h}{L}{S-1}$ are defined by ``wrapping around'' back to the beginning of the sequence. The above-described reductions, which are worked out in detail in Section \ref{sec:d210-proof}, allow us to reduce proving Lemma \ref{lem:d210} to proving the following simpler lemma:
\begin{lemma}
  \label{lem:freq-cauchy}
  Suppose $\mu \in \BR$, $\alpha > 0$, and $W\^0, \ldots, W\^{S-1} \in \BR$ is a sequence of reals satisfying
  \begin{equation}
    \label{eq:d2d1}
\sum_{t=0}^{S-1} \left( \fdcs{2}{W}{t} \right)^2 \leq \alpha \cdot \sum_{t=0}^{S-1} \left( \fdcs{1}{W}{t} \right)^2 + \mu.
\end{equation}
Then\ \ 
$
\sum_{t=0}^{S-1} \left( \fdcs{1}{W}{t} \right)^2 \leq \alpha \cdot \sum_{t=1}^{S-1} (W\^t)^2 + \mu/\alpha.
$
\end{lemma}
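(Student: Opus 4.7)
The plan is to apply the discrete Fourier transform (DFT) to diagonalize the circular finite difference operator, which will reduce the statement to an elementary scalar inequality on nonnegative coefficients. Because of the periodic wrap-around in the definition of $\fdc{\cdot}{\cdot}$, the first-order circular difference sends the sequence $W = (W\^0, \ldots, W\^{S-1})$ to its convolution (modulo $S$) with the kernel $(-1, 1, 0, \ldots, 0)$. In the DFT basis $\hat W(k) := \sum_{t=0}^{S-1} W\^t \cdot e^{-2\pi i k t / S}$, this operator is diagonal with eigenvalues $\omega_k := e^{2\pi i k / S} - 1$, and iterating yields that the second-order circular difference is diagonal with eigenvalues $\omega_k^2$. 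Setting $\lambda_k := |\omega_k|^2 = 4\sin^2(\pi k/S) \geq 0$ and $a_k := |\hat W(k)|^2 \geq 0$, Parseval's identity $\sum_{t=0}^{S-1} (x\^t)^2 = \tfrac{1}{S} \sum_{k=0}^{S-1} |\hat x(k)|^2$ converts the hypothesis (\ref{eq:d2d1}) into
\begin{equation}
\tfrac{1}{S} \sum_{k=0}^{S-1} \lambda_k^2 \, a_k \;\leq\; \alpha \cdot \tfrac{1}{S} \sum_{k=0}^{S-1} \lambda_k \, a_k + \mu, \label{eq:prop-hyp}
\end{equation}
and converts the desired conclusion into $\tfrac{1}{S} \sum_k \lambda_k a_k \leq \alpha \cdot \tfrac{1}{S} \sum_k a_k + \mu/\alpha$.

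The key step is then a weighted AM-GM bound applied coordinatewise: for each $k$, the inequality $\lambda_k \leq \alpha/2 + \lambda_k^2/(2\alpha)$ holds, since it rearranges to $(\lambda_k - \alpha)^2 \geq 0$. Averaging against $a_k/S \geq 0$ and then substituting (\ref{eq:prop-hyp}) into the $\lambda_k^2$ term gives
\begin{align*}
\tfrac{1}{S} \sum_k \lambda_k a_k
&\leq \tfrac{\alpha}{2} \cdot \tfrac{1}{S} \sum_k a_k + \tfrac{1}{2\alpha} \cdot \tfrac{1}{S} \sum_k \lambda_k^2 a_k \\
&\leq \tfrac{\alpha}{2} \cdot \tfrac{1}{S} \sum_k a_k + \tfrac{1}{2} \cdot \tfrac{1}{S} \sum_k \lambda_k a_k + \tfrac{\mu}{2\alpha}.
\end{align*}
Subtracting $\tfrac{1}{2} \cdot \tfrac{1}{S} \sum_k \lambda_k a_k$ from both sides and multiplying by $2$ yields the desired inequality.

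The only delicate point---and the reason the proof is short---is the particular choice of convex combination in the AM-GM step. A naive split such as $\lambda_k \leq \alpha + \lambda_k^2/\alpha$ (obtainable by case analysis on whether $\lambda_k \leq \alpha$) leaves a coefficient $1$ rather than $1/2$ in front of $\tfrac{1}{S}\sum_k \lambda_k a_k$ on the right-hand side after substitution, producing a tautological inequality that cannot be rearranged. The split $\lambda_k \leq \alpha/2 + \lambda_k^2/(2\alpha)$ is tight at $\lambda_k = \alpha$ and yields exactly the factor $1/2$ that can be absorbed back into the left-hand side. Once one notices this, no further obstacle remains.
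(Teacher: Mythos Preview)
Your proof is correct and takes essentially the same route as the paper: diagonalize the circular finite-difference operator via the DFT, apply Parseval's identity to turn the hypothesis and conclusion into weighted sums of $a_k = |\hat W(k)|^2$ with weights $\lambda_k = |e^{2\pi i k/S}-1|^2$, and finish with a short scalar inequality. The only variation is in that final step: the paper applies Cauchy--Schwarz to get $\bigl(\tfrac{1}{S}\sum_k \lambda_k a_k\bigr)^2 \leq \bigl(\tfrac{1}{S}\sum_k a_k\bigr)\bigl(\tfrac{1}{S}\sum_k \lambda_k^2 a_k\bigr)$, substitutes the hypothesis, and then uses the identity $A^2/(A+\epsilon) \geq A - \epsilon$; you instead use the pointwise AM--GM bound $\lambda_k \leq \alpha/2 + \lambda_k^2/(2\alpha)$ and substitute directly. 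These are equivalent maneuvers (Cauchy--Schwarz followed by AM--GM on $\sqrt{B}\sqrt{C}$ yields exactly your intermediate inequality $A \leq \tfrac{\alpha}{2}B + \tfrac{1}{2\alpha}C$), and your version is arguably a touch cleaner since it avoids the rational-function manipulation. As an aside, the $\sum_{t=1}^{S-1}$ in the lemma's stated conclusion is a typo for $\sum_{t=0}^{S-1}$: the paper's own proof establishes the latter, and that is also what you prove.
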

To prove Lemma \ref{lem:freq-cauchy}, we apply the discrete Fourier transform to both sides of (\ref{eq:d2d1}) and use the Cauchy-Schwarz inequality in frequency domain. For a sequence $W\^0, \ldots, W\^{S-1} \in \BR$, its (discrete) \emph{Fourier transform} is the sequence $\wh{W}\^0, \ldots, \wh{W}\^{S-1}$ defined by $\wh{W}\^s = \sum_{t=0}^{S-1} W\^t \cdot e^{-\frac{2\pi i st}{S}}$. 
Below we prove Lemma \ref{lem:freq-cauchy} for the special case $\mu = 0$; we defer the general case to Section \ref{sec:d210-prelim}.
\begin{proof}[Proof of Lemma \ref{lem:freq-cauchy} for special case $\mu = 0$]
  We have the following:
  \begin{align}
  \hspace{-0.6cm}    S \cdot \sum_{t=1}^T \left( \fdcs{1}{W}{t} \right)^2  = \sum_{s=1}^T \left| \widehat{\fdc{1}{W}}\^s \right|^2
    = \sum_{s=1}^T \left| \widehat{W}\^s  (e^{2\pi i s /T - 1}) \right|^2 \leq
    \sqrt{\sum_{s=1}^T \left| \widehat{W}\^s \right|^2}  \sqrt{\sum_{s=1}^T \left| \widehat{W}\^s \right|^2  \left| e^{2\pi i s/T} - 1\right|^4}\nonumber,
  \end{align}
  where the first equality uses Parseval's equality, the second uses Fact \ref{fac:fourier-circ} (in the appendix) for $h=1$, and the inequality uses Cauchy-Schwarz. By Parseval's inequality and Fact \ref{fac:fourier-circ} for $h=2$, the right-hand side of the above equals $S \cdot \sqrt{ \sum_{t=1}^T (W\^t)^2} \cdot \sqrt{\sum_{t=1}^T \left( \fdcs{2}{W}{t}\right)^2}$, which, by assumption, is at most $S \cdot \sqrt{ \sum_{t=1}^T (W\^t)^2} \cdot \sqrt{\alpha \cdot \sum_{t=1}^T \left( \fdcs{1}{W}{t} \right)^2 }$. Rearranging terms completes the proof. 
\end{proof}


\neurips{
\begin{ack}
Use unnumbered first level headings for the acknowledgments. All acknowledgments
go at the end of the paper before the list of references. Moreover, you are required to declare
funding (financial activities supporting the submitted work) and competing interests (related financial activities outside the submitted work).
More information about this disclosure can be found at: \url{https://neurips.cc/Conferences/2021/PaperInformation/FundingDisclosure}.

Do {\bf not} include this section in the anonymized submission, only in the final paper. You can use the \texttt{ack} environment provided in the style file to autmoatically hide this section in the anonymized submission.
\end{ack}
}


{
\neurips{  \small}
  \bibliographystyle{alpha}
  \bibliography{multi-agent}
}

\newpage
\neurips{
\section*{Checklist}

\begin{enumerate}

\item For all authors...
\begin{enumerate}
  \item Do the main claims made in the abstract and introduction accurately reflect the paper's contributions and scope?
    \answerYes{}
  \item Did you describe the limitations of your work?
    \answerYes{}
  \item Did you discuss any potential negative societal impacts of your work?
    \answerNA{}
  \item Have you read the ethics review guidelines and ensured that your paper conforms to them?
    \answerYes{}
\end{enumerate}

\item If you are including theoretical results...
\begin{enumerate}
  \item Did you state the full set of assumptions of all theoretical results?
    \answerYes{See Sections \ref{sec:prelim} and \ref{sec:results}.}
	\item Did you include complete proofs of all theoretical results?
    \answerYes{See Section \ref{sec:proof-overview} and the appendix (in the supplementary material).}
\end{enumerate}

\item If you ran experiments...
\begin{enumerate}
  \item Did you include the code, data, and instructions needed to reproduce the main experimental results (either in the supplemental material or as a URL)?
    \answerNA{}
  \item Did you specify all the training details (e.g., data splits, hyperparameters, how they were chosen)?
    \answerNA{}
	\item Did you report error bars (e.g., with respect to the random seed after running experiments multiple times)?
    \answerNA{}
	\item Did you include the total amount of compute and the type of resources used (e.g., type of GPUs, internal cluster, or cloud provider)?
    \answerNA{}
\end{enumerate}

\item If you are using existing assets (e.g., code, data, models) or curating/releasing new assets...
\begin{enumerate}
  \item If your work uses existing assets, did you cite the creators?
    \answerNA{}
  \item Did you mention the license of the assets?
    \answerNA{}
  \item Did you include any new assets either in the supplemental material or as a URL?
    \answerNA{}
  \item Did you discuss whether and how consent was obtained from people whose data you're using/curating?
    \answerNA{}
  \item Did you discuss whether the data you are using/curating contains personally identifiable information or offensive content?
    \answerNA{}
\end{enumerate}

\item If you used crowdsourcing or conducted research with human subjects...
\begin{enumerate}
  \item Did you include the full text of instructions given to participants and screenshots, if applicable?
    \answerNA{}
  \item Did you describe any potential participant risks, with links to Institutional Review Board (IRB) approvals, if applicable?
    \answerNA{}
  \item Did you include the estimated hourly wage paid to participants and the total amount spent on participant compensation?
    \answerNA{}
\end{enumerate}

\end{enumerate}
}
\newpage
\appendix

\section{Proofs for Section \ref{sec:adv-reg-bound}}
\label{sec:adv-regbnd-proofs}
In this section we prove Lemma \ref{lem:omwu-local}.  Throughout the section we use the notation of Lemma \ref{lem:omwu-local}: in particular, we assume that any player $i \in [m]$ follows the \Opthedge updates (\ref{eq:opt-hedge}) with step size $\eta > 0$, for an arbitrary sequence of losses $\ell_i\^1, \ldots, \ell_i\^T$.
\subsection{Preliminary lemmas}
The first few lemmas in this section pertain to vectors $P,Q \in \Delta^n$, for some $n \in \BN$; note that such vectors $P,Q$ may be viewed as distributions on $[n]$. Let $P/Q \in \BR^n$ denote the Radon-Nikodym derivative, i.e., the vector whose $j$th component is $P(j) / Q(j)$. 
\begin{lemma}
  \label{lem:chi2-close}
If $\| P/Q \|_\infty \leq A$, then $\chisq{P}{Q} \leq A \cdot \chisq{Q}{P}$.
\end{lemma}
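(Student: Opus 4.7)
The plan is to prove this by a term-by-term comparison of the two chi-squared divergences. Recall from the preliminaries the formula
\[
\chisq{P}{Q} = \sum_{j=1}^{n} \frac{(P(j)-Q(j))^2}{Q(j)}, \qquad \chisq{Q}{P} = \sum_{j=1}^{n} \frac{(P(j)-Q(j))^2}{P(j)},
\]
so the two sums share a common numerator $(P(j)-Q(j))^2$ at each index $j$ and differ only in whether the denominator is $Q(j)$ or $P(j)$.

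First I would observe that the hypothesis $\|P/Q\|_\infty \le A$ is exactly the statement $P(j) \le A \cdot Q(j)$ for every $j \in [n]$, equivalently $\frac{1}{Q(j)} \le \frac{A}{P(j)}$ whenever $P(j) > 0$. Multiplying both sides by the nonnegative quantity $(P(j)-Q(j))^2$ yields
\[
\frac{(P(j)-Q(j))^2}{Q(j)} \;\le\; A \cdot \frac{(P(j)-Q(j))^2}{P(j)}
\]
for each $j$ with $P(j)>0$. Summing over such $j$ gives $\chisq{P}{Q} \le A \cdot \chisq{Q}{P}$, which is the desired inequality.

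The only subtlety is handling indices $j$ with $P(j)=0$. If $P(j)=0$ and $Q(j)=0$, such an index contributes $0$ to both sums and can be omitted. If $P(j)=0$ and $Q(j)>0$, then the $j$-th term of $\chisq{Q}{P}$ is $+\infty$ (by the standard convention), so the right-hand side is infinite and the inequality holds trivially. Hence the bound is valid in full generality and there is no real obstacle; this lemma is essentially a one-line pointwise comparison justified by the assumed $\ell_\infty$ bound on the Radon--Nikodym derivative.
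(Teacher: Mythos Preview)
Your proof is correct and is essentially identical to the paper's: both do the same term-by-term comparison, using $P(j)\le A\,Q(j)$ to bound $\frac{(P(j)-Q(j))^2}{Q(j)}$ by $A\cdot\frac{(P(j)-Q(j))^2}{P(j)}$ and then summing. The paper presents this as a single displayed inequality without discussing the $P(j)=0$ edge cases, which you handle explicitly.
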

\begin{proof}
  The lemma is immediate from the definition of the $\chi^2$ divergence:
  \begin{align*}
\chisq{P}{Q} = \sum_{j=1}^n \frac{(P(j) - Q(j))^2}{Q(j)} \leq A \cdot \sum_{j=1}^n \frac{(P(j) - Q(j))^2}{P(j)} = A \cdot \chisq{Q}{P}.
  \end{align*}
\end{proof}

It is a standard fact (though one which we do not need in our proofs) that for all $P,Q \in \Delta^{n_i}$, $\kld{P}{Q} \leq \chisq{P}{Q}$. The below lemma shows an inequality in the opposite direction when $\| P/Q \|_\infty, \| Q/P \|_\infty$ are bounded:
\begin{lemma}
  \label{lem:kl-div-lb}
  There is a constant $C$ so that the following holds. 
Suppose that for $A \leq \frac32$ we have $\left\| P/Q \right\|_\infty \leq A$ and $\left\| Q/P \right\|_\infty \leq A$. Then $(1/2 - C (A-1)) \cdot \chi^2(P;Q) \leq \KL(P;Q)$. 
\end{lemma}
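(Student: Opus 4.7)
The plan is to reduce the inequality to a one-dimensional Taylor estimate for the scalar function $f(x) = (1+x)\log(1+x)$, and then integrate against $Q$.

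First, I would set $r_j = P(j)/Q(j)$ and $\epsilon_j = r_j - 1$. The two hypotheses $\|P/Q\|_\infty \le A$ and $\|Q/P\|_\infty \le A$ translate into $1/A \le r_j \le A$, which, using $A \le 3/2$, gives the uniform bound $|\epsilon_j| \le A-1 \le 1/2$ (the upper side is immediate, and on the lower side $1-1/A = (A-1)/A \le A-1$ since $A \ge 1$). Next I rewrite
\[
\KL(P;Q) = \sum_{j=1}^n Q(j)\, (1+\epsilon_j)\log(1+\epsilon_j),
\]
so that the problem becomes controlling $(1+\epsilon)\log(1+\epsilon)$ pointwise from below for $|\epsilon|\le 1/2$. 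I would also record the two companion identities $\sum_j Q(j)\epsilon_j = \sum_j (P(j)-Q(j)) = 0$ and $\sum_j Q(j)\epsilon_j^2 = \chi^2(P;Q)$.

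The analytic core is the power series
\[
(1+x)\log(1+x) = x + \frac{x^2}{2} + \sum_{k\ge 3} \frac{(-1)^k}{k(k-1)}\, x^k.
\]
For $|x|\le 1/2$, the tail is bounded in absolute value by $|x|^3 \sum_{k\ge 3} \frac{(1/2)^{k-3}}{k(k-1)} \le |x|^3/3$, yielding the clean lower bound $(1+x)\log(1+x) \ge x + x^2/2 - |x|^3/3$. Applying this with $x=\epsilon_j$, summing against $Q(j)$, and using the two identities above gives
\[
\KL(P;Q) \;\ge\; \tfrac12 \chi^2(P;Q) \;-\; \tfrac13 \sum_j Q(j)|\epsilon_j|^3.
\]
Finally I would bound $|\epsilon_j|^3 \le (A-1)\epsilon_j^2$ using the uniform bound $|\epsilon_j|\le A-1$, obtaining $\sum_j Q(j)|\epsilon_j|^3 \le (A-1)\chi^2(P;Q)$ and hence
\[
\KL(P;Q) \;\ge\; \bigl(\tfrac12 - \tfrac{A-1}{3}\bigr)\chi^2(P;Q),
\]
which is the desired inequality with $C = 1/3$.

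There is essentially no obstacle here: the argument is purely a careful Taylor remainder estimate plus the vanishing of $\sum Q(j)\epsilon_j$. The only point requiring mild care is making sure the constraint $A\le 3/2$ is used exactly where needed (to guarantee $|\epsilon_j|\le 1/2$ so the power series can be summed termwise, and to pull out the factor $A-1$ in the cubic remainder); the specific value $C=1/3$ is not important, only that $C$ be an absolute constant.
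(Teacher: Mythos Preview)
Your proof is correct and follows essentially the same idea as the paper's: a pointwise Taylor estimate for the logarithm, integrated against one of the two measures. The paper expands $\log(1+x)$ to second order (with a quadratic remainder), writes $\KL(P;Q)=\sum_j P(j)\log\bigl(P(j)/Q(j)\bigr)$, and then has to pass once between $P$-weights and $Q$-weights to reach $\chi^2(P;Q)$; you instead expand $(1+x)\log(1+x)$ to third order and write $\KL(P;Q)=\sum_j Q(j)(1+\epsilon_j)\log(1+\epsilon_j)$, so the $\chi^2$ term falls out immediately and you obtain the explicit constant $C=1/3$. The two arguments are interchangeable in content; your packaging is slightly cleaner.
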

\begin{proof}
There is a constant $C > 0$ so that for any $0 < \beta \leq 1/2$, for all $|x| \leq \beta$, we have
  $$
\log(1+x) \geq x - (1/2 + C\beta) x^2.
$$
Set $a = A-1$, so that $|P(j) / Q(j) - 1| \leq a$ for all $j$ by assumption. 
Then for $C' = C + 1/2$, we have
  \begin{align*}
    \KL(P; Q) &= \sum_j P(j) \log \frac{P(j)}{Q(j)} \\
              &\geq \sum_j P(j) \cdot \left( \left( \frac{P(j)}{Q(j)} - 1 \right) - (1/2 + Ca) \left( \frac{P(j)}{Q(j)} - 1 \right)^2 \right) \\
              & \geq \chi^2(P;Q) - (1/2 + Ca) \sum_j P(j) \cdot \frac{(P(j) - Q(j))^2}{Q(j)^2} \\
              & \geq \chi^2(P;Q) - \frac{1/2 + Ca}{A} \chi^2(P;Q) \\
              &\geq \frac{1/2 -aC}{1 + a} \cdot \chisq{P}{Q} \\
              &\geq \left(1/2 - a \cdot (C + 1/2)\right) \cdot \chisq{P}{Q} \\
              & = (1/2 - C'a) \cdot \chi^2(P;Q).
  \end{align*}
\end{proof}

The next lemma considers two vectors $x,x' \in \Delta^n$ which are related by a multiplicative weights-style update with loss vector $w \in \BR^n$; the lemma relates $\chisq{x'}{x}$ to $\| w \|_x^2$. 
\begin{lemma}
  \label{lem:chi2-variance}  
There is a constant $C > 0$ so that the  following holds. Suppose that $w \in \BR^n$, $\alpha > 0$,  $\| w\|_\infty \leq \alpha/2 \leq 1/C$, and $x, x' \in \Delta^n$ satisfy, for each $j \in [n]$, 
\begin{align}
  \label{eq:xxp-w}
x'(j) = \frac{x(j) \cdot \exp(w(j))}{\sum_{k \in [n]} x(k) \cdot \exp(w(k))}.
\end{align}
Then
$$
(1 - C \alpha) \cdot \Var{x}{w} \leq \chi^2(x';x) \leq (1 + C \alpha) \Var{x}{w}.
$$
\end{lemma}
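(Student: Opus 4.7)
The plan is to exploit the shift-invariance of both sides and then do a careful Taylor expansion, tracking the orders of the error terms in $\alpha$.

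First, I would observe that both $x'$ and $\Var{x}{w}$ are invariant under replacing $w$ by $w - c \cdot \bbone$ for any $c \in \BR$. Indeed, the normalizing constant $Z$ just picks up a factor of $e^{-c}$ which cancels in \eqref{eq:xxp-w}, and the variance is unchanged. Thus, setting $\mu := \sum_k x(k) w(k) = \EE_x[w]$ and replacing $w$ by $w - \mu \cdot \bbone$, I may assume $\mu = 0$, at the cost of weakening the hypothesis to $\|w\|_\infty \leq \alpha$ (which will only change the constant $C$). Under this normalization, $V := \Var{x}{w} = \EE_x[w(\cdot)^2]$. Rewriting the $\chi^2$ divergence from its definition gives
\begin{align*}
\chisq{x'}{x} = \sum_j x(j) \left( \frac{x'(j)}{x(j)} - 1 \right)^2 = \frac{1}{Z^2} \sum_j x(j) \cdot (e^{w(j)} - Z)^2,
\end{align*}
so it suffices to show that the numerator is $V \cdot (1 \pm O(\alpha))$ while $Z^2 = 1 \pm O(\alpha)$.

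Next, I would Taylor-expand $e^{w(j)} = 1 + w(j) + \tfrac{1}{2} w(j)^2 + r(j)$, with the standard remainder bound $|r(j)| \leq C_0 |w(j)|^3 \leq C_0 \alpha \cdot w(j)^2$ valid since $\|w\|_\infty \leq \alpha$ is small. Averaging over $x$ and using $\EE_x[w] = 0$ yields $Z = 1 + \tfrac{1}{2} V + r_Z$ with $|r_Z| \leq C_0 \alpha V$, so in particular $|Z - 1| = O(\alpha^2 + \alpha V) = O(\alpha)$ and $Z^2 = 1 + O(\alpha)$. Subtracting, I obtain the pointwise identity
\begin{align*}
e^{w(j)} - Z = w(j) + \tfrac{1}{2}(w(j)^2 - V) + (r(j) - r_Z).
\end{align*}

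The core computation is then to square this and take the $x$-expectation, showing that all cross terms and higher-order terms are $O(\alpha V)$. The leading term contributes $\EE_x[w(j)^2] = V$. The cross term between $w(j)$ and $\tfrac{1}{2}(w(j)^2 - V)$ equals $\tfrac{1}{2} \EE_x[w^3]$, which is bounded by $\tfrac{\alpha}{2} V$ since $|w(j)| \leq \alpha$. The squared centered-second-moment term $\tfrac{1}{4}\EE_x[(w^2 - V)^2] \leq \tfrac{1}{4}\EE_x[w^4] \leq \tfrac{\alpha^2}{4} V$ is $O(\alpha^2 V)$. The remainder contributions are handled by Cauchy--Schwarz with $\EE_x[(r(j) - r_Z)^2] = O(\alpha^2 V)$, giving $O(\alpha V)$ cross terms. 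Summing gives $\EE_x[(e^{w(j)} - Z)^2] = V \cdot (1 \pm O(\alpha))$, and combining with $Z^2 = 1 + O(\alpha)$ delivers the desired two-sided bound.

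The main obstacle is purely bookkeeping: one has to check that every error term really is of relative order $O(\alpha)$ (not $O(1)$), which works because the linear term $w(j)$ has $x$-mean zero after the shift, so the leading contribution cleanly isolates $V$. The fact that $\EE_x[w] = 0$ after the shift is what upgrades the naive relative error $O(1)$ (that one would get from na\"ively Taylor-expanding without centering) to the sharper $O(\alpha)$.
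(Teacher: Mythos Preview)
Your proposal is correct and follows essentially the same approach as the paper: both arguments first center $w$ so that $\E_x[w]=0$ (at the cost of $\|w\|_\infty \le \alpha$), rewrite $\chi^2(x';x)$ in terms of moments of $e^W$ under $x$, and then Taylor-expand to isolate $\Var{x}{w}$ as the leading term with $O(\alpha)$ relative error. The only organizational difference is that the paper bounds $\E[e^{2W}]$ and $(\E e^W)^2$ separately via two-sided quadratic approximations of $e^z$, whereas you expand $e^{w(j)}-Z$ pointwise and square; these are equivalent computations.
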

\begin{proof}
Let $w' = w - \lng x, w \rng \bbone$, where $\bbone$ denotes the all-1s vector. Note that $\Var{x}{w} = \Var{x}{w'}$, and that if we replace $w$ with $w'$, (\ref{eq:xxp-w}) remains true. Moreover, $\| w' \|_\infty \leq 2 \| w \|_\infty \leq \alpha$. Thus, by replacing $w$ with $w'$, we may assume from here on that $\lng w,x \rng = 0$ and that $\| w \|\leq \alpha$. 
  
  Note that
  $$
\chi^2(x' ; x) = -1 + \sum_{i=1}^n x(i) \cdot (x'(i) / x(i))^2 = -1 + \E \left( \frac{\exp(W)}{\E \exp(W)} \right)^2,
$$
where $W$ is a random variable that takes values $w(j)$ with probability $x(j)$.  As long as $C$ is a sufficiently large constant, we have that, for all $z$ satisfying $|z| \leq \alpha$,
\begin{equation}
  \label{eq:z-rel-ps}
1 + z + (1-C\alpha) z^2 / 2 \leq \exp(z) \leq 1 + z + (1 + C\alpha)z^2 / 2.
\end{equation}
Thus, for a sufficiently large constant $C_0'$, we have, for all $z$ satisfying $|z| \leq \alpha$,
\begin{equation}
  \label{eq:exp-2z}
1 + 2z + (2 - C_0'\alpha) z^2 \leq \exp(z)^2 \leq 1 + 2z + (2 + C_0'\alpha) z^2.
\end{equation}
Moreover, since $\E W = 0$, we have from (\ref{eq:z-rel-ps}) that $1 + (1-C\alpha) \E W^2/2 \leq \E \exp(W) \leq 1 + (1+C\alpha) \E W^2/2$. For a sufficiently large constant $C_1'$ it follows that
\begin{equation}
  \label{eq:exp-exp-2z}
1 + (1-C_1'\alpha) \E W^2 \leq (\E \exp(W))^2 \leq 1 + (1+C_1'\alpha) \E W^2.
\end{equation}
Combining (\ref{eq:exp-2z}) and (\ref{eq:exp-exp-2z}) and again using the fact that $\E W = 0$, we get, for some sufficiently large constant $C''$, as long as $\alpha < 1/C_1'$,
\begin{align}
  (1 - C'' \alpha) \E W^2 \leq &  -1 + \frac{1 + (2-C_0'\alpha) \E W^2}{1 + (1 + C_1'\alpha) \E W^2} \nonumber\\
  \leq &  -1 + \frac{\E (\exp(W)^2)}{(\E \exp(W))^2} \nonumber\\
  \leq & -1 + \frac{1 + (2 + C_0'\alpha) \E W^2}{1 + (1-C_1'\alpha) \E W^2} \nonumber\\
  \leq &   (1 + C'' \alpha) \E W^2.\nonumber
\end{align}
By the assumption that $\lng w,x \rng = 0$, we have $\E W^2 = \Var{x}{w}$, and thus the above gives the desired result.
\end{proof}

We will need the following standard lemma:
\begin{lemma}[\cite{rakhlin_online_2013}, Eq.~(26)]
  \label{lem:bregman-kl}
  For any $n \in \BN$, $\ell \in \BR^n$, $y \in \Delta^n$, if it holds that $x = \argmin_{x' \in \Delta^n} \lng x', \ell \rng + \kld{x'}{y}$, then for any $z \in \Delta^n$,
  \begin{align}
\lng x - z, \ell \rng \leq \kld{z}{y} - \kld{z}{x} - \kld{x}{y}.\nonumber
  \end{align}
\end{lemma}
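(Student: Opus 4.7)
The plan is to derive this standard three-point inequality (the generalized Pythagorean identity for mirror descent with entropic regularizer) directly from the first-order optimality condition at the constrained minimizer $x$, followed by elementary algebra with the definition of KL divergence.

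First I would set $f(x') := \lng x', \ell \rng + \kld{x'}{y}$; this is strictly convex, with closed-form minimizer $x(j) \propto y(j)\exp(-\ell(j))$ lying in the relative interior of $\Delta^n$ (assuming $y$ has full support; I reduce to this case in the last step). The gradient on the positive orthant is $\nabla f(x') = \ell + \log(x'/y) + \bbone$, where $\log$ acts coordinate-wise and $\bbone$ is the all-ones vector. Since only the equality constraint $\sum_j x'(j) = 1$ is active at the optimum, the first-order optimality condition reads $\lng \ell + \log(x/y) + \bbone,\ z - x \rng \geq 0$ for every $z \in \Delta^n$. The $\bbone$-term drops out since $\lng \bbone, z - x \rng = 1 - 1 = 0$, leaving $\lng x - z, \ell \rng \leq \lng \log(x/y),\ z - x \rng$.

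Next I would expand the right-hand side via the definition of KL:
\[
\lng \log(x/y),\ z - x \rng = \sum_j z(j)\log\frac{x(j)}{y(j)} - \kld{x}{y},
\]
and then split $\log(x(j)/y(j)) = \log(z(j)/y(j)) - \log(z(j)/x(j))$ coordinate-wise (using the convention $0\log 0 = 0$ for indices with $z(j) = 0$) to obtain $\sum_j z(j)\log(x(j)/y(j)) = \kld{z}{y} - \kld{z}{x}$. Substituting this into the previous display yields exactly the claimed bound.

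The only obstacle I anticipate is the mild boundary case where $y$ fails to have full support: then $\kld{z}{y}$ and $\kld{x}{y}$ are $+\infty$ unless $\mathrm{supp}(z),\ \mathrm{supp}(x) \subseteq \mathrm{supp}(y)$. In that case I would simply restrict the whole argument to $\mathrm{supp}(y)$, where the minimizer is again interior and the steps above go through verbatim; outside $\mathrm{supp}(y)$ the right-hand side of the target inequality is $+\infty$ whenever $z$ has mass there, so the bound holds trivially.
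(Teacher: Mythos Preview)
Your proof is correct and is the standard derivation of the three-point (generalized Pythagorean) inequality for the entropic mirror-descent step: first-order optimality at the interior minimizer $x(j)\propto y(j)\exp(-\ell(j))$ gives $\langle \ell + \log(x/y),\, z-x\rangle \geq 0$, and the algebraic identity $\sum_j z(j)\log\frac{x(j)}{y(j)} = \kld{z}{y}-\kld{z}{x}$ then yields exactly the claimed bound (indeed with equality here, since $x$ lies in the relative interior of $\Delta^n$). Your treatment of the boundary case is also fine.

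There is nothing to compare: the paper does not supply a proof of this lemma at all. It is stated as a known result with a citation to \cite{rakhlin_online_2013}, Eq.~(26), and used as a black box in the proof of Lemma~\ref{lem:omwu-localnorm}. Your argument is precisely the standard one underlying that cited equation.
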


For $t \in [T]$, we define the vector $\til x_i\^t \in \Delta^{n_i}$ by
\begin{align}
  \label{eq:def-tilxi}
\til x_i\^t(j) := \frac{x_i\^{t}(j) \cdot \exp(-\eta \cdot (\ell_i\^t (j) - \ell_i\^{t-1}(j)))}{\sum_{k \in [n_i]} x_i\^{t}(k) \cdot \exp(-\eta \cdot (\ell_i\^t (k) - \ell_i\^{t-1}(k)))}.
\end{align}
Additionally define $\til x_i\^0 := (1/n_i, \ldots, 1/n_i)$ to be the uniform distribution over $[n_i]$.

The next lemma, Lemma \ref{lem:omwu-localnorm} is very similar to \cite[Lemma 3]{rakhlin_online_2013}, and is indeed essentially shown in the course of the proof of that lemma. Note that no boundedness assumption is placed on the vectors $\ell_i\^t$ in Lemma \ref{lem:omwu-localnorm}. For completeness we provide a full proof of the lemma.
\begin{lemma}[Refinement of Lemma 3, \cite{rakhlin_online_2013}]
  \label{lem:omwu-localnorm}
  Suppose that any player $i \in [m]$ follows the \Opthedge updates (\ref{eq:opt-hedge}) with step size $\eta > 0$, for an arbitrary sequence of losses $\ell_i\^1, \ldots, \ell_i\^T \in \BR^{n_i}$. For any vector $x^\st \in \Delta^{n_i}$, it holds that
  \begin{align}
  \hspace{-0.8cm}  \sum_{t=1}^T \lng x_i\^t - x^\st, \ell_i\^t \rng \leq  \frac{\log n_i}{\eta} + \sum_{t=1}^T \normst{x_i\^t}{ x_i\^t - \tilde x_i\^t } \sqrt{\Var{x_i\^t}{\ell_i\^t - \ell_i\^{t-1}}}
    - \frac{1}{\eta} \sum_{t=1}^T \kld{\tilde x_i\^t}{x_i\^t} - \frac{1}{\eta} \sum_{t=1}^T \kld{x_i\^t}{ \tilde x_i\^{t-1}} .\label{eq:rakhlin-refinement}
  \end{align}
\end{lemma}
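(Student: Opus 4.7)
The plan is to interpret Optimistic Hedge as two coupled KL-proximal updates and then apply Lemma~\ref{lem:bregman-kl} to each of them. Concretely, using the standard FTRL/Optimistic FTRL interpretation with negative entropy, one checks from the definitions (\ref{eq:opt-hedge}) and (\ref{eq:def-tilxi}) that $\tilde x_i^{(t)} = \arg\min_{x' \in \Delta^{n_i}} \langle x', \eta \ell_i^{(t)} \rangle + \KL(x' \| \tilde x_i^{(t-1)})$ and $x_i^{(t)} = \arg\min_{x' \in \Delta^{n_i}} \langle x', \eta \ell_i^{(t-1)} \rangle + \KL(x' \| \tilde x_i^{(t-1)})$; the first is the ``corrected'' step that uses the just-observed loss, and the second is the ``predicted'' step using $\ell_i^{(t-1)}$ as a one-step-ahead predictor of $\ell_i^{(t)}$. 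These identifications are easy to verify by writing the closed-form solutions and checking the multiplicative-weights formulas telescope correctly.

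Next I would invoke Lemma~\ref{lem:bregman-kl} on each proximal step. Applied to the first with $z = x^\st$, it yields $\langle \tilde x_i^{(t)} - x^\st, \ell_i^{(t)}\rangle \le \tfrac{1}{\eta}\bigl(\KL(x^\st\|\tilde x_i^{(t-1)}) - \KL(x^\st\|\tilde x_i^{(t)}) - \KL(\tilde x_i^{(t)}\|\tilde x_i^{(t-1)})\bigr)$; applied to the second with $z = \tilde x_i^{(t)}$, it yields $\langle x_i^{(t)} - \tilde x_i^{(t)}, \ell_i^{(t-1)}\rangle \le \tfrac{1}{\eta}\bigl(\KL(\tilde x_i^{(t)}\|\tilde x_i^{(t-1)}) - \KL(\tilde x_i^{(t)}\|x_i^{(t)}) - \KL(x_i^{(t)}\|\tilde x_i^{(t-1)})\bigr)$. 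Then I would use the decomposition
\[
\langle x_i^{(t)} - x^\st, \ell_i^{(t)}\rangle = \langle \tilde x_i^{(t)} - x^\st, \ell_i^{(t)}\rangle + \langle x_i^{(t)} - \tilde x_i^{(t)}, \ell_i^{(t-1)}\rangle + \langle x_i^{(t)} - \tilde x_i^{(t)}, \ell_i^{(t)} - \ell_i^{(t-1)}\rangle,
\]
so that summing over $t$ telescopes the $\KL(\cdot\|\tilde x_i^{(t-1)})$ terms involving $x^\st$ (leaving the boundary term $\KL(x^\st \| \tilde x_i^{(0)}) \le \log n_i$ since $\tilde x_i^{(0)}$ is uniform), while the two occurrences of $\KL(\tilde x_i^{(t)}\|\tilde x_i^{(t-1)})$ cancel exactly, leaving behind precisely the negative terms $-\tfrac{1}{\eta}\sum_t \KL(\tilde x_i^{(t)}\|x_i^{(t)}) - \tfrac{1}{\eta}\sum_t \KL(x_i^{(t)}\|\tilde x_i^{(t-1)})$ that appear in (\ref{eq:rakhlin-refinement}).

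The remaining term $\langle x_i^{(t)} - \tilde x_i^{(t)}, \ell_i^{(t)} - \ell_i^{(t-1)}\rangle$ is bounded via a local-norm Cauchy--Schwarz: $|\langle u,v\rangle| \le \|u\|_{x_i^{(t)}}^\st \cdot \|v\|_{x_i^{(t)}}$. The slightly delicate point — and the main thing to get right — is to replace the local-norm $\|\ell_i^{(t)} - \ell_i^{(t-1)}\|_{x_i^{(t)}}$ (which is $\sqrt{\E_{x_i^{(t)}}[(\ell_i^{(t)} - \ell_i^{(t-1)})^2]}$ and contains a mean-shift term) by the variance $\sqrt{\Var{x_i^{(t)}}{\ell_i^{(t)} - \ell_i^{(t-1)}}}$ appearing in the statement. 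This is handled by the standard centering trick: since $x_i^{(t)} - \tilde x_i^{(t)}$ lies in the zero-sum subspace $\{v : \langle v, \bbone \rangle = 0\}$, we may subtract from $\ell_i^{(t)} - \ell_i^{(t-1)}$ any scalar multiple of $\bbone$ without changing the inner product; choosing the constant $c := \langle x_i^{(t)}, \ell_i^{(t)} - \ell_i^{(t-1)}\rangle$ makes the $\|\cdot\|_{x_i^{(t)}}$ norm of the centered vector equal exactly $\sqrt{\Var{x_i^{(t)}}{\ell_i^{(t)}-\ell_i^{(t-1)}}}$.

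I expect the main obstacle to be purely bookkeeping: keeping track of the indices so that the $\KL(\tilde x_i^{(t)}\|\tilde x_i^{(t-1)})$ terms introduced by the two applications of Lemma~\ref{lem:bregman-kl} cancel cleanly and that the telescoping of $\KL(x^\st\|\tilde x_i^{(t)})$ terminates at $t=0$ with the uniform distribution. No boundedness of $\ell_i^{(t)}$ is used anywhere in this argument, which matches the remark preceding the statement. Collecting the bounds yields exactly (\ref{eq:rakhlin-refinement}).
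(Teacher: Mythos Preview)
Your proposal is correct and follows essentially the same approach as the paper: the same three-term decomposition of $\langle x_i\^t - x^\st, \ell_i\^t\rangle$, the same two applications of Lemma~\ref{lem:bregman-kl} (to $\tilde x_i\^t$ with $z=x^\st$ and to $x_i\^t$ with $z=\tilde x_i\^t$), the same cancellation of the $\kld{\tilde x_i\^t}{\tilde x_i\^{t-1}}$ terms and telescoping of $\kld{x^\st}{\tilde x_i\^{t}}$, and the same centering-by-$\langle x_i\^t,\ell_i\^t-\ell_i\^{t-1}\rangle\bbone$ trick to convert the local norm into a variance. The paper's proof differs only in the order in which these steps are presented.
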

\begin{proof}
  For any $x^\st \in \Delta^{n_i}$, it holds that
  \begin{align}
    \label{eq:add-subtract-mt}
\lng x_i\^t - x^\st, \ell_i\^t \rng = \lng x_i\^t - \til x_i\^t, \ell_i\^t - \ell_i\^{t-1} \rng + \lng x_i\^t - \til x_i\^t, \ell_i\^{t-1} \rng + \lng \til x_i\^t - x^\st, \ell_i\^t \rng.
  \end{align}
  For $t \in [T]$, set $c\^t = \lng x_i\^t, \ell_i\^t - \ell_i\^{t-1} \rng$. Using the definition of the dual norm and the fact $\lng x_i\^t - \til x_i\^t, \bbone \rng = 0$, we have \noah{check this re the particular choice of norm/dual norm}
  \begin{align}
    \lng x_i\^t - \til x_i\^t, \ell_i\^t - \ell_i\^{t-1} \rng = & \lng x_i\^t - \til x_i\^t, \ell_i\^t - \ell_i\^{t-1} - c\^t \bbone \rng \nonumber\\
    \leq & \normst{x_i\^t}{x_i\^t - \til x_i\^t} \cdot \norm{x_i\^t}{\ell_i\^t - \ell_i\^{t-1} - c\^t \bbone}\nonumber\\
    \leq &  \normst{x_i\^t}{x_i\^t - \til x_i\^t} \cdot \sqrt{\Var{x_i\^t}{\ell_i\^t - \ell_i\^{t-1}}}\label{eq:holder-xx}.
  \end{align}
  It is immediate from the definitions of $\til x_i\^t$ (in (\ref{eq:def-tilxi})) and $x_i\^t$ (in (\ref{eq:opt-hedge})) that for $j \in [n_i]$, \noah{check this}
  \begin{align}
x_i\^t(j) = \frac{\til x_i\^{t-1}(j) \cdot \exp(- \eta \cdot \ell_i\^{t-1}(j))}{\sum_{k \in [n_i]} \til x_i\^{t-1}(k) \cdot \exp(- \eta \cdot \ell_i\^{t-1}(k))} = \argmin_{x \in \Delta^{n_i}} \left\lng x, \eta \cdot \ell_i\^{t-1} \right\rng + \kld{x}{\til x_i\^{t-1}}\label{eq:xit-tilxitm1}
  \end{align}
  Using Lemma \ref{lem:bregman-kl} with $x = x_i\^t, \ell = \eta \ell_i\^{t-1}, y = \til x_i\^{t-1}, z = \til x_i\^t$, we obtain
  \begin{align}
\lng x_i\^t - \til x_i\^t, \ell_i\^{t-1} \rng \leq \frac{1}{\eta} \kld{\til x_i\^t}{\til x_i\^{t-1}} - \frac{1}{\eta} \kld{\til x_i\^t}{x_i\^t} - \frac{1}{\eta} \kld{x_i\^t}{\til x_i\^{t-1}}\label{eq:xxltm1}.
  \end{align}
  Next, we note that, again by (\ref{eq:def-tilxi}) and (\ref{eq:opt-hedge}), for $j \in [n_i]$,
  \begin{align}
\til x_i\^t(j) = \frac{\til x_i\^{t-1}(j) \cdot \exp(-\eta \cdot \ell_i\^t(j))}{\sum_{k \in [n_i]} \til x_i\^{t-1}(k) \cdot \exp(-\eta \cdot \ell_i\^t(k))} = \argmin_{x \in \Delta^{n_i}} \left \lng x, \eta \cdot \ell_i\^t \right\rng + \kld{x}{\til x_i\^{t-1}}\nonumber.
  \end{align}
  Using Lemma \ref{lem:bregman-kl} with $x = \til x_i\^t, \ell = \eta \ell_i\^t, y = \til x_i\^{t-1}, z = x^\st$, we obtain
  \begin{align}
\lng \til x_i\^t - x^\st, \ell_i\^t \rng \leq \frac{1}{\eta} \kld{x^\st}{\til x_i\^{t-1}} - \frac{1}{\eta} \kld{x^\st}{\til x_i\^t} - \frac{1}{\eta} \kld{\til x_i\^t}{\til x_i\^{t-1}}\label{eq:xxstlt}.
  \end{align}
  By (\ref{eq:add-subtract-mt}), (\ref{eq:holder-xx}), (\ref{eq:xxltm1}), and (\ref{eq:xxstlt}), we have
  \begin{align}
    \lng x_i\^t - x^\st, \ell_i\^t \rng
    \leq & \normst{x_i\^t}{x_i\^t - \til x_i\^t} \cdot \sqrt{\Var{x_i\^t}{\ell_i\^t - \ell_i\^{t-1}}} \nonumber\\
    & + \frac{1}{\eta} \kld{\til x_i\^t}{\til x_i\^{t-1}} - \frac{1}{\eta} \kld{\til x_i\^t}{x_i\^t} - \frac{1}{\eta} \kld{x_i\^t}{\til x_i\^{t-1}} \nonumber\\
    & + \frac{1}{\eta} \kld{x^\st}{\til x_i\^{t-1}} - \frac{1}{\eta} \kld{x^\st}{\til x_i\^t} - \frac{1}{\eta} \kld{\til x_i\^t}{\til x_i\^{t-1}}\nonumber\\
    = & \normst{x_i\^t}{x_i\^t - \til x_i\^t} \cdot \sqrt{\Var{x_i\^t}{\ell_i\^t - \ell_i\^{t-1}}} +  \frac{1}{\eta} \kld{x^\st}{\til x_i\^{t-1}} - \frac{1}{\eta} \kld{x^\st}{\til x_i\^t}\nonumber\\
    &  -\frac{1}{\eta} \kld{\til x_i\^t}{x_i\^t} - \frac{1}{\eta} \kld{x_i\^t}{\til x_i\^{t-1}}\label{eq:xxl-reg}.
  \end{align}
  The statement of the lemma follows by summing (\ref{eq:xxl-reg}) over $t \in [T]$ and using the fact that for any choice of $x^\st$, $\kld{x^\st}{x_i\^{0}} \leq \log n_i$. 
\end{proof}

\subsection{Proof of Lemma \ref{lem:omwu-local}}
Now we are ready to prove Lemma \ref{lem:omwu-local}. For convenience we restate the lemma. 
\begin{replemma}{lem:omwu-local}[restated]
  There is a constant $C > 0$ so that the following holds. 
Suppose any player $i \in [m]$ follows the \Opthedge updates (\ref{eq:opt-hedge}) with step size $0 < \eta < 1/C$, for an arbitrary sequence of losses $\ell_i\^1, \ldots, \ell_i\^T \in [0,1]^{n_i}$. Then for any vector $x^\st \in \Delta^{n_i}$, it holds that
\begin{align}
  \label{eq:adv-var-bound-2}
      \sum_{t=1}^T \lng x_i\^t - x^\st, \ell_i\^t \rng \leq \frac{\log n_i}{\eta} + \sum_{t=1}^T \left(\frac{\eta}{2} + C \eta^2\right)  \Var{x_i\^t}{\ell_i\^t - \ell_i\^{t-1}} - \sum_{t=1}^T  \frac{(1-C\eta)\eta}{2} \cdot \Var{x_i\^{t}}{\ell_i\^{t-1}}.
\end{align}
\end{replemma}
\begin{proof}
  Lemma \ref{lem:omwu-localnorm} gives that, for any $x^\st \in \Delta^{n_i}$,
    \begin{align}
      \hspace{-0.8cm}\sum_{t=1}^T \lng x_i\^t - x^\st, \ell_i\^t \rng \leq  \frac{\log n_i}{\eta} + \sum_{t=1}^T \normst{x_i\^t}{ x_i\^t - \tilde x_i\^t } \sqrt{\Var{x_i\^t}{\ell_i\^t - \ell_i\^{t-1}}}
      - \frac{1}{\eta} \sum_{t=1}^T \kld{\tilde x_i\^t}{x_i\^t} - \frac{1}{\eta} \sum_{t=1}^T \kld{x_i\^t}{ \tilde x_i\^{t-1}} .\label{eq:localnorm-starter}
    \end{align}
    Note that for any vectors $x,x' \in \Delta^{n_i}$, if there is a vector $\ell \in \BR^{n_i}$ so that for all $j \in [n_i]$, $x'(j) = \frac{x(j) \cdot \exp(\eta \cdot \ell(j))}{\sum_k x(j) \cdot \exp(\eta \cdot \ell(k))}$, we have that
$$
\exp(-2 \eta \| \ell \|_\infty) \leq \left \| \frac{x'}{x} \right\|_\infty \leq \exp(2 \eta \| \ell \|_\infty).
$$
Therefore, by (\ref{eq:def-tilxi}) and (\ref{eq:xit-tilxitm1}), respectively, we obtain that, for $\eta \leq 1/4$,
\begin{align}
\exp(-2 \eta \| \ell_i\^t - \ell_i\^{t-1} \|_\infty)  & \leq \norm{\infty}{\frac{ \til x_i\^t}{x_i\^t}} \leq \exp(2 \eta \| \ell_i\^t - \ell_i\^{t-1} \|_\infty) \leq \exp(4 \eta) \leq  1 + 8\eta \nonumber\\
\exp(-2 \eta \| \ell_i\^{t-1} \|_\infty)  & \leq \norm{\infty}{\frac{  x_i\^t}{\til x_i\^{t-1}}} \leq \exp(2 \eta \| \ell_i\^{t-1} \|_\infty) \leq\exp(2\eta) \leq 1 + 4\eta.\label{eq:xi-tilxi-infty-bnd}
\end{align}
(Above we have also used that $\| \ell_i\^t \|_\infty \leq 1$ for all $t$.) 
Thus, for $\eta \leq \frac{1}{16}$, we can apply Lemma \ref{lem:kl-div-lb} and show, for a sufficiently large constant $C_0$, 
\begin{align}
  \KL(\tilde x_i\^t; x_i\^t) & \geq \chi^2(\tilde x_i\^t; x_i\^t) \cdot (1/2 - C_0 \eta) \label{eq:kltilxi-xi} \\ 
  \KL(x_i\^t; \tilde x_i\^{t-1}) &\geq \chi^2(x_i\^t; \tilde x_i\^{t-1}) \cdot (1/2 - C_0\eta) \label{eq:klxi-tilxi}. 
\end{align}

Note also that for vectors $x,y$ we have that $\chi^2(x;y) = \left(\normst{y}{x-y} \right)^2$. By Lemma \ref{lem:chi2-variance} and (\ref{eq:def-tilxi}), we have that, for a sufficiently large constant $C_1$, as long as $\eta \leq 1/C_1$,
\begin{align}
\left( \normst{x_i\^t}{x_i\^t - \til x_i\^t} \right)^2 = \chisq{\til x_i\^t}{x_i\^t} \leq (1 + C_1\eta)\eta^2 \cdot  \Var{x_i\^t}{\ell_i\^t - \ell_i\^{t-1}}\label{eq:dualnorm-ub}
\end{align}
and 
\begin{align}
  \chi^2(\til x_i\^t; x_i\^t) \geq (1 - C_1\eta)\eta^2 \cdot \Var{x_i\^t}{\ell_i\^t - \ell_i\^{t-1}} \label{eq:chi2-lb-1}.
\end{align}

Next we lower bound $\chisq{x_i\^t}{\til x_i\^{t-1}}$ as follows, where $C_2$ denotes a sufficiently large constant: as long as $\eta \leq 1/C_2$,
\begin{align}
  \chi^2(x_i\^t; \til x_i\^{t-1}) & \geq \chi^2(\til x_i\^{t-1}; x_i\^t) \cdot \exp(-2\eta) \label{eq:flip-chi2}\\
  & \geq (1-C_2\eta) \eta^2 \cdot\Var{x_i\^t}{\ell_i\^{t-1}}\label{eq:xitilxi-var-lb},
\end{align}
where (\ref{eq:flip-chi2}) follows from Lemma \ref{lem:chi2-close} and (\ref{eq:xi-tilxi-infty-bnd}), and (\ref{eq:xitilxi-var-lb}) follows from Lemma \ref{lem:chi2-variance} and (\ref{eq:xit-tilxitm1}).

Combining (\ref{eq:localnorm-starter}), (\ref{eq:kltilxi-xi}), (\ref{eq:klxi-tilxi}), (\ref{eq:dualnorm-ub}), (\ref{eq:chi2-lb-1}), and (\ref{eq:xitilxi-var-lb}) gives that for a sufficiently large constant $C$, as long as $\eta < 1/C$, \noah{more explanation maybe...}
\begin{align*}
    \sum_{t=1}^T \lng x_i\^t - x^\st, \ell_i\^t \rng & \leq \frac{\log n_i}{\eta} + \sum_{t=1}^T (\eta/2 + C \eta^2) \cdot \Var{x_i\^t}{\ell_i\^t - \ell_i\^{t-1}} - \frac{(1-C\eta) \eta}{2} \cdot \Var{x_i\^{t}}{\ell_i\^{t-1}},
\end{align*}
as desired.
\end{proof}

\section{Proofs for Section \ref{sec:upwards-ind}}
\label{sec:upwards-ind-proofs}
In this section we give the full proof of Lemma \ref{lem:dh-bound}. In Section \ref{sec:bcr-prelims} we introduce some preliminaries. In Section \ref{sec:bcr-proof} we prove Lemma \ref{lem:fd-analytic}, the ``boundedness chain rule'' for finite differences. In Section \ref{sec:dh-bound-proof} we show how to use this lemma to prove Lemma \ref{lem:dh-bound}.

\subsection{Additional preliminaries}
\label{sec:bcr-prelims}
In this section we introduce some additional notations and basic combinatorial lemmas. Definition \ref{def:shift} introduces the \emph{shift operator} $\shf{s}{}$, which like the finite difference operator $\fd{h}{}$, maps one sequence to another sequence. 
\begin{defn}[Shift operator]
  \label{def:shift}
  Suppose $L = (L\^1, \ldots, L\^T)$ is a sequence of vectors $L\^t \in \BR^n$. For integers $s \geq 0$, the \emph{$s$-shift sequence} for the sequence $L$, denoted by $\shf{s}{L}$, is the sequence $\shf{s}{L} = (\shfs{s}{L}{1}, \ldots, \shfs{s}{L}{T-s})$, defined by $\shfs{s}{L}{t} = L\^{t+s}$ for $1 \leq t \leq T-s$. 
\end{defn}

For sequences $L = (L\^1, \ldots, L\^T)$ and $K = (K\^1, \ldots, K\^T)$ of real numbers, we will denote the \emph{product sequence} as $L \cdot K$ as the sequence of vectors $ L \cdot K := (L\^1 K\^1, \ldots, L\^T K\^T)$. Lemmas \ref{lem:prod-shift} and \ref{lem:prod-multi} below are standard analogues of the product rule for finite differences. The (straightforward) proofs are provided for completeness.
\begin{lemma}[Product rule; Eq.~(2.55) of \cite{graham_concrete_1989}] 
  \label{lem:prod-shift}
  Suppose $L = (L\^1, \ldots, L\^T)$ and $K = (K\^1, \ldots, K\^T)$ are sequences of real numbers. Then the product sequence $L \cdot K$ satisfies
  \begin{equation}
\fd{1}{(L \cdot K)} = L \cdot \fd{1}{K} + \fd{1}{L} \cdot \shf{1}{K}.\nonumber
  \end{equation}
\end{lemma}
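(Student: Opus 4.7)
The plan is to prove this identity by direct computation at an arbitrary time index $t$, using the classical add-and-subtract trick that underlies the continuous product rule. Unwinding Definition \ref{def:fd} at the first order, we have $\fds{1}{(L \cdot K)}{t} = L\^{t+1}K\^{t+1} - L\^t K\^t$, so the entire statement reduces to showing, for each $1 \leq t \leq T-1$, the scalar identity
\begin{equation*}
L\^{t+1}K\^{t+1} - L\^t K\^t = L\^t (K\^{t+1} - K\^t) + (L\^{t+1} - L\^t) K\^{t+1}.
\end{equation*}

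First I would insert the term $\pm L\^t K\^{t+1}$ on the left-hand side to telescope it as
\begin{equation*}
L\^{t+1}K\^{t+1} - L\^t K\^{t+1} + L\^t K\^{t+1} - L\^t K\^t,
\end{equation*}
and then group the four terms into the two pairs $(L\^{t+1} - L\^t) K\^{t+1}$ and $L\^t (K\^{t+1} - K\^t)$. By Definition \ref{def:fd} the second factor of the first pair is $\fds{1}{L}{t}$ multiplied by $\shfs{1}{K}{t} = K\^{t+1}$ (Definition \ref{def:shift}), while the second pair equals $L\^t \cdot \fds{1}{K}{t}$. Matching these against the definitions of the product sequences $L \cdot \fd{1}{K}$ and $\fd{1}{L} \cdot \shf{1}{K}$ componentwise over $1 \leq t \leq T-1$ yields exactly the claimed identity between sequences.

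There is really no main obstacle here: the identity is a one-line algebraic manipulation, and the only subtlety worth mentioning is bookkeeping of the index ranges, namely that both $\fd{1}{(L \cdot K)}$ and the right-hand side are sequences indexed by $1 \leq t \leq T-1$ (since $\shf{1}{K}$ is only defined on this range), so no boundary issues arise. I would present the proof as a single displayed computation followed by identifying each factor with the appropriate shifted or differenced sequence.
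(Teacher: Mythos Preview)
Your proposal is correct and follows essentially the same approach as the paper: both add and subtract $L\^t K\^{t+1}$, group into $(L\^{t+1}-L\^t)K\^{t+1}$ and $L\^t(K\^{t+1}-K\^t)$, and identify these with $\fd{1}{L}\cdot\shf{1}{K}$ and $L\cdot\fd{1}{K}$ respectively. Your additional remark about index ranges is a nice touch but not strictly needed.
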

\begin{proof}
  We compute
\begin{align*}
    \fd{1}{(L \cdot K)}^{(t)}&=L^{(t+1)}K^{(t+1)}-L^{(t)}K^{(t)}\\
    &= L^{(t+1)}K^{(t+1)}-L^{(t)}K^{(t+1)} + L^{(t)}K^{(t+1)}-L^{(t)}K^{(t)}\\
    &= (L \cdot \fd{1}{K} + \fd{1}{L} \cdot \shf{1}{K})^{(t)}.
\end{align*}
\end{proof}

\begin{lemma}[Multivariate product rule]
  \label{lem:prod-multi}
  Suppose that $m \in \BN$ and for $1 \leq i \leq m$, $L_i = (L_i\^1, \ldots, L_i\^T)$ are sequences of real numbers. Then the product sequence $\prod_{i=1}^m L_i$ satisfies
  \begin{equation}
\fd{1}{\prod_{i=1}^m L_i} = \sum_{i=1}^m \left( \prod_{i' < i} L_{i'} \right) \cdot \fd{1}{L_i} \cdot \left(\prod_{i' > i} \shf{1}{L_{i'}} \right).\nonumber
    \end{equation}
  \end{lemma}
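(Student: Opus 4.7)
The plan is to prove this by straightforward induction on $m$, using Lemma~\ref{lem:prod-shift} (the binary product rule) as the inductive workhorse. The base case $m=1$ is trivial: both sides equal $\fd{1}{L_1}$, interpreting the empty products $\prod_{i'<1} L_{i'}$ and $\prod_{i'>1} \shf{1}{L_{i'}}$ as the constant-$1$ sequence.

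For the inductive step, I would assume the identity holds for $m-1$ sequences and set $M := \prod_{i=1}^{m-1} L_i$. Viewing $\prod_{i=1}^m L_i = M \cdot L_m$ as a binary product, Lemma~\ref{lem:prod-shift} yields
\begin{equation*}
\fd{1}{\prod_{i=1}^m L_i} \;=\; M \cdot \fd{1}{L_m} \;+\; \fd{1}{M} \cdot \shf{1}{L_m}.
\end{equation*}
The first summand is exactly $\left(\prod_{i'<m} L_{i'}\right) \cdot \fd{1}{L_m} \cdot \left(\prod_{i'>m} \shf{1}{L_{i'}}\right)$, which is the $i=m$ term on the right-hand side of the claim (using the empty-product convention). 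For the second summand, the inductive hypothesis applied to $M$ gives
\begin{equation*}
\fd{1}{M} \;=\; \sum_{i=1}^{m-1} \left(\prod_{i' < i} L_{i'}\right) \cdot \fd{1}{L_i} \cdot \left(\prod_{i < i' \leq m-1} \shf{1}{L_{i'}}\right);
\end{equation*}
multiplying pointwise by $\shf{1}{L_m}$ simply appends one more factor $\shf{1}{L_m}$ to each term, which exactly produces the $i < m$ summands in the desired identity.

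The only real point to be careful about is that the operations $\fd{1}{}$, $\shf{1}{}$, and pointwise multiplication are being applied to sequences, so one should verify at the level of a single time index $t$ that each claimed equation of sequences is a true coordinate-wise identity; this is immediate from the definitions. There is no substantive obstacle here beyond bookkeeping of the ranges $i' < i$ and $i' > i$. Combining the two pieces above completes the induction and hence the proof.
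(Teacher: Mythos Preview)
Your inductive proof is correct. The paper instead gives a direct telescoping argument at the level of a single time index: it writes
\[
\prod_{i=1}^m L_i\^{t+1}-\prod_{i=1}^m L_i\^{t}
=\sum_{i=1}^m\Bigl(\prod_{i'<i}L_{i'}\^{t}\Bigr)\bigl(L_i\^{t+1}-L_i\^{t}\bigr)\Bigl(\prod_{i'>i}L_{i'}\^{t+1}\Bigr)
\]
and then reads off the claimed sequence identity. So the paper does not invoke Lemma~\ref{lem:prod-shift} at all, whereas you build the $m$-fold formula from the binary one. Both routes are equally elementary; your approach makes the dependence on the two-sequence case explicit, while the paper's telescoping is a one-shot computation that would stand even without Lemma~\ref{lem:prod-shift}.
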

  \begin{proof}
    We compute
\begin{align*}
    \p{\fd{1}{\prod_{i=1}^m L_i}}^{(t)}&=\prod_{i=1}^m L_i^{(t+1)}-\prod_{i=1}^m L_i^{(t)}\\
    &= \sum_{i=1}^m \p{\prod_{i' \leq i}L_{i'}^{(t+1)}\prod_{i' > i}L_{i'}^{(t)} -\prod_{i' < i}L_{i'}^{(t+1)}\prod_{i' \geq i}L_{i'}^{(t)} }\\
    &= \sum_{i=1}^m \p{\prod_{i' < i}L_{i'}^{(t+1)}\cdot \prod_{i' > i}L_{i'}^{(t)} \cdot \p{L_i^{(t+1)}-L_i^{(t)}}}\\
    &= \p{\sum_{i=1}^m \left( \prod_{i' < i} L_{i'} \right) \cdot \fd{1}{L_i} \cdot \left(\prod_{i' > i} \shf{1}{L_{i'}} \right)}^{(t)}.
\end{align*}
\end{proof}

Lemma \ref{lem:hk-function-bound} and Lemma \ref{lem:HL}, which is used in the proof of the former, are used to bound certain sums with many terms in the proof of Lemma \ref{lem:fd-analytic}. To state Lemma \ref{lem:HL} we make one definition. For positive integers $k, m$ and any $h,C > 0$, define
\begin{equation}
    R_{h,m,k,C} = \sum_{0 \leq n_1,\cdots,n_k \leq m} \p{\frac{\prod_{i=1}^k n_i^{n_i}}{h^{\sum_{i=1}^k n_i}}}^{C}\nonumber,
  \end{equation}
  where the sum is over integers $n_1, \ldots, n_k$ satisfying $0 \leq n_i \leq m$ for $i \in [k]$.  In the definition of $R_{h,m,k,C}$, the quantity $0^0$ (which arises when some $n_i = 0$) is interpreted as 1.

\begin{lemma}\label{lem:HL}
For any positive integers $k,m$ and any $h,C > 0$ so that $m\leq h/2$, $C \geq 2$, and $h \geq 8$, then 
\begin{equation}
    R_{h,m,k,C} \leq \exp\p{\frac{2k}{h^{C}}}\nonumber.
\end{equation}
\end{lemma}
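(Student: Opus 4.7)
The plan is to exploit the fact that the summand of $R_{h,m,k,C}$ factors over the $k$ indices: $\prod_{i=1}^k n_i^{n_i}/h^{\sum_i n_i} = \prod_{i=1}^k (n_i^{n_i}/h^{n_i})$, so
\[
R_{h,m,k,C} \;=\; \left( \sum_{n=0}^m (n^n/h^n)^C \right)^{\!k} \;=\; S^k, \qquad S := \sum_{n=0}^m (n/h)^{nC},
\]
using the convention $0^0 = 1$. It then suffices to prove $S \leq 1 + 2/h^C$; the desired bound $R_{h,m,k,C}\leq (1 + 2/h^C)^k \leq \exp(2k/h^C)$ follows from $\log(1+x)\leq x$.

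The $n=0$ and $n=1$ terms of $S$ contribute $1$ and $1/h^C$ respectively, so the task reduces to showing $\sum_{n=2}^m (n/h)^{nC} \leq 1/h^C$, equivalently $\sum_{n=2}^m s_n \leq 1$ for $s_n := n^{nC}/h^{(n-1)C} = n^C (n/h)^{(n-1)C}$.

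The main obstacle is bounding this tail, and this is where both hypotheses $m \leq h/2$ and $h \geq 8$ are needed. The sequence $s_n$ is log-convex in $n$ with minimum near $n = h/e$, so its largest values occur near the endpoints $n = 2$ and $n = m$. The naive bound $n/h \leq 1/2$ (from $m \leq h/2$) yields $s_n \leq (n/2^{n-1})^C$, which unfortunately equals $1$ at $n = 2$ and thus cannot be summed to $\leq 1$. My fix is to treat the first couple of terms separately using the stronger hypothesis $h \geq 8$: for $n=2$, $s_2 = (4/h)^C \leq (1/2)^C \leq 1/4$; for $n=3$, $s_3 = (27/h^2)^C \leq (27/64)^C \leq (27/64)^2$. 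For $n \geq 4$, the bound $s_n \leq (n/2^{n-1})^C$ is already good enough because $n/2^{n-1} \leq 1/2$, so together with $C \geq 2$ we get $s_n \leq (n/2^{n-1})^2$. The tail $\sum_{n \geq 4} (n/2^{n-1})^2 = 4 \sum_{n \geq 4} n^2/4^n$ evaluates to a numerical constant comfortably below $0.5$ via the standard identity $\sum_{n \geq 0} n^2 x^n = x(1+x)/(1-x)^3$. Adding the contributions from $n=2$, $n=3$, and $n \geq 4$ yields a total strictly below $1$, completing the argument.
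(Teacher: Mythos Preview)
Your proof is correct and takes essentially the same approach as the paper: both factorize $R_{h,m,k,C} = S^k$ with $S = \sum_{n=0}^m (n/h)^{nC}$ and then establish $S \leq 1 + 2/h^C$. The only difference is in how the tail $\sum_{n\geq 2}(n/h)^{nC}$ is bounded by $1/h^C$: the paper uses convexity of $i\mapsto (i/h)^{Ci}$ to bound each of the $m-1$ terms by the endpoint maximum $\max\bigl((2/h)^{2C},(m/h)^{mC}\bigr)$ and then checks that $(h/2)$ times either endpoint is at most $(1/h)^C$, whereas you handle $n=2,3$ by direct computation and bound $n\geq 4$ via $s_n\leq (n/2^{n-1})^2$ followed by an explicit series evaluation.
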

\begin{proof}[Proof of Lemma \ref{lem:HL}]
We may rewrite $R_{h,m,k,C}$ and then upper bound it as follows:
\begin{align}
    R_{h,m,k,C} &= \p{\sum_{j=0}^{m} \p{\frac{j}{h}}^{Cj}}^k\nonumber\\
        &\leq \p{1+\p{\frac{1}{h}}^{C}+(m-1)\max\p{\p{\frac{2}{h}}^{2C},\p{\frac{m}{h}}^{mC}}}^k\label{eq:i-convex}\\
        &\leq \p{1+\p{\frac{1}{h}}^{C}+(h/2)\max\p{\p{\frac{2}{h}}^{2C},\p{\frac{1}{2}}^{hC/2}}}^k\nonumber
\end{align}
where (\ref{eq:i-convex}) follows since $\p{\frac{i}{h}}^{Ci}$ is convex in $i$ for $i \geq 0$, and therefore, in the interval $[2,m] \subseteq [2,h/2]$, takes on maximal values at the endpoints. We see
\begin{equation}
    (h/2)\p{\frac{2}{h}}^{2C}=\p{\frac{2}{h}}^{2C-1} \leq \p{\frac{1}{h}}^{C}\nonumber
\end{equation}
for $h \geq 8$ when $C \geq 2$. Also,
\begin{equation}
    (h/2)\p{\frac{1}{2}}^{hC/2} \leq \p{\frac{1}{h}}^{C}\nonumber
\end{equation}
for $h \geq 8$ when $C \geq 2$. (This inequality is easily seen to be equivalent to the fact that $(C+1)\log h - \frac{Ch}{2} \leq 1$, which follows from the fact that $\log h - h/2 \leq 0$ for $h \geq 8$ and $3\log h - h \leq 1$ for $h \geq 8$.)  Therefore,
\begin{align}
    R_{h,m,k,C} & \leq \p{1+\p{\frac{1}{h}}^{C}+(h/2)\max\p{\p{\frac{2}{h}}^{2C},\p{\frac{1}{2}}^{hC/2}}}^k\nonumber\\
        &\leq \p{1+2\p{\frac{1}{h}}^{C}}^k\nonumber\\
        &\leq \exp\p{\frac{2k}{h^{C}}}\nonumber.
\end{align}
\end{proof}

\begin{lemma}
  \label{lem:hk-function-bound}
  Fix integers $h \geq 0, k \geq 1$. 
For any function $\pi : [h] \ra [k]$, define, for each $i \in [k]$, $h_i(\pi) = \card{\set{q \in [h] | \pi(q) = i}}$.  Then, for any $C \geq 3$,
\begin{equation}
    \sum_{\pi : [h] \ra [k]}\frac{\prod_{i=1}^k h_i(\pi)^{Ch_i(\pi)}}{h^{Ch}} \leq  \max\left\{ k^7, (hk+1) \cdot \exp\left( \frac{2k}{h^{C-1}} \right)\right\}\label{eq:sum-pi-lemma}.
  \end{equation}
\end{lemma}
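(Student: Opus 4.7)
The plan is to group the sum by the multi-index profile $(h_1,\ldots,h_k) := (h_1(\pi),\ldots,h_k(\pi))$, which satisfies $\sum_{i=1}^k h_i = h$, and then apply Lemma~\ref{lem:HL} after a case split on whether any coordinate is large.

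\textbf{Step 1 (Reduction via multinomial coefficients).} The number of $\pi : [h]\to[k]$ producing a given profile $(h_1,\ldots,h_k)$ is $\binom{h}{h_1,\ldots,h_k}$. Isolating one term of the multinomial expansion $h^h = (h_1+\cdots+h_k)^h = \sum_{(h_1',\ldots,h_k')} \binom{h}{h_1',\ldots,h_k'}\prod_i h_i^{h_i'}$ yields the standard bound $\binom{h}{h_1,\ldots,h_k} \le h^h/\prod_i h_i^{h_i}$ (with the convention $0^0 = 1$). Substituting gives
\[
\sum_{\pi:[h]\to[k]} \frac{\prod_i h_i(\pi)^{C h_i(\pi)}}{h^{Ch}}
\;\le\; \sum_{\substack{(h_1,\ldots,h_k)\in\mathbb{Z}_{\ge 0}^k\\ \sum_i h_i = h}} \Bigl(\frac{\prod_i h_i^{h_i}}{h^h}\Bigr)^{C-1}.
\]
Thus it suffices to bound this (unrestricted) product sum, and the RHS is almost in the form of $R_{h,m,k,C-1}$ from Lemma~\ref{lem:HL} --- the only obstruction is that some $h_i$ could equal $h$ whereas Lemma~\ref{lem:HL} requires $m\le h/2$.

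\textbf{Step 2 (Small-$h$ case).} If $h\le 7$, I note each summand above is at most $1$ (since $\prod_i h_i^{h_i} \le h^h$ by Step~1's multinomial inequality) and there are at most $k^h \le k^7$ terms, giving the bound $k^7$ which matches the first term in the max. So from here on I assume $h\ge 8$.

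\textbf{Step 3 (Case split for $h\ge 8$).} Since $\sum_i h_i = h$, at most one index $i^*$ can have $h_{i^*} > h/2$. Split the sum into:
\begin{itemize}
\item \emph{Case A:} all $h_i \le h/2$. Relax the constraint $\sum_i h_i = h$ (permissible since all terms are non-negative) to get at most $R_{h,\lfloor h/2\rfloor,k,C-1}$, which by Lemma~\ref{lem:HL} (valid since $h\ge 8$, $C-1\ge 2$, $\lfloor h/2\rfloor \le h/2$) is at most $\exp(2k/h^{C-1})$.
\item \emph{Case B:} exactly one index $i^*$ has $h_{i^*} > h/2$. Parameterize by $i^*\in[k]$ and $j := h_{i^*}\in\{\lfloor h/2\rfloor+1,\ldots,h\}$; the remaining $k-1$ coordinates then satisfy $h_i \le h/2$. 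Again relaxing the sum constraint, the contribution is at most
\[
\sum_{i^*=1}^k \sum_{j=\lfloor h/2\rfloor+1}^h (j/h)^{(C-1)j} \cdot R_{h,\lfloor h/2\rfloor,k-1,C-1}
\;\le\; k\cdot h\cdot \exp(2k/h^{C-1}),
\]
using $(j/h)^{(C-1)j} \le 1$, at most $h$ values of $j$, and Lemma~\ref{lem:HL}.
\end{itemize}

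\textbf{Step 4 (Combine).} Adding the two cases gives $(kh+1)\exp(2k/h^{C-1})$, matching the second entry in the max. Together with the $h\le 7$ case, this proves the claim.

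The main obstacle is the fact that Lemma~\ref{lem:HL} insists on $m\le h/2$, whereas the constraint $\sum_i h_i = h$ allows some $h_i$ to be as large as $h$. The key observation that resolves this is that the sum constraint forces at most \emph{one} coordinate to exceed $h/2$, which lets the case split reduce everything to a setting where Lemma~\ref{lem:HL} applies directly; the resulting factor of $kh$ (the number of ways to choose $i^*$ and the value of $h_{i^*}$) is exactly what produces the factor $(kh+1)$ in the bound.
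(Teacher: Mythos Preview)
Your proposal is correct and follows essentially the same approach as the paper's proof: the same multinomial-coefficient reduction, the same $h\le 7$ versus $h\ge 8$ split, and the same case analysis on whether some coordinate exceeds $h/2$, each case being handled via Lemma~\ref{lem:HL}. The only cosmetic difference is that in Case~B the paper parameterizes by $d = h - h_{i^*}$ and relaxes the remaining coordinates to $h_i \le d$ (giving $R_{h,d,k-1,C-1}$), whereas you parameterize by $j = h_{i^*}$ and relax to $h_i \le \lfloor h/2\rfloor$ (giving $R_{h,\lfloor h/2\rfloor,k-1,C-1}$); both relaxations are valid and lead to the same final bound.
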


\begin{proof}
In the case that $h \leq 7$, we simply use the fact that the number of functions $\pi : [h] \ra [k]$ is $k^h \leq k^7$, and each term of the summation on the left-hand side of (\ref{eq:sum-pi-lemma}) is at most 1. In the remainder of the proof we may thus assume that $h \geq 8$. 
  
For any tuple $(h_1,\cdots,h_k)$ of non-negative integers with $\sum_{i=1}^k h_i = h$, there are ${h \choose h_1,h_2,\cdots,h_k} \leq \frac{h^h}{\prod_i h_i^{h_i}}$ (see \cite[Lemma 2.2]{csiszar_information_2004} for a proof of this inequality) functions $\pi: [h] \ra [k]$ such that $h_i(\pi)=h_i$ for all $i \in [k]$.  Combining these like terms,
\begin{align}
    \sum_{\pi : [h] \ra [k]}\frac{\prod_i h_i(\pi)^{Ch_i(\pi)}}{h^{Ch}} &\leq \sum_{\substack{h_1,\cdots,h_k \geq 0\\\sum h_i=h}} \frac{h^h}{\prod_i h_i^{h_i}} \cdot \p{\frac{\prod_i h_i^{h_i}}{h^h}}^{C}\nonumber\\
    &\leq \sum_{\substack{h_1,\cdots,h_k \geq 0\\\sum h_i=h}} \p{\frac{\prod_i h_i^{h_i}}{h^h}}^{C-1}\label{eq:sum-pi-1}.
\end{align}

We evaluate this sum in 2 cases: whether or not $h_{\max} := \max_i\{h_i\}$ is greater than $h/2$. The contribution to this sum coming from terms with $h_{\max} \leq h/2$ is 
\begin{align}
    \sum_{\substack{h_1,\cdots,h_k \geq 0\\h_1,\cdots,h_k \leq h/2\\\sum h_i=h}} \p{\frac{\prod_i h_i^{h_i}}{h^h}}^{C-1}&\leq \sum_{\substack{h_1,\cdots,h_k \geq 0\\h_1,\cdots,h_k \leq h/2\\}} \p{\frac{\prod_i h_i^{h_i}}{h^{\sum h_i}}}^{C-1}\nonumber\\
    &=R_{h,\lfloor h/2\rfloor,k,C-1}\nonumber\\
    &\leq \exp\p{\frac{2k}{h^{C-1}}}\label{eq:sum-pi-2},
\end{align}
by Lemma \ref{lem:HL}. 

We next consider the case where $h_{\max} > h/2$.  For a specific term $(h_1,\cdots,h_k)$ with $\max_i \{ h_i \} > h/2$, we know there is a unique $M \in [k]$ such that $h_M = \max_i \{ h_i \}$ since $\sum_{i=1}^k h_i = h$.  So, we can represent the contribution to the sum from this case as
\begin{align}
    \sum_{M=1}^k \sum_{\substack{h_1,\cdots,h_k \geq 0\\h_M > h/2\\\sum h_i=h}} \p{\frac{\prod_i h_i^{h_i}}{h^h}}^{C-1} &= k\sum_{\substack{h_1,\cdots,h_k \geq 0\\h_k > h/2\\\sum h_i=h}} \p{\frac{\prod_i h_i^{h_i}}{h^h}}^{C-1}\label{eq:sumprod-symmetry}\\
    &\leq k\sum_{d=0}^{\lfloor h/2\rfloor } \p{\frac{(h-d)^{h-d}}{h^{h-d}}}^{C-1}\sum_{\substack{h_1,\cdots,h_{k-1} \geq 0\\\sum h_i=d}} \p{\frac{\prod_i h_i^{h_i}}{h^d}}^{C-1}\label{eq:factor-hmax}\\
    &\leq k\sum_{d=0}^{\lfloor h/2\rfloor} \sum_{\substack{h_1,\cdots,h_{k-1} \geq 0\\h_1,\cdots,h_{k-1} \leq d}} \p{\frac{\prod_i h_i^{h_i}}{h^{\sum h_i}}}^{C-1}\nonumber\\
    &= k\sum_{d=0}^{\lfloor h/2\rfloor } R_{h,d,k-1,C-1}\nonumber\\
    &\leq kh \cdot \exp \left( \frac{2k}{h^{C-1}}\right)\label{eq:use-hl-lem},
\end{align}
where (\ref{eq:sumprod-symmetry}) follows by symmetry, (\ref{eq:factor-hmax}) follows by factoring out the contribution of $\left( \frac{h_k^{h_k}}{h^{h_k}}\right)^C$ and letting $d = h-h_k$, and (\ref{eq:use-hl-lem}) follows 
by Lemma \ref{lem:HL}.

The statement of the lemma follows from (\ref{eq:sum-pi-1}), (\ref{eq:sum-pi-2}), and (\ref{eq:use-hl-lem}). 
\end{proof}

\begin{lemma}\label{lem:softmax-ak-bound}
For $n \in \BN$, let $\xi_1, \ldots, \xi_n \geq 0$ such that $\xi_1 + \cdots + \xi_n = 1$.  For each $j\in [n]$, define $\phi_j : \BR^n \ra \BR$ to be the function
\begin{align}
\phi_j((z_1, \ldots, z_n)) = \frac{\xi_j \exp(z_j)}{\sum_{k =1}^n \xi_k \cdot \exp(z_k)}\nonumber
\end{align}
and let $P_{\phi_j}(z) = \sum_{\gamma \in \BZ_{\geq 0}^n} a_{j,\gamma} \cdot z^\gamma$ denote the Taylor series of $\phi_j$. Then for any $j \in [n]$ and any integer $k \geq 1$, 
\begin{align}
    \sum_{\gamma \in \BZ_{\geq 0}^n : \ |\gamma| = k} \card{a_{j,\gamma}} \leq \xi_j e^{k+1}\nonumber.
\end{align}
\end{lemma}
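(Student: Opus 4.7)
The plan is to construct a ``majorizing'' analytic function $\tilde\phi_j$ whose Taylor coefficients dominate $|a_{j,\gamma}|$ term-by-term and are additionally all non-negative, so that $\sum_{|\gamma|=k} \tilde a_{j,\gamma}$ can be read off by evaluating $\tilde\phi_j$ on the diagonal $(t,\ldots,t)$.

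First I would compute the derivative rule $\partial_\ell \phi_j = \phi_j(\delta_{j\ell} - \phi_\ell)$ directly. The key observation is that the function
\[
\tilde\phi_j(z) := \frac{\xi_j\, e^{z_j}}{2 - \sum_k \xi_k e^{z_k}}
\]
satisfies $\tilde\phi_j(0) = \xi_j$ (since $\sum_k \xi_k = 1$) and a short calculation yields $\partial_\ell \tilde\phi_j = \tilde\phi_j(\delta_{j\ell} + \tilde\phi_\ell)$, i.e.\ the \emph{same} recursion as $\phi_j$ but with the sign of the nonlinear term flipped to $+$. Writing $b_{j,\gamma} := \partial^\gamma \phi_j(0) = \gamma!\, a_{j,\gamma}$ and $\tilde b_{j,\gamma} := \partial^\gamma \tilde\phi_j(0) = \gamma!\, \tilde a_{j,\gamma}$, Leibniz's rule gives the matched recursions
\begin{align*}
b_{j,\gamma+e_\ell} &= \delta_{j\ell}\, b_{j,\gamma} - \sum_{\gamma' \le \gamma} \binom{\gamma}{\gamma'} b_{j,\gamma'}\, b_{\ell,\gamma-\gamma'},\\
\tilde b_{j,\gamma+e_\ell} &= \delta_{j\ell}\, \tilde b_{j,\gamma} + \sum_{\gamma' \le \gamma} \binom{\gamma}{\gamma'} \tilde b_{j,\gamma'}\, \tilde b_{\ell,\gamma-\gamma'}.
\end{align*}
Induction on $|\gamma|$, carried out simultaneously over all $j \in [n]$, then shows $\tilde b_{j,\gamma} \ge 0$ and $|b_{j,\gamma}| \le \tilde b_{j,\gamma}$ via the triangle inequality, yielding $\sum_{|\gamma|=k} |a_{j,\gamma}| \le \sum_{|\gamma|=k} \tilde a_{j,\gamma}$.

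Since $\tilde a_{j,\gamma} \ge 0$, substituting $z = (t,\ldots,t)$ into the power series of $\tilde\phi_j$ gives
\[
\sum_{k \ge 0} \Bigl( \sum_{|\gamma|=k} \tilde a_{j,\gamma} \Bigr) t^k = \tilde\phi_j(t,\ldots,t) = \frac{\xi_j\, e^t}{2 - e^t},
\]
so it remains to prove $[t^k] \tfrac{e^t}{2-e^t} \le e^{k+1}$ for all $k \ge 0$. For this I apply Cauchy's inequality on the complex disc of radius $r = 1/e < \ln 2$: for $|t| = 1/e$ we have $|e^t| \le e^{1/e}$, hence $|2 - e^t| \ge 2 - e^{1/e} > 0$ and $\bigl| e^t/(2-e^t) \bigr| \le e^{1/e}/(2-e^{1/e})$, so Cauchy gives $[t^k]\tfrac{e^t}{2-e^t} \le \tfrac{e^{1/e}}{2-e^{1/e}} \cdot e^k$. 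The required inequality $\tfrac{e^{1/e}}{2-e^{1/e}} \le e$ is equivalent to $e^{1/e}(1+1/e) \le 2$, which holds numerically since $e^{1/e} \approx 1.4447$ and $1+1/e \approx 1.3679$, giving product $\approx 1.976 < 2$.

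The main obstacle is identifying the correct majorizing function; once found, everything else is mechanical. The sign-flipped derivative rule uniquely determines $\tilde\phi_j$ (up to the integration constant chosen so that $\tilde\phi_j(0) = \xi_j$ and the denominator equals $1$ at the origin), and crucially the diagonal reduction $\tilde\phi_j(t,\ldots,t) = \xi_j e^t/(2-e^t)$ collapses the problem to a single-variable generating function whose radius of convergence $\ln 2$ exceeds $1/e$, giving exactly the $e^{k+1}$ growth targeted by the lemma.
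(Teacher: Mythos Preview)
Your proof is correct and takes a genuinely different route from the paper's.

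The paper argues combinatorially: it writes $\sum_{|\gamma|=k}|a_{j,\gamma}| = \frac{1}{k!}\sum_{t\in[n]^k}\bigl|\partial_{t_1}\cdots\partial_{t_k}\phi_j(0)\bigr|$, then uses the same identity $\partial_\ell\phi_j = \phi_j(\delta_{j\ell}-\phi_\ell)$ you found to expand each $k$-th derivative into a signed sum of exactly $k!$ monomials of degree $k{+}1$ in the $\phi_i$'s and $(1-\phi_i)$'s. These monomials are put in bijection with what the paper calls \emph{factorial trees} (rooted labeled trees on $\{0,\ldots,k\}$ where each vertex $i>0$ has a parent in $\{0,\ldots,i-1\}$), and the $e^{k+1}$ bound ultimately comes from estimating $\E\bigl[2^{|L_f|}\bigr]$ for a uniformly random factorial tree $f$ with leaf set $L_f$, via a short generating-function argument.

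Your approach sidesteps this combinatorics entirely with the majorant $\tilde\phi_j(z) = \xi_j e^{z_j}/\bigl(2-\sum_k\xi_k e^{z_k}\bigr)$, whose sign-flipped recursion $\partial_\ell\tilde\phi_j = \tilde\phi_j(\delta_{j\ell}+\tilde\phi_\ell)$ makes the termwise domination $|a_{j,\gamma}|\le\tilde a_{j,\gamma}$ immediate by induction on $|\gamma|$. The diagonal substitution then collapses everything to the one-variable generating function $\xi_j e^t/(2-e^t)$, and a Cauchy estimate on the circle $|t|=1/e$ finishes. This is shorter and more self-contained; the paper's argument, by contrast, exposes the explicit monomial structure of the higher derivatives (the factorial-tree bijection), which is more informative if one cares about the structure of the expansion rather than just the coefficient bound.
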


\begin{proof}
Note that, for each $j \in [n]$,
\begin{align*}
    a_{j,\gamma} = \frac{1}{\gamma_1!\gamma_2!\cdots\gamma_n!}\cdot \frac{\partial^k \phi_j(0)}{\partial z_1^{\gamma_1}\partial z_2^{\gamma_2}\cdots z_n^{\gamma_n}},
\end{align*}
and so
\begin{align*}
    \sum_{\gamma \in \BZ_{\geq 0}^n : \ |\gamma| = k} \card{a_{j,\gamma}} &=\sum_{\gamma \in \BZ_{\geq 0}^n : \ |\gamma| = k} \frac{1}{\gamma_1!\gamma_2!\cdots\gamma_n!}\cdot \card{\frac{\partial^k \phi_j(0)}{\partial z_1^{\gamma_1}\partial z_2^{\gamma_2}\cdots z_n^{\gamma_n}}}\\
    &=\frac{1}{k!}\sum_{\gamma \in \BZ_{\geq 0}^n : \ |\gamma| = k} \frac{k!}{\gamma_1!\gamma_2!\cdots \gamma_n!}\cdot \card{\frac{\partial^k \phi_j(0)}{\partial z_1^{\gamma_1}\partial z_2^{\gamma_2}\cdots z_n^{\gamma_n}}}\\
    &=\frac{1}{k!}\sum_{t \in [n]^k} \card{\frac{\partial^k \phi_j(0)}{\partial z_{t_1} \partial z_{t_2}\cdots \partial z_{t_k}}}.
\end{align*}
It is straightforward to see that the following equalities hold for any $i \in [n]$, $i \neq j$:
\begin{align*}
    \frac{\partial \phi_j}{\partial z_j} &= \phi_j(1-\phi_j)\\
    \frac{\partial \phi_j}{\partial z_i} &= -\phi_i\phi_j\\
    \frac{\partial (1-\phi_j)}{\partial z_j} &= -\phi_j(1-\phi_j)\\
    \frac{\partial (1-\phi_j)}{\partial z_i} &= \phi_i\phi_j
\end{align*}

We claim that for any $(t_1, \ldots, t_k) \in [n]^k$, we can express $\frac{\partial^k \phi_j}{\partial z_{t_1} \cdots \partial z_{t_k}}$ as a polynomial in $\phi_1,\cdots,\phi_n,(1-\phi_1),\cdots,(1-\phi_n)$ comprised of $k!$ monomials each of degree $k+1$.  We verify this by induction, first noting that after taking zero derivatives, the function $\phi_j$ is a degree-1 monomial.  Assume that for some sequence $b_1, \ldots, b_{(\ell-1)!} \in \{0,1\}$, we can express

\begin{align*}
    \frac{\partial^{\ell-1} \phi_j}{\partial z_{t_1} \cdots \partial z_{t_{\ell-1}}} = \sum_{f = 1}^{(\ell-1)!} (-1)^{b_f} \prod_{d = 0}^{\ell-1} m_{f,d}
\end{align*}
where each $m_{f,d} \in \set{\phi_1,\cdots,\phi_n,(1-\phi_1),\cdots,(1-\phi_n)}$.  We see that for each $f$, there is some sequence of bits $b_{f,0}, \ldots, b_{f,\ell-1} \in \{0,1\}$ so that 
\begin{align}
  \label{eq:fact-tree-prod-rule}
    \frac{\partial}{\partial z_{t_\ell} } \prod_{d = 0}^{\ell-1} m_{f,d} = \sum_{d=0}^{\ell-1} (-1)^{b_{f,d}} \cdot m_{f,0}\cdots m_{f,d}' \cdots m_{f,d,\ell}
\end{align}
where we define, for each $0 \leq d \leq \ell-1$,
\begin{align*}
    m_{f,d}'\text{ and }m_{f,d,\ell} = \begin{cases}
        m_{f,d}\text{ and }\phi_{t_\ell} &\text{ if }m_{f,d} = \phi_i \text{ with }i \ne t_\ell\\
        m_{f,d}\text{ and }(1-\phi_{t_\ell}) &\text{ if }m_{f,d} = \phi_{t_{\ell}}\\
        (1-m_{f,d})\text{ and }\phi_{t_\ell} &\text{ if }m_{f,d} = 1-\phi_i \text{ with }i \ne t_\ell\\
        (1-m_{f,d})\text{ and }(1-\phi_{t_\ell}) &\text{ if }m_{f,d} = 1-\phi_{t_{\ell}}.
    \end{cases}
\end{align*}

Thus, $\frac{\partial^{\ell} \phi_j}{\partial z_{t_1} \cdots \partial z_{t_{\ell}}}$ can be expressed as a sum of $\ell!$ monomials of degree $(\ell+1)$, completing the inductive step.\\



This inductive argument also demonstrates a bijection between the $k!$ monomials of $\frac{\partial^k \phi_j}{\partial z_{t_1} \cdots \partial z_{t_k}}$ and a combinatorial structure that we call \emph{factorial trees}.  Formally, we define a factorial tree to be a directed graph on vertices $\set{0,1,\cdots,k}$ such that each vertex $i\ne 0$ has a single incoming edge from one of the vertices in $[0,i-1]$. (For a non-negative integer $i$, we write $[0,i] := \{ 0, 1, \ldots, i\}$.)  For a factorial tree $f$, let $p_f(\ell) \in [0,\ell-1]$ denote the parent of a vertex $\ell$.  A particular factorial tree $f$ represents the monomial that was generated by choosing the $p_f(\ell)^{\text{th}}$ term in (\ref{eq:fact-tree-prod-rule}) for derivation when taking the derivative $\frac{\partial}{\partial z_{t_\ell}}$, for each $\ell \in [k]$. (See Figure \ref{fig:fac-tree} for an example.)

\begin{figure}[h]
\includegraphics[width=0.75\textwidth]{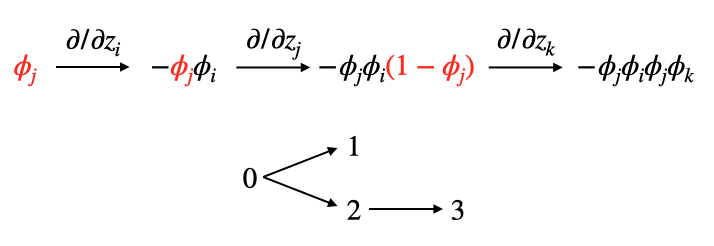}
\centering
\caption{A monomial $-\phi_j\phi_i\phi_j\phi_k$ of $\frac{\partial^3 \phi_j}{\partial z_i \partial z_j \partial z_k}$ and its corresponding factorial tree}
\label{fig:fac-tree}
\end{figure}


Each of the $k!$ monomials comprising $\frac{\partial^k \phi_j}{\partial z_{t_1} \cdots \partial z_{t_k}}$ is a product of $k+1$ terms corresponding to indices $j,t_1,\cdots,t_k$ (i.e., the first term in the product is either $\phi_j$ or $1-\phi_j$, the second term is either $\phi_{t_1}$ or $1-\phi_{t_1}$, and so on).  We say that a term corresponding to index $i \in [n]$ is \emph{perturbed} if it is  $(1-\phi_i)$ (as opposed to $\phi_i$).  From our construction, we see that the $\ell^{\text{th}}$ term is perturbed if $t_\ell = t_{p_f(\ell)}$ and there is no $\ell'$ such that $p_f(\ell') = \ell$.  That is, $\ell$ is a leaf in the corresponding factorial tree $f$ and the parent of $\ell$ corresponds to the same index as $\ell$. One can think of $t_1,\cdots,t_k$ as a coloring of all the vertices of the factorial tree with $n$ colors, except the root of the tree (vertex $0$) which has fixed color $j$.  Then, we can say the $\ell^{\text{th}}$ term is perturbed if and only if $\ell$ is a leaf with the same color as its parent.  We call such a leaf a \emph {petal}. For $t \in [n]$, we let $P_{f,t} \subseteq [k]$ be the set of petals on tree $f$ with color $t$, $L_{f} \subseteq [k]$ be the set of leaves of tree $f$, and $B_{f} = [k] \setminus L_f$ be the set of all non-leaves other than the fixed-color root.  Therefore,

\begin{align*}
    \sum_{\gamma \in \BZ_{\geq 0}^n : \ |\gamma| = k} \card{a_{j,\gamma}}&= \frac{1}{k!}\sum_{t \in [n]^k} \card{\frac{\partial^k \phi_j(0)}{\partial z_{t_1} \cdots \partial z_{t_k}}}\\
    & \leq \frac{1}{k!}\sum_{t \in [n]^k}\sum_{f}\prod_{\ell=0}^{k}(\phi_{t_\ell}(0) \cdot \BO[\ell \not \in P_{f,t}]+ (1-\phi_{t_\ell}(0)) \cdot \BO[\ell \in P_{f,t}])\\
    \intertext{(where we let $t_0=j$ for notational convenience)}
    & = \frac{1}{k!}\sum_{t \in [n]^k}\sum_{f}\prod_{\ell=0}^{k}(\xi_{t_\ell} \cdot \BO[\ell \not \in P_{f,t}]+ (1-\xi_{t_\ell}) \cdot \BO[\ell \in P_{f,t}])\\
    & = \frac{1}{k!}\sum_{f}\sum_{t_{B_f} \in [n]^{B_f}}\sum_{t_{L_f} \in [n]^{L_f}}\prod_{\ell=0}^{k}(\xi_{t_\ell} \cdot \BO[\ell \not \in P_{f,t}]+ (1-\xi_{t_\ell}) \cdot \BO[\ell \in P_{f,t}]),
\end{align*}
where in the last step we decompose, for each factorial tree $f$, $t \in [n]^k$ into the tuple of indices $t_{B_f} \in [n]^{B_f}$ corresponding to the non-leaves $B_f$, and the tuple of indices $t_{L_f} \in [n]^{L_f}$ corresponding to the leaves $L_f$.

We note that, fixing tree $f$ and the colors of all non-leaves $t_B$,

\begin{align*}
    &\sum_{t_{L_f} \in [n]^{L_f}}\prod_{\ell \in L_f}(\xi_{t_\ell} \cdot \BO[\ell \not \in P_{f,t}]+ (1-\xi_{t_\ell}) \cdot \BO[\ell \in P_{f,t}])\\
    &=\prod_{\ell \in L_f} \p{\sum_{t_\ell \in [n]} \xi_{t_\ell} \cdot \BO[t_\ell \ne t_{p_f(\ell)}]+ (1-\xi_{t_\ell}) \cdot \BO[t_\ell = t_{p_f(\ell)}]}\\
    &=\prod_{\ell \in L_f} \p{2-2\xi_{t_{p_f(\ell)}}}\\
    &\leq 2^{|L_f|}
\end{align*}

And so,

\begin{align*}
    & \frac{1}{k!}\sum_{f}\sum_{t_{B_f} \in [n]^{B_f}}\sum_{t_{L_f} \in [n]^{L_f}}\prod_{\ell=0}^{k}(\xi_{t_\ell} \cdot \BO[\ell \not \in P_{f,t}]+ (1-\xi_{t_\ell}) \cdot \BO[\ell \in P_{f,t}])\\
    &\leq \frac{1}{k!}\sum_{f}2^{|L_f|}\sum_{t_{B_f} \in [n]^{B_f}}\prod_{\ell \in B_f \cup \set{0}}(\xi_{t_\ell} \cdot \BO[\ell \not \in P_{f,t}]+ (1-\xi_{t_\ell}) \cdot \BO[\ell \in P_{f,t}])\\
    &= \frac{1}{k!}\sum_{f}2^{|L_f|}\sum_{t_{B_f} \in [n]^{B_f}}\prod_{\ell \in B_f \cup \set{0}}\xi_{t_\ell}\\
    \intertext{(as no non-leaf can ever be a petal)}
    &= \frac{\xi_j}{k!}\sum_{f}2^{|L_f|}\prod_{\ell \in B_f}\p{\sum_{t_{\ell} \in [n]}\xi_{t_\ell}}\\
    &= \frac{\xi_j}{k!}\sum_{f}2^{|L_f|} = \xi_j\EE_{f \sim \cU(\cF)}\ps{2^{|L_f|}}
\end{align*}
where $\cF$ is the set of all factorial trees and $\cU(\cF)$ is the uniform distribution over $\cF$.  For a specific vertex $\ell \in [0,k]$, we note that $\ell \in L_f$ if and only if it is not the parent of any vertex $\ell+1,\cdots,k$.  So, \begin{equation}
    \Pr_{f \sim \cU(\cF)}\ps{\ell \in L_f} = \prod_{i=\ell+1}^{k} \frac{i-1}{i} = \frac{\ell}{k}
\end{equation}
We will show via induction that, for any vertex set $S \subseteq [0,k]$
\begin{equation}\label{eq:cor-leaves}
     \Pr_{f \sim \cU(\cF)}\ps{S \subseteq L_f} \leq \prod_{\ell \in S} \frac{\ell}{k}
\end{equation}
Having established the base case for every $S$ with $|S|=1$, we assume (\ref{eq:cor-leaves}) holds for all $S$ with $|S| < s$.  For any set of $s$ vertices $V$, consider an arbitrary partition of $V$ into two sets $S \cup T = V$ with $|S|,|T|<s$.  We see

\begin{align*}
    \Pr_{f \sim \cU(\cF)}\ps{V \subseteq L_f} &= \prod_{c=1}^{k} \Pr\ps{p_f(c) \not \in V}\\
    &= \prod_{c=1}^{k} \Pr\ps{p_f(c) \not \in S}\Pr\ps{p_f(c) \not \in T|p_f(c) \not \in S}\\
    &\leq \prod_{c=1}^{k} \Pr\ps{p_f(c) \not \in S}\Pr\ps{p_f(c) \not \in T}\\
    &=\Pr\ps{S \subseteq L_f}\Pr\ps{T \subseteq L_f}\\
    &\leq \prod_{\ell \in V} \frac{\ell}{k}
\end{align*}
by the inductive hypothesis, as desired.  Thus, $\Pr\ps{|L_f| \geq s} \leq \sum_{S:|S| = s} \Pr\ps{S \subseteq L_f}$ is at most the $s^{\text{th}}$ coefficient of the polynomial
\begin{equation*}
    R(x) = \prod_{\ell = 0}^k \p{1+\frac{\ell}{k}x}
\end{equation*}
and so
\begin{align*}
    \EE_{f \sim \cU(\cF)}\ps{2^{|L_f|}} &\leq \sum_{s=0}^{k} 2^s\Pr[|L_f| \geq s]\\
    &\leq R(2)\\
    &\leq e^{\sum_{\ell=0}^k 2\ell/k} = e^{k+1}
\end{align*}
and
\begin{equation*}
    \sum_{\gamma \in \BZ_{\geq 0}^n : \ |\gamma| = k} \card{a_{j,\gamma}} \leq \xi_j e^{k+1},
\end{equation*}
as desired.
\end{proof}

\begin{lemma}\label{lem:prodmax-ak-bound}
Let $\phi_1,\cdots,\phi_m$ be softmax-type functions.  That is, for each $\phi_i$, there is some $j_i \in [n]$ and indices $\xi_{i1}, \ldots, \xi_{in}$ such that
\begin{align*}
\phi_i((z_1, \ldots, z_n)) = \frac{\exp(z_{j_i})}{\sum_{k =1}^n \xi_{ik} \cdot \exp(z_k)}\nonumber
\end{align*}
where $\xi_{i1} + \cdots + \xi_{in} = 1$ for all $i$.  Let $P(z) = \sum_{\gamma \in \BZ_{\geq 0}^n} a_\gamma z^\gamma$ denote the Taylor series of $\prod_i \phi_i$. Then for any integer $k$, 
\begin{align*}
    \sum_{\gamma \in \BZ_{\geq 0}^n : \ |\gamma| = k} \card{a_\gamma} \leq (e^3m)^{k}.
\end{align*}
\end{lemma}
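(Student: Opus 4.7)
The plan is to bound each factor's Taylor coefficients via Lemma~\ref{lem:softmax-ak-bound} and then extract the coefficient of a product generating function, using a slightly sharpened base estimate to avoid a spurious $e^m$ factor.

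First, I would apply Lemma~\ref{lem:softmax-ak-bound} to each $\phi_i$. Setting $Q_i(z) := \sum_k \xi_{ik}\exp(z_k)$, the auxiliary function $\psi_i(z) := \xi_{ij_i}\exp(z_{j_i})/Q_i(z)$ is exactly of the form covered by Lemma~\ref{lem:softmax-ak-bound} with weights $\xi_{i1},\ldots,\xi_{in}$. When $\xi_{ij_i}>0$, writing $\phi_i = \psi_i/\xi_{ij_i}$ yields the coefficient bound $\sum_{|\gamma_i|=k_i}|b_{i,\gamma_i}| \leq e^{k_i+1}$, where $b_{i,\gamma}$ denotes the Taylor coefficient of $\phi_i$ at the origin; the boundary case $\xi_{ij_i}=0$ follows by perturbing $\xi_{ij_i}\mapsto \epsilon$ with the other weights renormalized and invoking continuity of Taylor coefficients as $\epsilon\to 0$. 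Crucially, since $\phi_i(0)=1/Q_i(0)=1$, the $k_i=0$ coefficient sum is exactly $1$, strictly smaller than the generic $e$.

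Second, I would pass to generating functions. From $a_\gamma = \sum_{\gamma_1+\cdots+\gamma_m=\gamma}\prod_i b_{i,\gamma_i}$ the triangle inequality gives
\[\sum_{|\gamma|=k}|a_\gamma| \leq [x^k]\prod_{i=1}^m g_i(x), \qquad g_i(x) := \sum_{k_i\geq 0} x^{k_i}\sum_{|\gamma_i|=k_i}|b_{i,\gamma_i}|.\]
The sharpened bound from step one gives, coefficient-wise, $g_i(x) \leq 1 + \sum_{k\geq 1}e^{k+1}x^k = (1+e(e-1)x)/(1-ex)$. Expanding $(1+e(e-1)x)^m$ by the binomial theorem and $(1-ex)^{-m}$ by the negative binomial series, convolving, and using the estimates $\binom{m}{j}\leq m^j/j!$ and $\binom{m+k-j-1}{k-j}\leq (m+k)^{k-j}/(k-j)!$ followed by the binomial theorem in reverse, I expect to obtain
\[\sum_{|\gamma|=k}|a_\gamma| \leq \frac{(e^2m+ek)^k}{k!}.\]
Finally, I would verify $(e^2m+ek)^k \leq k!\cdot(e^3m)^k$ by splitting at $k=em$: when $k\leq em$, $e^2m+ek\leq 2e^2m$, so the claim reduces to $(2/e)^k\leq k!$, immediate since $2/e<1$; when $k>em$, $e^2m+ek\leq 2ek$, and Stirling's $k!\geq (k/e)^k$ reduces the claim to $2/(em)\leq 1$, true for $m\geq 1$.

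The main obstacle is a tightness issue that would sink a naive calculation: using the uniform bound $\sum_{|\gamma_i|=k_i}|b_{i,\gamma_i}|\leq e^{k_i+1}$ for every $k_i\geq 0$ gives $\prod_i g_i(x)\leq (e/(1-ex))^m$, which inflates the coefficient of $x^k$ by a ruinous $e^m$ factor that violates the $(e^3m)^k$ target whenever $m$ is large compared to $k$. The decisive refinement is to exploit the exact value $\phi_i(0)=1$ at $k_i=0$, which replaces the constant term of $g_i(x)$ from $e$ to $1$ and thereby swaps the numerator $e$ for the polynomial $1+e(e-1)x$; the two binomial expansions then combine via the binomial theorem in a way that exactly fits within $(e^3m)^k$.
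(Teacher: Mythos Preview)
Your proof is correct and shares the paper's key insight: the refinement that the $k_i=0$ coefficient of each factor is exactly $1$ rather than $e$, which is precisely what prevents the ruinous $e^m$ blowup. The paper, however, executes this more economically. Rather than keeping the sharp bound $e^{k_i+1}$ for $k_i\geq 1$ and carrying a two-term generating function $(1+e(e-1)x)/(1-ex)$ through a convolution, the paper simply coarsens to the uniform bound $\sum_{|\gamma_i|=k_i}|b_{i,\gamma_i}|\leq e^{2k_i}$, valid for all $k_i\geq 0$ (since $1=e^0$ at $k_i=0$ and $e^{k_i+1}\leq e^{2k_i}$ for $k_i\geq 1$). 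Then $\sum_{|\gamma|=k}|a_\gamma|\leq e^{2k}\cdot\#\{(k_1,\ldots,k_m):\sum k_i=k\}=e^{2k}\binom{m+k-1}{k}\leq e^{2k}(e(m+k-1)/k)^k\leq (e^3m)^k$ for $k\geq 1$, the last step using $(m+k-1)/k\leq m$. Your generating-function route works and even yields the intermediate estimate $(e^2m+ek)^k/k!$, but the paper's coarsening-to-$e^{2k_i}$ trick collapses the computation to a single binomial coefficient and avoids your case split at the end.
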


\begin{proof}
Letting $P_i(z) = \sum_{\gamma \in \BZ_{\geq 0}^n} a_{i,\gamma} z^{\gamma}$ denote the Taylor series of $\phi_i$ for all $i$, we have $P(z) = \prod_i P_i(z)$ and therefore

\begin{align*}
    \sum_{\gamma \in \BZ_{\geq 0}^n : \ |\gamma| = k} \card{a_\gamma} \leq \sum_{\substack{k_1,\cdots,k_m \in \BZ_{\geq 0}\\ \sum k_i = k}} \prod_i \sum_{\gamma \in \BZ_{\geq 0}^n : \ |\gamma| = k_i} \card{a_{i,\gamma} }\\
\end{align*}
We have that $\sum_{ |\gamma| = k_i} \card{a_{i,\gamma}} \leq e^{2k_i}$ for all $k_i$ since, for $k_i=0$, $a_{i,0} = \phi_i(0) = 1$, and for $k_i \geq 1$, 
\begin{equation}\label{eq:applyingLem}
    \sum_{|\gamma| = k_i} \card{a_{i,\gamma}} \leq \frac{\xi_{ij}}{\xi_{ij}} e^{k_i+1} \leq e^{2k_i}
\end{equation}
from Lemma \ref{lem:softmax-ak-bound}. Note that the softmax-type functions discussed in Lemma \ref{lem:softmax-ak-bound} have a $\xi_{ij}$ term in the numerator, while those discussed here do not.  This accounts for the extra $\xi_{ij}$ term that appears in equation \eqref{eq:applyingLem}. Thus,
\begin{align*}
    \sum_{\substack{k_1,\cdots,k_m \in \BZ_{\geq 0}\\ \sum k_i = k}} \prod_i \sum_{\gamma \in \BZ_{\geq 0}^n : \ |\gamma| = k_i} \card{a_{i,\gamma} } &\leq \sum_{\substack{k_1,\cdots,k_m \in \BZ_{\geq 0}\\ \sum k_i = k}} e^{2k}\\
    &= e^{2k} {m+k-1 \choose k}\\
    &\leq e^{2k} \p{\frac{e(m+k-1)}{k}}^k\\
    &= (e^{3}m)^k
\end{align*}
as desired.
\end{proof}

\begin{lemma}
  \label{lem:softmax-roc}
Let $\phi((z_1, \ldots, z_n)) = \frac{\exp(z_j)}{\sum_{k=1}^n \xi_k \exp(z_k)}$ be any softmax-type function. Then the radius of convergence of the Taylor series of $\phi$ at the origin is at least 1.
\end{lemma}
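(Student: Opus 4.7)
The plan is to extend $\phi$ to a holomorphic function of $z \in \BC^n$ on an open polydisk strictly containing the closed unit polydisk $\{z : \|z\|_\infty \leq 1\}$, and then invoke the standard several-variable complex-analytic theorem that the Taylor series of a function holomorphic on an open polydisk around $0$ converges absolutely to the function on every closed sub-polydisk contained in that domain of holomorphy.

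First, I would note that both the numerator $\exp(z_j)$ and the denominator $D(z) := \sum_{k=1}^n \xi_k \exp(z_k)$ are entire in $z \in \BC^n$, so $\phi$ is meromorphic, with pole locus $\{D = 0\}$. For $\phi$ to be a well-defined real-valued function on all of $\BR^n$, the $\xi_k$ cannot have mixed signs (mixed signs would make $D$ vanish somewhere on $\BR^n$ by the intermediate value theorem), so after replacing $\phi$ with $-\phi$ if necessary I may assume $\xi_k \geq 0$ for every $k$ with $\sum_k \xi_k > 0$; this substitution affects neither the radius of convergence nor the set on which $P_\phi$ equals $\phi$.

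The main estimate is that $D$ does not vanish on the closed complex unit polydisk. For any $z \in \BC^n$ with $|z_k| \leq 1$ for all $k$, write $z_k = a_k + i b_k$ with $a_k, b_k \in \BR$ and $|b_k| \leq |z_k| \leq 1 < \pi/2$, so that $\cos(b_k) \geq \cos(1) > 0$. Then
\[
\mathrm{Re}(D(z)) \;=\; \sum_{k=1}^n \xi_k \, e^{a_k} \cos(b_k) \;\geq\; \cos(1) \cdot \sum_{k=1}^n \xi_k \, e^{a_k} \;>\; 0,
\]
since the $\xi_k$ are non-negative and not all zero. Hence $D(z) \neq 0$ on $\{\|z\|_\infty \leq 1\}$, and in fact the same argument works verbatim on the larger open polydisk $\{\|z\|_\infty < \pi/2\}$.

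Therefore $\phi$ is holomorphic on the open polydisk $\{\|z\|_\infty < \pi/2\}$, which strictly contains the closed unit polydisk. By the multivariable Cauchy integral formula (equivalently, iterated single-variable Taylor expansion at $0$ in each coordinate), the multi-index Taylor series $P_\phi$ of $\phi$ at the origin converges absolutely to $\phi(z)$ for every $z$ with $\|z\|_\infty \leq 1$, which is precisely the claim that the radius of convergence is at least $1$. The only substantive step is the elementary non-vanishing estimate for $D$; everything else is routine several-variable complex analysis, so I do not anticipate a significant obstacle.
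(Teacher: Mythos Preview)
Your proof is correct and essentially the same as the paper's: both show that the denominator has positive real part on a complex polydisk strictly containing the closed unit polydisk (you use radius $\pi/2$, the paper uses $\pi/3$), so that $\phi$ is holomorphic there and convergence of the Taylor series follows. Your version is slightly cleaner---invoking the standard polydisk convergence theorem directly rather than passing through explicit Cauchy-integral-formula coefficient bounds and Cauchy--Hadamard as the paper does---and you are more careful than the paper about justifying $\xi_k \geq 0$, which the paper assumes implicitly from context (in all applications the $\xi_k$ are probabilities summing to $1$).
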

\begin{proof}
  For a complex number $z$, write $\Re(z), \Im(z)$ to denote the real and imaginary parts, respectively, of $z$.  Note that for any $\zeta_1, \ldots, \zeta_n \in \BC$ with $|\zeta_k| \leq \pi/3$ for all $k \in [n]$, we have
  $$
\Re(\exp(\zeta_k)) \geq  \cos(\pi/3) \cdot \exp(-\pi/3)  > 1/10,
$$
and thus $\left| \sum_{k=1}^n \xi_k \cdot \exp(\zeta_k) \right| \geq 1/10$. Moreover, for any such point $\zeta = (\zeta_1, \ldots, \zeta_n)$, it holds that $ |\exp(\zeta_j)| \leq \exp(\pi/3) < 3$. It then follows that for such $\zeta$ we have $|\phi(z)| \leq 30$. In particular, $\phi$ is holomorphic on the region $\{ \zeta : |\zeta_k| \leq \pi/3 \ \  \forall k \in [n] \}$.

Fix any $\gamma \in \BZ_{\geq 0}^n$, and let $k = |\gamma|$. By the multivariate version of Cauchy's integral formula,
\begin{align}
 \left| \frac{d^\gamma}{dz^\gamma} \phi(z)\right| = &\left| \frac{\gamma!}{(2\pi i)^n} \int_{|\zeta_1-z_1| = \pi/3} \cdots \int_{|\zeta_n - z_n| = \pi/3} \frac{\phi(\zeta_1, \ldots, \zeta_n)}{(\zeta_1-z_1)^{\gamma_1+1} \cdots (\zeta_n - z_n)^{\gamma_n+1}} d\zeta_1 \cdots d\zeta_n\right|\nonumber\\
  \leq & \frac{30 \gamma!}{(\pi/3)^{k+n}} \leq \frac{30 \gamma!}{(\pi/3)^k}\nonumber.
\end{align}
The power series of $\phi$ at $\bbzero$ is defined as $P_\phi(z) = \sum_{\gamma \in \BZ_{\geq 0}^n} a_\gamma \cdot z^\gamma$, where $a_\gamma = \frac{1}{\gamma!} \frac{d^\gamma}{dz^\gamma} \phi(\bbzero)$. For any $\gamma \in \BZ_{\geq 0}^n$ with $k = |\gamma|$, we have $|a_\gamma|^{1/k} \leq (30 / (\pi/3)^k)^{1/k} = (30)^{1/k} \cdot 3/\pi$, which tends to $3/\pi < 1$ as $k \ra \infty$. Thus, by the (multivariate version of the) Cauchy-Hadamard theorem, the radius of convergence of the power series of $\phi$ at $\bbzero$ is at least $\pi/3 \geq 1$. \noah{check this}
\end{proof}

\subsection{Proof of Lemma \ref{lem:expand-pow-seq}}
\label{sec:bcr-proof}
In this section prove Lemma \ref{lem:expand-pow-seq}, which, as explained in Section \ref{sec:upwards-ind}, is an important ingredient in the proof of Lemma \ref{lem:fd-analytic}. The detailed version of Lemma \ref{lem:expand-pow-seq} is presented below; it includes several claims which are omitted for simplicity in the abbreviated version in Section \ref{sec:upwards-ind}.

\begin{replemma}{lem:expand-pow-seq}[Detailed]
  Fix any integer $h \geq 0$, a multi-index $\gamma \in \BZ_{\geq 0}^n$ and set $k = |\gamma|$. For each of the $k^h$ functions $\pi : [h] \ra [k]$, and for each $r \in [k]$, there are integers $h'_{\pi,r} \in \{0, 1, \ldots, h\}$, $t'_{\pi,r} \geq 0$, and $j'_{\pi,r} \in [n]$, so that the following holds. For any sequence $L\^1, \ldots, L\^T \in \BR^n$ of vectors, it holds that 
  \begin{align}
\fd{h}{L^\gamma} = \sum_{\pi : [h] \ra [k]} \prod_{r=1}^k \shf{t'_{\pi,r}}{\fd{h'_{\pi,r}}{(L(j'_{\pi,r}))}}\label{eq:prod-expand-formal}.
  \end{align}
  Moreover, the following properties hold:
\begin{enumerate}
\item \label{it:distr-si} For each $\pi$ and $r \in [k]$, $h'_{\pi,r} = | \{ q \in [h] : \pi(q) = r \}|$. In particular, $\sum_{r=1}^k h'_{\pi,r} = h$. 
\item \label{it:tpr-hpr} For each $\pi$ and $r \in [k]$, it holds that $0 \leq t'_{\pi,r} + h'_{\pi,r} \leq h$. 
  \item \label{it:j-histogram} For each $\pi$, $r \in [k]$, and $j \in [n]$, $\gamma_j = | \{ r \in [k] : j'_{\pi,r} = j \} |$. 
\end{enumerate}
\end{replemma}
\begin{proof}[Proof of Lemma \ref{lem:expand-pow-seq}]
  We use induction on $h$. First note that in the case $h = 0$ and for any $k \geq 0$, we have that $\fds{h}{L^\gamma}{t} = (L\^t)^\gamma$, and so for the unique function $\pi : \emptyset \ra [k]$, for all $r \in [k]$, we may take $t'_{\pi,r} = 0$, $h'_{\pi,r} = 0$, and ensure that for each $j \in [n]$ there are $\gamma_j$ values of $r$ so that $j'_{\pi,r} = j$. 

  Now fix any integer $h > 0$, and suppose the statement of the claim holds for all $h' < h$. 
We have that
  \begin{align}
    & \fd{h}{L^\gamma}{} \nonumber\\
    =& \fd{1}{\fd{h-1}{L^\gamma}}{} \nonumber\\
    =& \fd{1}{\sum_{\pi: [h-1] \ra [k]} \prod_{r=1}^k \shf{t'_{\pi,r}}{\fd{h'_{\pi,r}}{L(j'_{\pi,r})}}}{} \nonumber\\
    \label{eq:use-prod-lemma}
    =& \sum_{\pi : [h-1] \ra [k]} \sum_{r=1}^k \fd{1}{\shf{t'_{\pi,r}}{\fd{h'_{\pi,r}}{L(j'_{\pi,r})}}}  \cdot \prod_{r'=1 }^{r-1} \shf{t'_{\pi,r'}}{\fd{h'_{\pi,r'}}{L(j'_{\pi,r'})}} \cdot  \prod_{r'=r+1}^k \shf{t'_{\pi,r'}+1}{\fd{h'_{\pi,r'}}{L(j'_{\pi,r'})}} \\
    \label{eq:ed-commute}
        =& \sum_{\pi : [h-1] \ra [k]} \sum_{r=1}^k \shf{t'_{\pi,r}}{\fd{h'_{\pi,r}+1}{L(j'_{\pi,r})}}  \cdot \prod_{r'=1 }^{r-1} \shf{t'_{\pi,r'}}{\fd{h'_{\pi,r'}}{L(j'_{\pi,r'})}} \cdot  \prod_{r'=r+1}^k \shf{t'_{\pi,r'}+1}{\fd{h'_{\pi,r'}}{L(j'_{\pi,r'})}} .
  \end{align}
  where (\ref{eq:use-prod-lemma}) uses Lemma \ref{lem:prod-multi} and (\ref{eq:ed-commute}) uses the commutativity of $\shf{t'}{}$ and $\fd{1}{}$. For each $\pi : [h-1] \ra [k]$, we construct $k$ functions $\pi_1, \ldots, \pi_k : [h] \ra [k]$, defined by $\pi_r(q) = \pi(q)$ for $q < h$, and $\pi_r(h) = r$ for $r \in [k]$. Next, for $r, r' \in [k]$, we define the quantities $h'_{\pi_r,r'}, t'_{\pi_r,r'}, j'_{\pi_r,r'}$ as follows:
  \begin{itemize}
  \item Set $h'_{\pi_r,r'} = h'_{\pi,r'}$ if $r \neq r'$, and $h'_{\pi_r,r} = h'_{\pi,r}+1$.
  \item Set $t'_{\pi_r,r'} = t'_{\pi,r'}$ if $r' \leq r$, and $t'_{\pi_r,r'} = t'_{\pi,r'}+1$ if $r' > r$.
  \item Set $j'_{\pi_r,r'} = j'_{\pi,r'}$.
  \end{itemize}
  By (\ref{eq:ed-commute}) and the above definitions, we have
  \begin{align}
\fd{h}{L^\gamma} = \sum_{\pi : [h] \ra [k]} \prod_{r'=1}^k \shf{t'_{\pi,r}}{\fd{h'_{\pi,r}}{L(j'_{\pi,r})}},\nonumber
  \end{align}
  thus verifying (\ref{eq:prod-expand-informal}) for the value $h$.
  
 Finally we verify that items \ref{it:distr-si} through \ref{it:j-histogram} in the lemma statement hold.  The definition of $h'_{\pi_r,r'}$ above together with the inductive hypothesis ensures that for all $r,r' \in [k]$, $h'_{\pi_r,r'} = |\{ q \in [h] : \pi_r(q) = r' \}|$, thus verifying item \ref{it:distr-si} of the lemma statement. Since $h'_{\pi_r,r'} + t'_{\pi_r,r'} \leq h'_{\pi,r} + t'_{\pi,r} + 1$ for all $r,r'$, it follows from the inductive hypothesis that $0 \leq h'_{\pi_r,r'} + t'_{\pi_r,r'} \leq h$; this verifies item \ref{it:tpr-hpr}. Finally, note that for any $j \in [n]$ and $r \in [k]$, $\{ r' \in [k] : j'_{\pi,r'} = j \} = \{ r' \in [k] : j'_{\pi_r,r'} = j\}$, and thus item \ref{it:j-histogram} follows from the inductive hypothesis.  
\end{proof}

\subsection{Proof of Lemma \ref{lem:fd-analytic}}
\label{sec:bcr-proof-2}
In this section we prove Lemma \ref{lem:fd-analytic}. To introduce the detailed version of the lemma we need the following definition. 
Suppose $\phi : \BR^n \ra \BR$ is a real-valued function that is real-analytic in a neighborhood of the origin. For real numbers $Q, R > 0$, we say that $\phi$ is \emph{$(Q, R)$-bounded} if the Taylor series of $\phi$ at $\bbzero$, denoted $P_\phi(z_1, \ldots, z_n) = \sum_{\gamma \in \BZ_{\geq 0}^n} a_\gamma z^\gamma$, satisfies, for each integer $k \geq 0$, $\sum_{\gamma \in \BZ_{\geq 0}^n : |\gamma| = k} |a_\gamma| \leq Q \cdot R^k$. In the statement of Lemma \ref{lem:fd-analytic} below, the quantity $0^0$ is interpreted as 1 (in particular, $(h')^{B_0 h'} = 1$ for $h' = 0$).
\begin{replemma}{lem:fd-analytic}[``Boundedness chain rule'' for finite differences; detailed]
 Suppose that $h,n \in \BN$,  $\phi : \BR^n \ra \BR$ is a $(Q,R)$-bounded function so that the radius of convergence of its power series at $\bbzero$ is at least $\nu>0$, and $L = (L\^1, \ldots, L\^T) \in \BR^n$ is a sequence of vectors satisfying $\| L\^t \|_\infty \leq \nu$ for $t \in [T]$. 
 Suppose for some $\alpha \in (0,1)$, for each $0 \leq h' \leq h$ and $t \in [T-h']$, it holds that $\| \fd{h'}{L}\^t \|_\infty \leq \frac{1}{B_1} \cdot \alpha^{h'} \cdot (h')^{B_0 h'}$ for some $B_1 \geq 2e^2 R, B_0 \geq 3$. 
 Then for all $t \in [T-h]$, 
  \begin{align*}
| \fds{h}{(\phi \circ L)}{t}|\leq \frac{12RQ e^2}{B_1} \cdot \alpha^h \cdot h^{B_0h+1 }. 
  \end{align*}
\end{replemma}
\begin{proof}[Proof of Lemma \ref{lem:fd-analytic}]
Note that the $h$th order finite differences of a constant sequence are identically 0 for $h \geq 1$, so by subtracting $\phi(\bbzero)$ from $\phi$, we may assume without loss of generality that $\phi(\bbzero) = 0$. (Here $\bbzero$ denotes the all-zeros vector.) 
  
By assumption, the radius of convergence of the power series of $\phi$ at the origin is at least $\nu$, and so for each $\gamma \in \BZ_{\geq 0}^n$, there is a real number $a_\gamma$ so that for $z = (z_1, \ldots, z_n)$ with $|z_j| \leq \nu$ for each $j$,
  \begin{equation}
    \label{eq:phi-ps}
\phi(z) = \sum_{k \in \BN, \gamma \in \BZ_{\geq 0}^n : \ |\gamma| = k} a_\gamma z^\gamma.
\end{equation}
Let $A_k := \sum_{\gamma \in \BZ_{\geq 0}^n : |\gamma| = k} |a_\gamma|$; by the assumption that $\phi$ is $(Q,R)$-bounded, we have that $A_k \leq Q \cdot R^k$ for all $k \in \BN$. 

For $\gamma \in \BZ_{\geq 0}^n$, recall that $L^\gamma$ denotes the sequence $((L^\gamma)\^1, \ldots, (L^\gamma)\^T)$, defined by $(L^\gamma)\^t = (L\^t(1))^{\gamma_1} \cdots (L\^t(n))^{\gamma_n}$. Then since $\|L\^t\|_\infty \leq \nu$ for all $t \in [T]$, we have that, for $t \in [T-h]$, $\fds{h}{(\phi \circ L)}{t} = \sum_{\gamma \in \BZ_{\geq 0}^n} a_\gamma \cdot \fds{h}{L^\gamma}{t}$.

We next upper bound the quantities $|\fds{h}{L^\gamma}{t}|$. To do so, fix some $\gamma \in \BZ_{\geq 0}^n$, and set $k = |\gamma|$. 
For each function $\pi : [h] \ra [k]$ and $r \in [k]$, recall the integers  $h'_{\pi,r} \in \{0, 1, \ldots, h \}$, $t'_{\pi,r} \geq 0$, $j'_{\pi,r} \in [n]$ defined in Lemma \ref{lem:expand-pow-seq}.
By assumption it holds that for each $t \in [T-h]$, each $h' \leq h$, each $0 \leq t' \leq h$, $|\fds{h'}{L(j)}{t+t'}| \leq \frac{1}{B_1} \cdot \alpha^{h'} \cdot (h')^{B_0 h' }$. 
 It follows that for each $t \in [T-h]$ and function $\pi : [h] \ra [k]$,
\begin{align}
  & \left| \prod_{r=1}^k \shfs{t'_{\pi,r}}{\fd{h'_{\pi,r}}{L(j'_{\pi,r})}}{t}\right|
  \leq  \prod_{r=1}^k \frac{1}{B_1} \cdot \alpha^{h'_{\pi,r}} \cdot (h'_{\pi,r})^{B_0 h'_{\pi,r}}
  = \frac{\alpha^h}{B_1^k} \cdot \prod_{r=1}^k (h'_{\pi,r})^{B_0 h'_{\pi,r} }\nonumber,
\end{align}
where the last equality uses that $\sum_{r=1}^k h'_{\pi,r} = h$ (item \ref{it:distr-si} of Lemma \ref{lem:expand-pow-seq}). Then by Lemma \ref{lem:expand-pow-seq}, we have: 
{
\begin{align}
  \left| \fds{h}{L^\gamma}{t} \right| 
  \leq & \sum_{\pi : [h] \ra [k]} \left| \prod_{r=1}^k \shfs{t'_{\pi,r}}{\fd{h'_{\pi,r}}{L(j'_{\pi,r})}}{t}\right| \nonumber\\
  \leq &\frac{\alpha^h}{B_1^k}  \sum_{\pi : [h] \ra [k]} \prod_{r=1}^k (h'_{\pi,r})^{B_0 h'_{\pi,r} }   \nonumber\\
  \leq & \frac{\alpha^h}{B_1^k} \cdot h^{B_0 h}\max\left\{ k^7, (hk+1) \cdot \exp\left( \frac{2k}{h^{B_0-1}} \right)\right\} 
         \label{eq:use-ccs},
\end{align}
}
where (\ref{eq:use-ccs}) follows from Lemma \ref{lem:hk-function-bound}, the fact that $B_0 \geq 3$, and that $h'_{\pi,r} = | \{ q \in [h] : \pi(q) = r \}|$ (item \ref{it:distr-si} of Lemma \ref{lem:expand-pow-seq}). 

We may now bound the order-$h$ finite differences of the sequence $\phi \circ L$ as follows: for $t \in [T-h]$, 
\begin{align}
|  \fds{h}{(\phi \circ L)}{t} |\leq& \sum_{\gamma \in \BZ_{\geq 0}^n} |a_\gamma| \cdot\left| \fds{h}{L^\gamma}{t}\right|\nonumber\\
  \leq & \alpha^h \cdot h^{B_0 h+1}   \sum_{\gamma \in \BZ_{\geq 0}^n} |a_\gamma| \cdot B_1^{-|\gamma|} \cdot \max \left\{ |\gamma|^7, (|\gamma|+1) \cdot \exp \left( \frac{2|\gamma|}{h^{B_0-1}} \right) \right\} \label{eq:use-dh-ub}\\
  \leq & \alpha^h \cdot h^{B_0h+1}  \cdot \sum_{k \in \BN} A_k \cdot  B_1^{-k} \cdot \left( k^7 + 2k \cdot \exp(2k/h^{B_0-1}) \right) \nonumber\\
  \leq & \alpha^h \cdot h^{B_0h+1} \cdot Q \cdot \left( \sum_{k \in \BN} k^7 \cdot  (R/B_1)^k  + \sum_{k \in \BN} 2k \cdot (R/B_1)^k \cdot e^{2k} \right) \label{eq:use-ak-ek1-bound}\\
  \leq & \frac{2RQe^2}{B_1} \cdot \alpha^h \cdot h^{B_0h+1} \cdot \left(\sum_{k \in \BN}  k^7 \cdot (2e^2)^{-k} + 2 \sum_{k \in \BN}k \cdot 2^{-k} \right) \label{eq:use-B1} \\
  =& \frac{12RQe^2}{B_1} \cdot \alpha^h \cdot h^{B_0h+1}\nonumber.
\end{align}
where (\ref{eq:use-dh-ub}) uses (\ref{eq:use-ccs}), (\ref{eq:use-ak-ek1-bound}) uses the bound $A_k \leq QR^k$, and (\ref{eq:use-B1}) uses the assumption $B_1 \geq 2e^2 R$. This gives the desired conclusion of the lemma.
\end{proof}

\subsection{Proof of Lemma \ref{lem:dh-bound}}
\label{sec:dh-bound-proof}
In this section we prove Lemma \ref{lem:dh-bound}. The detailed version of Lemma \ref{lem:dh-bound} is stated below.
\begin{replemma}{lem:dh-bound}[Detailed]
  Fix a parameter $\alpha \in \left(0, \frac{1}{H+3} \right)$. 
  If all players follow \Opthedge updates with step size $\eta \leq \frac{\alpha}{36 e^5 m}$, then for any player $i \in [m]$, integer $h$ satisfying $0 \leq h \leq H$, time step $t \in [T-h]$, it holds that
  $$
\| \fds{h}{\ell_i}{t} \|_\infty \leq \alpha^h \cdot h^{3h+1}.
$$
\end{replemma}
\begin{proof}
  We have that for each agent $i \in [m]$, each $t \in [T]$, and each $a_i \in [n_i]$, $\ell_i\^t(a_i) = \E_{a_{i'} \sim x_{i'}\^t :\ i' \neq i}[\ML_i(a_1, \ldots, a_{m})]$. Thus, for $1 \leq t \leq T$, 
  \begin{align}
    \left|\fds{h}{\ell_i}{t}(a_i)\right| =& \left|\sum_{s=0}^h {h \choose s} (-1)^{h-s} \ell_i\^{t+s}(a_i) \right|\label{eq:use-bin-coeffs-expan} \\
    =& \left|\sum_{a_{i'} \in [n_{i'}],\ \forall i' \neq i} \ML_i(a_1, \ldots, a_m) \sum_{s=0}^h {h \choose s} (-1)^{h-s}  \cdot \prod_{i' \neq i} x_{i'}\^{t+s}(a_{i'})\right| \nonumber\\
    \leq  & \sum_{a_{i'} \in [n_{i'}], \ \forall i' \neq i} \left| \sum_{s=0}^h {h \choose s} (-1)^{h-s} \cdot \prod_{i' \neq i} x_{i'}\^{t+s}(a_{i'})  \right| \nonumber\\
    =& \sum_{a_{i'} \in [n_{i'}], \ \forall i' \neq i} \left| \fds{h}{\left( \prod_{i' \neq i} x_{i'}(a_{i'}) \right)}{t} \right|,\label{eq:xi-seq}
  \end{align}
  where (\ref{eq:use-bin-coeffs-expan}) and (\ref{eq:xi-seq}) use Remark \ref{rem:fds-alt} and in (\ref{eq:xi-seq}), $\prod_{i'\neq i} x_{i'}(a_{i'})$ refers to the sequence $\prod_{i' \neq i} x_{i'}\^1(a_{i'}),$ $\prod_{i' \neq i} x_{i'}\^2(a_{i'}),\ldots$,  $\prod_{i' \neq i} x_{i'}\^T(a_{i'})$.

  In the remainder of this lemma we will prepend to the loss sequence $\ell_i\^1, \ldots, \ell_i\^T$ the vectors $\ell_i\^0 = \ell_i\^{-1} := \bbzero \in \BR^{n_i}$. We will also prepend $x_i\^0 := x_i\^1 =  (1/n_i, \ldots, 1/n_i) \in \Delta^{n_i}$ to the strategy sequence $x_i\^1, \ldots, x_i\^T$.  
  Next notice that for any agent $i \in [m]$, any $t_0 \in \{0, 1, \ldots, T \}$, and any $t \geq 0$, by the definition (\ref{eq:opt-hedge}) of the \Opthedge updates, it holds that, for each $j \in [n_i]$, 
  $$
x_i\^{t_0+t+1}(j) = \frac{x_i\^{t_0}(j) \cdot \exp\left( \eta \cdot \left( \ell_i\^{t_0-1}(j) - \sum_{s=0}^t \ell_i\^{t_0+s}(j) - \ell_i\^{t_0+t}(j)\right) \right)}{\sum_{k=1}^{n_i} x_i\^{t_0}(k) \cdot \exp\left( \eta \cdot \left( \ell_i\^{t_0-1}(k) - \sum_{s=0}^t \ell_i\^{t_0+s}(k) - \ell_i\^{t_0+t}(k)\right) \right)}.
$$
Note in particular that our definitions of $\ell_i\^0, \ell_i\^{-1}, x_i\^0$ ensure that the above equation holds even for $t_0 \in \{0,1\}$. \noah{check this, also $\phi$ should depend on $i$} 
Now an integer $t_0$ satisfying $0 \leq t_0 \leq T$; for $t \geq 0$, let us write 
$$
\bar \ell_{i,t_0}\^{t} := \ell_i\^{t_0-1} - \sum_{s=0}^{t-1} \ell_i\^{t_0+s} - \ell_i\^{t_0+t-1}.
$$
Also, for a vector $z = (z(1), \ldots, z(n_i)) \in \BR^{n_i}$ and an index $j \in [n_i]$, define
\begin{equation}
  \label{eq:def-phij}
\phi_{t_0,j}(z) := \frac{ \exp\left(  z(j)\right)}{\sum_{k=1}^{n_i} x_i\^{t_0}(k) \cdot \exp\left( z(k)\right)},
\end{equation}
so that $x_i\^{t_0+t}(j) = x_i\^{t_0}(j)\cdot \phi_{t_0,j}(\eta \cdot \bar \ell_{i,t_0}\^{t})$ for $t \geq 1$. In particular, for any $i \in [m]$, and any choices of $a_{i'} \in [n_{i'}]$ for all $i' \neq i$,
\begin{align}
\prod_{i' \neq i} x_{i'}\^{t_0+t}(a_{i'}) = \prod_{i' \neq i} x_{i'}\^{t_0}(a_{i'}) \cdot \phi_{t_0,a_{i'}}(\eta \cdot \bar \ell_{i',t_0}\^{t})\label{eq:prod-aip-xit}.
\end{align}

Next, note that 
$$
\fds{1}{\bar \ell_{i,t_0}}{t} = \ell_i\^{t_0+t-1} - 2\ell_i\^{t_0+t} = \ell_i\^{t_0+t-1} - 2\shfs{1}{\ell_i}{t_0+t-1},
$$
meaning that for any $h' \geq 1$,
\begin{equation}
  \label{eq:barl-fds}
\fds{h'}{\bar \ell_{i,t_0}}{t} = \fds{h'-1}{\ell_i}{t_0+t-1} -2\shfs{1}{\fd{h'-1}{\ell_i}}{t_0+t-1}.
\end{equation}

We next establish the following claims which will allow us to prove Lemma \ref{lem:dh-bound} by induction.
\begin{claim}
  \label{clm:base-step}
For  any $t_0 \in \{0, 1, \ldots, T \}$, $t \geq 0$, and $i \in [m]$, it holds that $\| \bar \ell_{i,t_0}\^t \|_\infty \leq t+2$.
\end{claim}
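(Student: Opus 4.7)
The plan is to prove Claim \ref{clm:base-step} by a direct application of the triangle inequality, exploiting the boundedness of the individual loss vectors. Recall that the loss functions $\ML_i$ take values in $[0,1]$, so $\ell_i^{(s)} \in [0,1]^{n_i}$ for every $s \in [T]$, meaning $\|\ell_i^{(s)}\|_\infty \leq 1$. Moreover, by the convention adopted in the preceding paragraph, $\ell_i^{(0)} = \ell_i^{(-1)} = \mathbf{0}$, so these also satisfy the bound $\|\ell_i^{(s)}\|_\infty \leq 1$.

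Starting from the definition
\begin{align*}
\bar\ell_{i,t_0}^{(t)} = \ell_i^{(t_0-1)} - \sum_{s=0}^{t-1} \ell_i^{(t_0+s)} - \ell_i^{(t_0+t-1)},
\end{align*}
I would apply the triangle inequality for the $\ell_\infty$ norm to obtain
\begin{align*}
\|\bar\ell_{i,t_0}^{(t)}\|_\infty \leq \|\ell_i^{(t_0-1)}\|_\infty + \sum_{s=0}^{t-1} \|\ell_i^{(t_0+s)}\|_\infty + \|\ell_i^{(t_0+t-1)}\|_\infty \leq 1 + t + 1 = t+2,
\end{align*}
which is the desired bound. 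The case $t=0$ is handled by noting that the sum $\sum_{s=0}^{-1}(\cdot)$ is empty, so $\bar\ell_{i,t_0}^{(0)} = \ell_i^{(t_0-1)} - \ell_i^{(t_0-1)} = \mathbf{0}$, trivially satisfying the bound.

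There is no real obstacle here: this claim is an essentially immediate consequence of boundedness of losses and is intended as a starting ingredient for the induction on $h$ in Lemma \ref{lem:dh-bound} — in particular, it ensures that when we invoke the boundedness chain rule (Lemma \ref{lem:fd-analytic}) with input sequence $\eta \cdot \bar\ell_{i,t_0}$, the precondition $\|\eta \bar\ell_{i,t_0}^{(t)}\|_\infty \leq \nu$ (needed so that the Taylor series of the softmax-type functions $\phi_{t_0,j}$ converges at the evaluation points) will be satisfied for appropriately small $\eta$ and $t = O(H)$.
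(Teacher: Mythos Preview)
Your proof is correct and takes essentially the same approach as the paper, which simply states that the claim is immediate from the triangle inequality and the fact that $\|\ell_i^{(t)}\|_\infty \leq 1$ for all $t$. Your additional remarks on the $t=0$ case and the role of the claim in the larger argument are accurate but go beyond what the paper spells out.
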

\begin{proof}[Proof of Claim \ref{clm:base-step}]
The claim is immediate from the triangle inequality and the fact that $\| \ell_i\^t \|_\infty \leq 1$ for all $t \in [T]$.
\end{proof}

\begin{claim}
  \label{clm:inductive-step}
  Fix $h$ so that $1 \leq h \leq H$. Suppose that for some $B_0 \geq 3$ and for all $0 \leq h' < h$, all $i \in [m]$, and all $t \leq T-h'$, it holds that $\| \fds{h'}{\ell_i}{t} \|_\infty \leq    \alpha^{h'} \cdot (h'+1)^{B_0 (h'+1) }$. Suppose that the step size $\eta$ satisfies $\eta \leq \min \left\{ \frac{\alpha}{36 e^5 m}, \frac{1}{12e^5 (H+3)m}\right\}$.  
  Then for all $i \in [m]$ and $1 \leq t \leq T-h$, 
  \begin{equation}
    \label{eq:inductive-game-formal}
\left\| \fds{h}{\ell_i}{t}  \right\|_\infty \leq  \alpha^h \cdot h^{B_0 h + 1}. 
  \end{equation}
\end{claim}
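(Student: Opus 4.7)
The plan is to combine the game structure---which expresses $\ell_i\^{t}$ as a multilinear function of the other players' strategies---with the boundedness chain rule (Lemma \ref{lem:fd-analytic}) applied to the softmax-type representation \eqref{eq:prod-aip-xit}. First, for fixed $i\in[m]$, $a_i\in[n_i]$, and $t_0\in[T-h]$, exactly as in \eqref{eq:use-bin-coeffs-expan}--\eqref{eq:xi-seq} I would write
\[
\fds{h}{\ell_i}{t_0}(a_i) \;=\; \sum_{a_{-i}} \ML_i(a)\cdot \fds{h}{\left(\prod_{i'\neq i} x_{i'}(a_{i'})\right)}{t_0}.
\]
By \eqref{eq:prod-aip-xit}, the sequence inside the finite-difference operator equals the constant $\prod_{i'\neq i} x_{i'}\^{t_0}(a_{i'})$ times $\prod_{i'\neq i}\phi_{t_0,a_{i'}}(\eta\bar\ell_{i',t_0}\^{t})$. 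Accordingly, for each tuple $a_{-i}$ I would define the function $\Phi_{a_{-i}}(\mathbf{z}):=\prod_{i'\neq i}\phi_{t_0,a_{i'}}(\mathbf{z}_{i'})$ on the concatenated space $\prod_{i'\neq i}\BR^{n_{i'}}$ and the vector sequence $L\^{t}:=(\eta\bar\ell_{i',t_0}\^{t})_{i'\neq i}$. The task then reduces to bounding $|\fds{h}{(\Phi_{a_{-i}}\circ L)}{0}|$ uniformly in $a_{-i}$, since summing over $a_{-i}$ against the distributional weights $\prod_{i'\neq i} x_{i'}\^{t_0}(a_{i'})$ (and using $|\ML_i(a)|\le 1$) recovers the desired estimate on $|\fds{h}{\ell_i}{t_0}(a_i)|$.

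Next, I would apply Lemma \ref{lem:fd-analytic} to $\phi=\Phi_{a_{-i}}$ and the sequence $L$ with parameters $Q=1$, $R=e^3(m-1)$, $\nu=1$, $B_1=12e^5 m$, and $B_0\ge 3$ inherited from the outer induction. The $(Q,R)$-boundedness of $\Phi_{a_{-i}}$ follows from Lemma \ref{lem:prodmax-ak-bound} applied to the $m-1$ softmax-type factors in disjoint variable blocks, while $\nu\ge 1$ follows from Lemma \ref{lem:softmax-roc} together with the fact that a product of functions holomorphic on the polydisc $\{\|\mathbf{z}_{i'}\|_\infty\le 1\}_{i'}$ is itself holomorphic on the polydisc $\|\mathbf{z}\|_\infty\le 1$. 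To verify the precondition $\|\fds{h'}{L}{t}\|_\infty\le \tfrac{1}{B_1}\alpha^{h'}(h')^{B_0 h'}$, I would split by order. For $h'=0$, Claim \ref{clm:base-step} gives $\|L\^{t}\|_\infty\le \eta(t+2)\le \eta(H+2)$, which is at most $1/B_1$ by the step-size assumption $\eta\le 1/(12e^5(H+3)m)$. For $h'\ge 1$, identity \eqref{eq:barl-fds} expresses $\fds{h'}{\eta\bar\ell_{i',t_0}}{t}$ as a short linear combination of shifts of $\eta\,\fds{h'-1}{\ell_{i'}}{\cdot}$, and the inductive hypothesis of Claim \ref{clm:inductive-step} then yields $\|\fds{h'}{L}{t}\|_\infty\le 3\eta\cdot\alpha^{h'-1}(h')^{B_0 h'}\le \tfrac{1}{B_1}\alpha^{h'}(h')^{B_0 h'}$ precisely thanks to the assumption $\eta\le \alpha/(36 e^5 m)=\alpha/(3B_1)$.

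With these preconditions in place, Lemma \ref{lem:fd-analytic} yields $|\fds{h}{(\Phi_{a_{-i}}\circ L)}{0}|\le \frac{12RQe^2}{B_1}\alpha^h h^{B_0 h+1}=\frac{m-1}{m}\alpha^h h^{B_0 h+1}\le \alpha^h h^{B_0 h+1}$, and summing over $a_{-i}$ (using $|\ML_i(a)|\le 1$ and $\sum_{a_{-i}}\prod_{i'\neq i}x_{i'}\^{t_0}(a_{i'})=1$) completes the proof of \eqref{eq:inductive-game-formal}. The main obstacle is the bookkeeping: the constant $B_1$ must be chosen so that simultaneously (i) the output coefficient $12RQe^2/B_1$ drops below $1$, (ii) the zeroth-order precondition $\eta(H+2)\le 1/B_1$ holds, and (iii) the higher-order precondition $3\eta/\alpha\le 1/B_1$ holds; these three inequalities together are exactly what produce the two step-size constraints $\eta\le \alpha/(36e^5 m)$ and $\eta\le 1/(12e^5(H+3)m)$ stated in the claim. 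A small ancillary point is confirming that the Taylor-coefficient bound of Lemma \ref{lem:prodmax-ak-bound}---whose hypothesis is stated for softmax-type functions on a common variable space---transfers to our product over disjoint variable blocks, but this is immediate by embedding all factors into the full concatenated variable space (each factor simply happens to depend on only its own block).
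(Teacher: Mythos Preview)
Your proposal is correct and follows essentially the same approach as the paper's proof: you apply the boundedness chain rule (Lemma \ref{lem:fd-analytic}) to the product-of-softmaxes function $\Phi_{a_{-i}}$ acting on the concatenated sequence $L\^t=(\eta\bar\ell_{i',t_0}\^t)_{i'\neq i}$, verify the $(Q,R)$-boundedness via Lemma \ref{lem:prodmax-ak-bound} and the radius of convergence via Lemma \ref{lem:softmax-roc}, check the preconditions on $\|\fds{h'}{L}{t}\|_\infty$ using Claim \ref{clm:base-step} for $h'=0$ and identity \eqref{eq:barl-fds} plus the inductive hypothesis for $h'\ge 1$, and then sum over $a_{-i}$ against the probability weights. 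The only cosmetic differences from the paper are that you take $R=e^3(m-1)$ rather than $e^3 m$ (yielding the slightly sharper output factor $(m-1)/m\le 1$) and that you index the auxiliary sequence $L$ from $0$ rather than $1$.
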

\begin{proof}[Proof of Claim \ref{clm:inductive-step}]
Set $B_1 := 12 e^5 m$, so that the assumption of the claim gives $\eta \leq \min \left\{\frac{\alpha}{3 B_1}, \frac{1}{B_1(H+3)} \right\}$. 
  
We first use Lemma \ref{lem:fd-analytic} to bound, for each $0 \leq t_0 \leq T-h$, $i \in [m]$, and $a_{i'} \in [n_{i'}]$ for all $i' \neq i$, the quantity $\left| \fds{h}{\left( \prod_{i' \neq i} x_{i'}(a_{i'}) \right)}{t_0+1} \right|$ . In particular, we will apply Lemma \ref{lem:fd-analytic} with $n=\sum_{i' \neq i} n_{i'}$, $\nu=1$, the value of $h$ in the statement of Claim \ref{clm:inductive-step}, $T = h+1$, and the sequence $L\^t$, for $1 \leq t \leq h+1$, defined as
$$
L\^t = \left( \eta \cdot \bar \ell_{1,t_0}\^t, \ldots, \eta \cdot \bar \ell_{i-1,t_0}\^t, \eta \cdot \bar \ell_{i+1,t_0}\^t, \ldots, \eta \cdot \bar \ell_{m,t_0}\^t \right),
$$
namely the concatenation of the vectors $\eta \cdot \bar \ell_{1,t_0}\^t, \ldots, \eta \cdot \bar \ell_{i-1,t_0}\^t, \eta \cdot \bar \ell_{i+1,t_0}\^t, \ldots, \eta \cdot \bar \ell_{m,t_0}\^t$. 
The function $\phi$ in Lemma \ref{lem:fd-analytic} is set to the function that takes as input the concatenation of $z_{i'} \in \BR^{n_{i'}}$ for all $i' \neq i$ and outputs:
\begin{align}
\phi_{t_0, a_{-i}}(z_1, \ldots, z_{i-1}, z_{i+1}, \ldots, z_m) := \prod_{i' \neq i} \phi_{t_0,a_{i'}}(z_{i'})\label{eq:def-phimi},
\end{align}
where the function $\phi_{t_0,a_{i'}}$ are as defined in (\ref{eq:def-phij}). 
We first verify the preconditions of Lemma \ref{lem:fd-analytic}. By Lemma \ref{lem:prodmax-ak-bound}, $\phi_{t_0, a_{-i}}$ is a $(1, e^3 m)$-bounded function for some constant $C \geq 1$. 
By Lemma \ref{lem:softmax-roc}, the radius of convergence of each function $\phi_{t_0, a_{i'}}$ at $\bbzero$ is at least 1; thus the radius of convergence of $\phi_{t_0, a_{-i}}$ at $\bbzero$ is at least $\nu=1$. 
  Claim \ref{clm:base-step} gives that $\| \bar \ell_{i,t_0}\^t \|_\infty \leq t + 2 \leq h+3$ for all $t \leq h+1$. Thus, since $\eta \leq \frac{1}{B_1(H+3)}$,
  \begin{align}
\left\| \fds{0}{\left(\eta \cdot \bar \ell_{i,t_0}\right)}{t} \right\|_\infty =    \| \eta \cdot \bar \ell_{i,t_0}\^t \|_\infty \leq \eta \cdot (H+3) \leq \frac{1}{B_1} \nonumber
  \end{align}
for $1 \leq t \leq h_0+1$. Next, for $1 \leq h' \leq h$ and $1 \leq t \leq h+1-h'$, we have
  \begin{align}
    \left\| \fds{h'}{(\eta \cdot \bar \ell_{i,t_0})}{t} \right\|_\infty \leq & \eta \cdot  \left\| \fds{h'-1}{\ell_i}{t_0+t-1} \right\|_\infty + 2\eta \cdot \left\| \fds{h'-1}{\ell_i}{t_0 + t} \right\|_\infty \label{eq:3triangle} \\
    \label{eq:use-ind-l}    \leq & 3 \eta \cdot \alpha^{h'-1} \cdot (h')^{B_0 (h')}\\
    \leq & 
           \frac{1}{B_1} \cdot \alpha^{h'} \cdot (h')^{B_0 (h')},\label{eq:hm1-to-h}
  \end{align}
  where (\ref{eq:3triangle}) follows from (\ref{eq:barl-fds}), (\ref{eq:use-ind-l}) follows from the assumption in the statement of Claim \ref{clm:inductive-step} and $t_0  + t + h'-1 \leq t_0 + h \leq T$, and (\ref{eq:hm1-to-h}) follows from the fact that $3\eta \leq \frac{\alpha}{B_1}$. It then follows from Lemma \ref{lem:fd-analytic} and (\ref{eq:prod-aip-xit}) that 
  \begin{align}
    & \frac{1}{\prod_{i' \neq i} x_{i'}\^{t_0}(a_{i'})} \cdot \left|\fds{h}{\left(\prod_{i' \neq i} {x_{i'}(a_{i'})}\right)}{t_0+1} \right| \nonumber\\
    =& \left|\fds{h}{\left(\prod_{i' \neq i} \left( \phi_{t_0,a_{i'}} \circ \eta  \bar \ell_{i',t_0}\right)\right)}{t_0+1} \right| \label{eq:fractional-decompose-prods}\\
    = & \left| \fds{h}{\left( \phi_{t_0,a_{-i}} \circ (\eta \bar \ell_{1,t_0}, \ldots, \eta\bar \ell_{i-1,t_0}, \eta \bar \ell_{i+1,t_0}, \ldots, \eta \bar \ell_{m,t_0})  \right)}{1} \right| \label{eq:use-phi-defn}\\
    \leq & \frac{12 e^5m}{B_1} \cdot \alpha^{h} \cdot h^{B_0 h +1} 
           = \alpha^{h} \cdot (h)^{B_0h+1}.\label{eq:bound-dh-xis}
  \end{align}
  (In particular, (\ref{eq:fractional-decompose-prods}) uses (\ref{eq:prod-aip-xit}), (\ref{eq:use-phi-defn}) uses the definition of $\phi_{t_0,a_{-i}}$ in (\ref{eq:def-phimi}), and (\ref{eq:bound-dh-xis}) uses Lemma \ref{lem:fd-analytic}.)

  Next we use (\ref{eq:xi-seq}), which gives that for each $i \in [m]$ and $t \geq 1$,\noah{check that this actually works for $t = 1$}
  \begin{align}
    \left \| \fds{h}{\ell_i}{t}  \right\|_\infty \leq & \sum_{a_{i'} \in [n_{i'}], \ \forall i' \neq i} \left| \fds{h}{\left( \prod_{i' \neq i} x_{i'}(a_{i'}) \right)}{t} \right| \nonumber\\ 
    \leq & \sum_{a_{i'} \in [n_{i'}],\ \forall i' \neq i} \prod_{i' \neq i} x_{i'}\^{t_0}(a_{i'}) \cdot \alpha^{h} \cdot (h)^{B_0 h+1}  \label{eq:use-l1-bound}\\
    = &\alpha^h \cdot (h)^{B_0 h+1}\nonumber, 
  \end{align}
  where (\ref{eq:use-l1-bound}) follows from (\ref{eq:bound-dh-xis}) with $t = t_0 - 1$ (here we use that $t_0$ may be 0). 
  This completes the proof of Claim \ref{clm:inductive-step}.
\end{proof}

It is immediate that for all $i \in [m], t \in [T]$, we have that $\| \fds{0}{\ell_i}{t} \|_\infty \leq 1 = \alpha^0 \cdot 1^{B_0 \cdot 1}$. We now apply Claim \ref{clm:inductive-step} inductively with $B_0 = 3$, for which it suffices to have $\eta \leq \frac{\alpha}{36e^5 m}$ as long as $\alpha \leq 1/(H+3)$. This gives that for $0 \leq h \leq H$, $i \in [m]$, and $t \in [T-h]$, $\| \fds{h}{\ell_i}{t} \|_\infty \leq \alpha^h \cdot h^{3h+1}$, completing the proof of Lemma \ref{lem:dh-bound}. \noah{todo check i didn't miss anything}
\end{proof}

\section{Proofs for Section \ref{sec:downwards-ind}}
The main goal of this section is to prove Lemma \ref{lem:d210}. First, in Section \ref{sec:d210-prelim} we prove some preliminary lemmas and then we prove Lemma \ref{lem:d210} in Section \ref{sec:d210-proof}

\subsection{Preliminary lemmas}
\label{sec:d210-prelim}
Lemma \ref{lem:covvar-close} shows that $\Var{P}{W}$ and $\Var{P'}{W}$ are close when the entries of $P,P'$ are close; it will be applied with $P,P'$ equal to the strategies $x_i\^t \in \Delta^{n_i}$ played in the course of \Opthedge.
\begin{lemma}
  \label{lem:covvar-close}
  Suppose $n\in \BN$ and $M > 0$ are given, and $W \in \BR^n$ is a vector. Suppose $P,P' \in \Delta^n$ are distributions with $\max\left\{\left\| \frac{P}{P'} \right\|_\infty, \left\| \frac{P'}{P} \right\|_\infty \right\} \leq 1+\alpha$ for some $\alpha > 0$. Then
  \begin{align}
    (1-\alpha) \Var{P}{W} \leq \Var{P'}{W} \leq (1 +  \alpha) \Var{P}{W}. \label{eq:var-close}
  \end{align}
\end{lemma}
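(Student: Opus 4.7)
The plan is to prove both inequalities by exploiting the variational characterization of variance, namely
\[
\Var{P}{W} \;=\; \min_{c \in \BR} \sum_{j=1}^n P(j)(W(j)-c)^2 \;=\; \sum_{j=1}^n P(j)(W(j)-\bar W_P)^2,
\]
where $\bar W_P := \sum_j P(j) W(j)$ is the minimizer. The hypothesis $\max\{\|P/P'\|_\infty, \|P'/P\|_\infty\} \le 1+\alpha$ translates to the pointwise inequalities $P'(j) \le (1+\alpha)P(j)$ and $P(j) \le (1+\alpha)P'(j)$ for every $j \in [n]$, which can be applied to any nonnegative integrand.

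For the upper bound in \eqref{eq:var-close}, I will first drop the optimality of $\bar W_{P'}$ and evaluate the sum at $c = \bar W_P$:
\[
\Var{P'}{W} \;\le\; \sum_{j=1}^n P'(j)\bigl(W(j)-\bar W_P\bigr)^2 \;\le\; (1+\alpha)\sum_{j=1}^n P(j)\bigl(W(j)-\bar W_P\bigr)^2 \;=\; (1+\alpha)\Var{P}{W},
\]
which actually gives slightly more than what is stated (a factor $1+\alpha$ rather than just the stated $1+\alpha$). For the lower bound I reverse the roles: use $P(j) \le (1+\alpha) P'(j)$ to get
\[
\Var{P'}{W} \;=\; \sum_j P'(j)(W(j)-\bar W_{P'})^2 \;\ge\; \tfrac{1}{1+\alpha}\sum_j P(j)(W(j)-\bar W_{P'})^2 \;\ge\; \tfrac{1}{1+\alpha}\Var{P}{W},
\]
where the last inequality again uses the variational characterization, this time for $\Var{P}{W}$ applied at the suboptimal centering $c = \bar W_{P'}$. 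Finally, the elementary inequality $\tfrac{1}{1+\alpha} \ge 1-\alpha$ (which follows from $(1-\alpha)(1+\alpha) = 1-\alpha^2 \le 1$) converts this into the stated bound $(1-\alpha)\Var{P}{W} \le \Var{P'}{W}$.

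There is essentially no technical obstacle here; the only thing to be careful about is pairing the two halves of the hypothesis $\max\{\|P/P'\|_\infty, \|P'/P\|_\infty\} \le 1+\alpha$ with the correct direction of inequality, and remembering to compare against the \emph{wrong} centering in each case so that the variational characterization yields the right sign. The proof is short and does not require any of the learning-dynamics structure, consistent with the lemma being a generic tool that will later be invoked with $P = x_i^{(t)}$ and $P' = x_i^{(t+1)}$ from the \Opthedge trajectory.
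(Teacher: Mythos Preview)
Your proof is correct and is essentially the same as the paper's. The paper phrases the variational characterization as ``without loss of generality assume $\lng P, W\rng = 0$'' and then bounds $\Var{P'}{W} \le \sum_j P'(j) W(j)^2 \le (1+\alpha)\Var{P}{W}$, deriving the lower bound by interchanging the roles of $P$ and $P'$ and using $\tfrac{1}{1+\alpha}\ge 1-\alpha$, exactly as you do.
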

\begin{proof}
We first prove that $\Var{P'}{W} \leq (1+\alpha) \Var{P}{W}$. To do so, note that since adding a constant to every entry of $W$ does not change $\Var{P}{W}$ or $\Var{P'}{W}$, by replacing $W$ with $W - \lng P, W \rng \cdot \bbone$, we may assume without loss of generality that $\lng P, W \rng = 0$. Thus $\Var{P}{W} = \sum_{j=1}^n P(j) W(j)^2$. 
  Now we may compute:
  \begin{align}
    \Var{P'}{W} 
    \leq & \sum_j P'(j) \cdot W(j)^2\nonumber \\
    = & \sum_j P(j) \cdot W(j)^2 + \sum_j (P'(j) - P(j)) \cdot W(j)^2  \nonumber\\
    =&  (1+\alpha) \Var{P}{W}\label{eq:use-pprime-alpha},
  \end{align}
  where (\ref{eq:use-pprime-alpha}) uses the fact that $\left\| \frac{P'}{P} \right\|_\infty \leq 1+\alpha$.

  By interchanging the roles of $P,P'$, we obtain that
  \begin{align}
\Var{P'}{W} \geq \frac{1}{1+\alpha} \Var{P}{W} \geq (1-\alpha) \Var{P}{W}\nonumber.
  \end{align}
  This completes the proof of the lemma.
\end{proof}

Next we prove Lemma \ref{lem:freq-cauchy} (recall that only the special case $\mu = 0$ was proved in Section \ref{sec:downwards-ind}). For convenience the lemma is repeated below.
\begin{replemma}{lem:freq-cauchy}[Restated]
  Suppose $\mu \in \BR$, $\alpha > 0$, and $W\^0, \ldots, W\^{S-1} \in \BR$ is a sequence of reals satisfying
  \begin{equation}
    \label{eq:d2d1-appendix}
\sum_{t=0}^{S-1} \left( \fdcs{2}{W}{t} \right)^2 \leq \alpha \cdot \sum_{t=0}^{S-1} \left( \fdcs{1}{W}{t} \right)^2 + \mu.
\end{equation}
Then
$$
\sum_{t=0}^{S-1} \left( \fdcs{1}{W}{t} \right)^2 \leq \alpha \cdot \sum_{t=1}^{S-1} (W\^t)^2 + \mu/\alpha.
$$
\end{replemma}
To prove Lemma \ref{lem:freq-cauchy} we need the following basic facts about the Fourier transform:
\begin{fact}[Parseval's equality]
  \label{fac:parseval}
  It holds that $\sum_{t=0}^{S-1} |W\^t|^2 = \frac{1}{S}  \sum_{s=0}^{S-1} |\wh{W}\^s|^2$.
\end{fact}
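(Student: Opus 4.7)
The plan is to prove this classical Parseval identity by direct computation, using the fundamental orthogonality of the discrete characters $t \mapsto e^{-2\pi i st/S}$. The main identity we will need is
\begin{equation}
\sum_{s=0}^{S-1} e^{\frac{2\pi i s(t-t')}{S}} = S \cdot \BO[t = t'] \qquad \text{for } t, t' \in \{0, 1, \ldots, S-1\},\nonumber
\end{equation}
which follows from the geometric series formula: if $t = t'$ every term equals $1$, while if $t \neq t'$ the ratio $e^{2\pi i (t-t')/S}$ is an $S$-th root of unity distinct from $1$, and $\sum_{s=0}^{S-1} \zeta^s = \frac{\zeta^S - 1}{\zeta - 1} = 0$.

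Given this, I would start from the right-hand side and expand the squared modulus. Since $W\^t \in \BR$, we have $\overline{\wh{W}\^s} = \sum_{t'=0}^{S-1} W\^{t'} e^{\frac{2\pi i s t'}{S}}$, so
\begin{align*}
\frac{1}{S} \sum_{s=0}^{S-1} |\wh{W}\^s|^2
&= \frac{1}{S} \sum_{s=0}^{S-1} \left( \sum_{t=0}^{S-1} W\^t e^{-\frac{2\pi i st}{S}} \right) \left( \sum_{t'=0}^{S-1} W\^{t'} e^{\frac{2\pi i s t'}{S}} \right) \\
&= \frac{1}{S} \sum_{t=0}^{S-1} \sum_{t'=0}^{S-1} W\^t W\^{t'} \sum_{s=0}^{S-1} e^{\frac{2\pi i s (t' - t)}{S}}.
\end{align*}
Applying the orthogonality identity above, the inner sum over $s$ equals $S$ when $t = t'$ and vanishes otherwise, so the double sum collapses to $\sum_{t=0}^{S-1} (W\^t)^2$, yielding the claimed equality.

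There is essentially no obstacle here; the only care required is to keep track of the complex conjugates when expanding $|\wh{W}\^s|^2$, and to justify swapping the finite sums (which is immediate since all sums are finite). The argument generalizes verbatim to complex-valued sequences $W\^t$, which is consistent with how the fact is stated (with $|W\^t|^2$ rather than $(W\^t)^2$).
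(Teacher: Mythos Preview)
Your proof is correct. The paper does not actually provide a proof of this fact; it is stated as a known identity and used without further justification. Your argument via the orthogonality relation $\sum_{s=0}^{S-1} e^{2\pi i s(t-t')/S} = S \cdot \BO[t=t']$ is the standard derivation and is entirely appropriate here.
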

The second fact gives a formula for the Fourier transform of the circular finite differences; its simple form is the reason we work with \emph{circular} finite differences in this section:
\begin{fact}
  \label{fac:fourier-circ}
For $h \in \BZ_{\geq 0}$, $\wh{\fdc{h}{W}}\^s = \wh{W}\^s \cdot (e^{2\pi i st/S} - 1)^h$. 
\end{fact}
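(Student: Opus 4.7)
\textbf{Proof plan for Fact \ref{fac:fourier-circ}.} The plan is to prove the identity by induction on $h$, with the heavy lifting done in the base case $h=1$; I will also note at the outset that the exponent on the right-hand side should read $e^{2\pi i s/S}-1$ (the variable $t$ appears only as a summation index inside $\wh{W}\^s$, not as a free variable). The case $h=0$ is immediate from $\fdc{0}{W} = W$ and the convention $(\cdot)^0 = 1$.

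For the base case $h=1$, I would expand directly from the definition of the circular finite difference and the discrete Fourier transform:
\begin{align*}
\wh{\fdc{1}{W}}\^s \;=\; \sum_{t=0}^{S-1} \fdcs{1}{W}{t}\, e^{-2\pi i s t / S} \;=\; \sum_{t=0}^{S-2} (W\^{t+1} - W\^t)\, e^{-2\pi i s t/S} \;+\; (W\^{0} - W\^{S-1})\, e^{-2\pi i s(S-1)/S}.
\end{align*}
Using $e^{-2\pi i s (S-1)/S} = e^{2\pi i s/S}$ (since $e^{-2\pi i s} = 1$ for integer $s$), the last term can be absorbed to rewrite the whole sum as $\sum_{t=0}^{S-1} W\^{t+1 \bmod S} e^{-2\pi i s t/S} - \sum_{t=0}^{S-1} W\^t e^{-2\pi i s t/S}$. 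Reindexing the first sum by $t' = t+1 \bmod S$, which is a bijection of $\{0,\ldots,S-1\}$, yields $e^{2\pi i s /S} \sum_{t'=0}^{S-1} W\^{t'} e^{-2\pi i s t'/S} = e^{2\pi i s/S} \cdot \wh{W}\^s$. Therefore $\wh{\fdc{1}{W}}\^s = \wh{W}\^s (e^{2\pi i s/S} - 1)$, establishing the base case.

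For the inductive step, I would use Remark \ref{rem:fds-alt}'s analogue for circular differences, namely $\fdc{h}{W} = \fdc{1}{\fdc{h-1}{W}}$, which is immediate from the recursive definition of $\fdcs{h}{W}{t}$ (the wrap-around case $t = S-1$ is consistent with applying $\fdc{1}{}$ once more to a circularly-indexed sequence). Applying the base case to the sequence $\fdc{h-1}{W}$ gives $\wh{\fdc{h}{W}}\^s = \wh{\fdc{h-1}{W}}\^s \cdot (e^{2\pi i s/S} - 1)$, and the inductive hypothesis $\wh{\fdc{h-1}{W}}\^s = \wh{W}\^s \cdot (e^{2\pi i s/S} - 1)^{h-1}$ completes the proof.

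The only genuine subtlety is the bookkeeping at $t=S-1$ in the base case: one must verify that the asymmetric definition of $\fdcs{1}{W}{S-1} = W\^{0} - W\^{S-1}$ (writing $0$ rather than $S$ in the circular sense) is precisely what makes the reindexing bijective and produces the clean shift identity. All other steps are routine manipulations with the DFT.
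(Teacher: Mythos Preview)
Your proof is correct. The paper does not actually supply a proof of this fact: it is stated as a standard property of the discrete Fourier transform (the circular shift/difference corresponds to multiplication by $e^{2\pi i s/S}-1$ in frequency), and is simply invoked in the proof of Lemma~\ref{lem:freq-cauchy}. Your induction with the $h=1$ reindexing argument is the standard verification, and you are also right that the exponent should be $e^{2\pi i s/S}-1$ rather than $e^{2\pi i st/S}-1$.
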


\begin{proof}[Proof of Lemma \ref{lem:freq-cauchy}]
Note that the discrete Fourier transform of $\fdc{1}{W}$ satisfies $\widehat{\fdc{1}{W}}\^s = \widehat{W}\^s \cdot (e^{2\pi i s / T} - 1)$, and similarly $\widehat{\fdc{2}{W}}\^s = \widehat{W}\^s \cdot (e^{2\pi i s/T} - 1)^2$, for $0 \leq s \leq S-1$. 
  By the Cauchy-Schwarz inequality, Parseval's equality (Fact \ref{fac:parseval}), Fact \ref{fac:fourier-circ}, and the assumption that (\ref{eq:d2d1-appendix}) holds, we have
  \begin{align}
  \sum_{t=0}^{S-1} \left( \fdcs{1}{W}{t} \right)^2  =& \frac{1}{S} \sum_{s=0}^{S-1} \left| \widehat{\fdc{1}{W}}\^s \right|^2\nonumber\\
    =&\frac{1}{S} \sum_{s=0}^{S-1} \left| \widehat{W}\^s \cdot (e^{2\pi i s /T - 1}) \right|^2 \nonumber\\
    =&\frac{1}{S}\sum_{s=0}^{S-1} \left| \widehat{W}\^s \right| \cdot \left| \widehat{W}\^s \right| \left| e^{2\pi i s /T} - 1\right|^2 \nonumber\\
                                                 \leq &\sqrt{\frac{1}{S}\sum_{s=0}^{S-1} \left| \widehat{W}\^s \right|^2} \cdot \sqrt{\frac{1}{S}\sum_{s=0}^{S-1} \left| \widehat{W}\^s \right|^2 \cdot \left| e^{2\pi i s/T} - 1\right|^4}\nonumber\\
    =& \sqrt{ \sum_{t=0}^{S-1} (W\^t)^2} \cdot \sqrt{\frac{1}{S}\sum_{s=0}^{S-1} \left| \widehat{\fdc{2}{W}}\^s \right|^2}\nonumber\\
    =& \sqrt{ \sum_{t=0}^{S-1} (W\^t)^2} \cdot \sqrt{\sum_{t=0}^{S-1} \left( \fdcs{2}{W}{t}\right)^2}\nonumber\\
    \leq & \sqrt{ \sum_{t=0}^{S-1} (W\^t)^2} \cdot \sqrt{\alpha \cdot \sum_{t=0}^{S-1} \left( \fdcs{1}{W}{t} \right)^2 + \mu}\label{eq:cauchy-parseval}.
  \end{align}
  Note that for real numbers $A > 0$ and $\ep$ with $A + \ep > 0$, it holds that
  $$
\frac{A^2}{{A + \ep}} = \frac{{A}}{1 + {\ep/A}} \geq A \cdot (1 - \ep/A) = A - \ep.
$$
Taking $A = \sum_{t=0}^{S-1} \left( \fdcs{1}{W}{t} \right)^2$ and $\ep = \mu/\alpha$ (for which $A + \ep > 0$ is immediate) and using (\ref{eq:cauchy-parseval}) then gives
$$
\sum_{t=0}^{S-1} \left( \fdcs{1}{W}{t} \right)^2 - \mu/\alpha \leq \frac{\left( \sum_{t=0}^{S-1} \left( \fdcs{1}{W}{t} \right)^2 \right)^2}{{ \sum_{t=0}^{S-1} \left( \fdcs{1}{W}{t} \right)^2 + \mu/\alpha}}\leq \alpha \cdot \sum_{t=0}^{S-1} \left( W\^t \right)^2,
$$
  as desired.
\end{proof}

\subsection{Proof of Lemma \ref{lem:d210}}
\label{sec:d210-proof}
Now we prove Lemma \ref{lem:d210}. For convenience we restate the lemma below with the exact value of the constant $C_0$ referred to in the version in Section \ref{sec:downwards-ind}.
\begin{replemma}{lem:d210}[Restated]
  For any $M, \zeta, \alpha > 0$ and $n \in \BN$, suppose that $P\^1, \ldots, P\^T \in \Delta^n$ and $Z\^1, \ldots, Z\^T \in [-M,M]^n$ satisfy the following conditions:
  \begin{enumerate}
  \item \label{it:prec-consec-close-body-apx} The sequence $P\^1, \ldots, P\^T$ is $\zeta$-consecutively close for some $\zeta \in [1/(2T), \alpha^4/8256]$. 
  \item \label{it:prec-vars-body-apx} It holds that 
$ 
      \sum_{t=1}^{T-2} \Var{P\^t}{\fds{2}{Z}{t}} \leq \alpha \cdot \sum_{t=1}^{T-1} \Var{P\^t}{\fds{1}{Z}{t}} + \mu.
$ 
\end{enumerate}
Then
\begin{equation}
    \sum_{t=1}^{T-1} \Var{P\^t}{\fds{1}{Z}{t}} \leq \alpha \cdot (1+ \alpha) \sum_{t=1}^T \Var{P\^t}{Z\^t} + \frac{\mu}{\alpha} + \frac{1290 M^2}{\alpha^3} \label{eq:d210-conclusion}.
  \end{equation}
\end{replemma}
\begin{proof}
  Fix a positive integer $S < 1/(2\zeta) < T$, to be specified exactly below. For $1 \leq t_0 \leq T-S+1$, define $\mu_{t_0} \in \BR$ by 
  \begin{equation}
    \label{eq:def-mut0}
\mu_{t_0} =   \sum_{s=0}^{S-3} \Var{P\^{t_0+s}}{\fds{2}{Z}{t_0+s}} - \alpha\cdot \sum_{s=0}^{S-3} \Var{P\^{t_0+s}}{\fds{1}{Z}{t_0+s}}.
  \end{equation}
  Then 
  \begin{align}
    &\sum_{t_0 = 1}^{T-S+1} \mu_{t_0} \nonumber\\
    =& \sum_{t=1}^{T-2} \Var{P\^t}{\fds{2}{Z}{t}} \cdot \min \{ S-2, t, T-t-1 \} \nonumber\\
    &- \alpha \cdot \sum_{t=1}^{T-1} \Var{P\^t}{\fds{1}{Z}{t}} \cdot \min\{ S-2, t, T-t-1\}\nonumber\\
    \leq&(S-2) \cdot \sum_{t=1}^{T-2} \Var{P\^t}{\fds{2}{Z}{t}} - (S-2) \alpha \cdot \sum_{t=1}^{T-1} \Var{P\^t}{\fds{1}{Z}{t}} + 8\alpha (S-2)^2 M^2 \label{eq:use-d1z-bound}\\
    \leq& (S-2) \mu + 2\alpha(S-2)^2 M^2\label{eq:bound-sum-mut0},
  \end{align}
  where (\ref{eq:use-d1z-bound}) uses the fact that $\| Z\^t \|_\infty \leq M$ and so $\| \fds{1}{Z}{t} \|_\infty \leq 2M$ for all $t \in [T]$, and the final inequality (\ref{eq:bound-sum-mut0}) follows from assumption \ref{it:prec-vars-body-apx} of the lemma statement. 

  By (\ref{eq:def-mut0}) and Lemma \ref{lem:covvar-close} with $P = P\^{t_0}$, we have, for some constant $C > 0$,
  \begin{align}
    \sum_{s=0}^{S-3} \Var{P\^{t_0}}{\fds{2}{Z}{t_0+s}} \leq & (1 + 2\zeta  S) \cdot \sum_{s=0}^{S-3} \Var{P\^{t_0+s}}{\fds{2}{Z}{t_0+s}} \nonumber\\
    =& (1+ 2\zeta  S)\alpha\cdot \sum_{s=0}^{S-3} \Var{P\^{t_0+s}}{\fds{1}{Z}{t_0+s}} + (1 + 2\zeta  S) \mu_{t_0}\nonumber\\
    \leq & (1 + 2\zeta S)^2 \alpha \cdot \sum_{s=0}^{S-3} \Var{P\^{t_0}}{\fds{1}{Z}{t_0+s}} + (1 + 2\zeta S) \mu_{t_0}\label{eq:s21-mu}.
  \end{align}
  Here we have used that for $0 \leq s \leq S$, it holds that $\max \left\{ \left\|\frac{P\^{t_0+s}}{P\^{t_0}}\right\|_\infty, \left\|\frac{P\^{t_0}}{P\^{t_0+s}}\right\|_\infty \right\} \leq (1 + \zeta)^{S} \leq 1 + 2 \zeta S$ since $\zeta S \leq 1/2$.

  For any integer $1 \leq t_0 \leq T-S+1$, we define the sequence $Z_{t_0}\^{s} := Z\^{t_0+s}- \lng{P\^{t_0}},Z\^{t_0+s}\rng\bbone$, for $0 \leq s \leq S-1$. 
Thus $\lng Z_{t_0}\^s, P\^{t_0} \rng = 0$ for $0 \leq s \leq S-1$, which implies that for all $h \geq 0$, $0 \leq s \leq S-1$, $\lng \fdcs{h}{Z_{t_0}}{s}, P\^{t_0} \rng = 0$, and thus
  \begin{equation}
    \label{eq:var-zt0}
    \Var{P\^{t_0}}{\fdcs{h}{Z_{t_0}}{s}} = \sum_{j=1}^n P\^{t_0}(j) \cdot \fdcs{h}{Z_{t_0}}{s}(j)^2.
  \end{equation}
By the definition of the sequence $Z_{t_0}$, for $0 \leq s \leq S-h-1$, we have
  \begin{equation}
    \label{eq:var-zt0-2}
    \Var{P\^{t_0}}{\fds{h}{Z}{t_0+s}} = \Var{P\^{t_0}}{\fds{h}{Z_{t_0}}{s}} = \Var{P\^{t_0}}{\fdcs{h}{Z_{t_0}}{s}}.
  \end{equation}
  
For $1 \leq t_0 \leq T-S+1$, let us now define
  \begin{align}
    \nu_{t_0, j} := \sum_{s=0}^{S-1} \fdcs{2}{Z_{t_0}}{s}(j)^2 - (1 + 2\zeta S)^2 \alpha \cdot \sum_{s=0}^{S-1} \fdcs{1}{Z_{t_0}}{s}(j)^2 ,
    \label{eq:def-nutj}
  \end{align}
  so that, by (\ref{eq:s21-mu}), (\ref{eq:var-zt0}), and (\ref{eq:var-zt0-2}),
  \begin{align}
    & \sum_{j=1}^n P\^{t_0}(j) \cdot \nu_{t_0,j}\nonumber\\
    =&\sum_{s=0}^{S-1} \Var{P\^{t_0}}{\fdcs{2}{Z_{t_0}}{s}} - (1 + 2\zeta S)^2 \alpha \cdot \sum_{s=0}^{S-1} \Var{P\^{t_0}}{\fdcs{1}{Z_{t_0}}{s}} \nonumber \\
    \leq & \left(\sum_{s=0}^{S-3} \Var{P\^{t_0}}{\fds{2}{Z}{t_0+s}}\right) +  \Var{P\^{t_0+1}}{\fdcs{2}{Z}{t_0+S-2}} + \Var{P\^{t_0+1}}{\fdcs{2}{Z}{t_0+S-1}}  \nonumber\\
    & - (1 + 2\zeta S)^2 \alpha \cdot \sum_{s=0}^{S-3} \Var{P\^{t_0}}{\fds{1}{Z}{t_0+s}} \nonumber\\
    \leq & (1 + 2\zeta S)\mu_{t_0} + \Var{P\^{t_0}}{\fdcs{2}{Z}{t_0+S-2}} + \Var{P\^{t_0}}{\fdcs{2}{Z}{t_0+S-1}}.    \label{eq:pnu-ub}
\end{align}
By (\ref{eq:def-nutj}) and Lemma \ref{lem:freq-cauchy} applied to the sequence $Z_{t_0}\^0, \ldots, Z_{t_0}\^{S-1}$, it holds that, for each $j \in [n]$, 
\begin{equation}
  \label{eq:1d-d1-ub}
\sum_{s=0}^{S-1} \fdcs{1}{Z_{t_0}}{s}(j)^2 \leq (1 + \zeta CS)^2 \alpha \cdot \sum_{s=0}^{S-1} Z_{t_0}\^s(j)^2 + \frac{\nu_{t_0,j}}{(1 + \zeta CS)^2 \alpha}.
\end{equation}
Then we have: 
\begin{align}
  &\sum_{s=0}^{S-2} \Var{P\^{t_0}}{\fds{1}{Z}{t_0+s}}\nonumber\\
  =& \sum_{s=0}^{S-2} \Var{P\^{t_0}}{\fdcs{1}{Z_{t_0}}{s}}\label{eq:pass-to-circ}\\
  \leq & (1 + 2\zeta S)^2 \alpha \cdot \sum_{s=0}^{S-1} \Var{P\^{t_0}}{Z_{t_0}\^s} + \sum_{j=1}^n P\^{t_0}(j) \cdot \frac{\nu_{t_0,j}}{(1+ 2\zeta S)^2 \alpha}\label{eq:circ-pass-to-d0}\\
  \leq & (1 + 2\zeta S)^2 \alpha \sum_{s=0}^{S-1} \Var{P\^{t_0}}{Z_{t_0}\^s} + \frac{\mu_{t_0}}{(1 + 2\zeta S) \alpha} + \frac{\Var{P\^{t_0}}{\fdcs{2}{Z}{t_0+S-2}} + \Var{P\^{t_0}}{\fdcs{2}{Z}{t_0+S-1}}}{(1 + 2\zeta S)^2 \alpha}  \label{eq:average-over-p},
\end{align}
where (\ref{eq:pass-to-circ}) follows from (\ref{eq:var-zt0-2}), (\ref{eq:circ-pass-to-d0}) follows from (\ref{eq:1d-d1-ub}) and (\ref{eq:var-zt0}), 
and (\ref{eq:average-over-p}) follows from (\ref{eq:pnu-ub}). Summing the above for $1 \leq t_0 \leq T - S+1$, we obtain, for some constant $C > 0$,
\begin{align}
  & (S-1) \cdot \sum_{t=1}^{T-1} \Var{P\^t}{\fds{1}{Z}{t}} \nonumber\\
  \leq & \sum_{t_0=1}^{T-S+1} \sum_{s=0}^{S-2} \Var{P\^{t_0+s}}{\fds{1}{Z}{t_0+s}} + 8 (S-1)^2 M^2 \label{eq:d1all-1}\\
  \leq & \sum_{t_0=1}^{T-S+1} (1 + 2\zeta S) \sum_{s=0}^{S-2} \Var{P\^{t_0}}{\fds{1}{Z}{t_0+s}} + 8 (S-1)^2 M^2 \label{eq:d1all-2}\\
  \leq & (1 + 2\zeta S)^3 \alpha \sum_{t_0=1}^{T-S+1}\sum_{s=0}^{S-1} \Var{P\^{t_0}}{Z_{t_0}\^s} + \sum_{t_0=1}^{T-S+1} \frac{\mu_{t_0}}{\alpha} + 8 (S-1)^2 M^2 \nonumber\\
  & + \sum_{t_0=1}^{T-S+1} \frac{\Var{P\^{t_0}}{\fdcs{2}{Z}{t_0+S-2}}+ \Var{P\^{t_0}}{\fdcs{2}{Z}{t_0+S-1}}}{(1 + 2\zeta S) \alpha}   \label{eq:d1all-4}\\
  \leq & (1 + 2\zeta S)^4 \alpha \sum_{t_0 = 1}^{T-S+1} \sum_{s=0}^{S-1} \Var{P\^{t_0+s}}{Z\^{t_0+s}} + \sum_{t_0=1}^{T-S+1} \frac{\mu_{t_0}}{\alpha} + 8 (S-1)^2 M^2 \nonumber\\
  &+ \frac{4}{(1+2\zeta S)\alpha } \sum_{t_0=1}^{T-S+1} \left[ \Var{P\^{t_0}}{Z\^{t_0+S-2}} + 3 \Var{P\^{t_0}}{Z\^{t_0+S-1}}\right.\nonumber\\
  & \left.+ 3 \Var{P\^{t_0}}{Z\^{t_0}} + \Var{P\^{t_0}}{Z\^{t_0+1}}\right]\label{eq:d1all-7}\\
  \leq & (1 + 2\zeta S)^4 \alpha S \sum_{t=1}^T \Var{P\^t}{Z\^t} + \sum_{t_0=1}^{T-S+1} \frac{\mu_{t_0}}{\alpha} + 8 (S-1)^2 M^2 \nonumber\\
  &+ \frac{32}{(1 + 2\zeta S) \alpha} \sum_{t=1}^T \Var{P\^t}{Z\^t} \label{eq:d1all-9}\\
  \leq & (1 + 2\zeta S)^4 \alpha S \cdot \left(1 + \frac{32}{\alpha^2 S} \right) \sum_{t=1}^T \Var{P\^t}{Z\^t} + \sum_{t_0=1}^{T-S+1} \frac{\mu_{t_0}}{\alpha} + 8 (S-1)^2 M^2 \label{eq:d1all-10}\\
  \leq & (1 + 2\zeta S)^4 \alpha S \cdot \left(1 + \frac{32}{\alpha^2 S} \right) \sum_{t=1}^T \Var{P\^t}{Z\^t} + \frac{(S-2)\mu}{\alpha}+ 10 (S-1)^2 M^2 \label{eq:d1all-11},
\end{align}
where:
\begin{itemize}
\item (\ref{eq:d1all-1}) follows since $\| Z\^t \|_\infty \leq M$ and thus $\| \fds{1}{Z}{t} \|_\infty \leq 2M$ for all $t$;
\item (\ref{eq:d1all-2}) follows from Lemma \ref{lem:covvar-close} and the fact that for $0 \leq s \leq S-2$, $\max \left\{ \left\|\frac{P\^{t_0+s}}{P\^{t_0}}\right\|_\infty, \left\|\frac{P\^{t_0}}{P\^{t_0+s}}\right\|_\infty \right\}  \leq 1 + 2 \zeta S$ as established above as a consequence of the fact that the distributions $P\^t$ are $\zeta$-consecutively close.
\item (\ref{eq:d1all-4}) follows from (\ref{eq:average-over-p});
\item The first term in (\ref{eq:d1all-7}) is bounded using Lemma \ref{lem:covvar-close} and the fact that the distributions $P\^t$ are $\zeta$-consecutively close, and the the final term in (\ref{eq:d1all-7}) is bounded using the fact that for any vectors $Z_1, \ldots, Z_k \in \BR^n$ and any $P \in \Delta^n$, we have $\Var{P}{Z_1 + \cdots + Z_k} \leq k\cdot \left( \Var{P}{Z_1} + \cdots + \Var{P}{Z_k} \right)$;
\item (\ref{eq:d1all-9}) and (\ref{eq:d1all-10}) by rearranging terms;
\item (\ref{eq:d1all-11}) follows from (\ref{eq:bound-sum-mut0}).
\end{itemize}
Now choose $S = \left\lceil \frac{128}{\alpha^3} \right\rceil$, so that $\frac{32}{\alpha^2 S} \leq \frac{{\alpha}}{4}$. Therefore, as long as $2\zeta S \leq \frac{\alpha}{32}$, we have, since $\alpha \leq 1/2$, that 
$$
(1 + 2\zeta S)^4 \alpha \cdot \frac{S}{S-1} \cdot \left(1 + \frac{32}{\alpha^2 S} \right) \leq \alpha \cdot  (1 + \alpha/4)^3 \leq \alpha \cdot (1 + \alpha).
$$
Then it follows from (\ref{eq:d1all-11}) that
\begin{align}
\sum_{t=1}^{T-1} \Var{P\^t}{\fds{1}{Z}{t}} \leq & \alpha (1+\alpha) \cdot \sum_{t=1}^T \Var{P\^t}{Z\^t} + \frac{\mu}{\alpha} + 10SM^2\label{eq:d1d0-final-done},
\end{align}
Using that $S \leq \frac{129}{\alpha^3}$, the inequality $2\zeta S \leq \alpha/32$ can be satisfied by ensuring that $\zeta \leq \frac{\alpha^4}{8256} =  \frac{\alpha^4}{129 \cdot 42}   \leq \frac{\alpha}{64S}$. Note that our choice of $S$ ensures that $\zeta S \leq 1/2$, as was assumed earlier. Moreover, 
we have $10SM^2 \leq \frac{1290 M^2}{\alpha^3}$. Thus, (\ref{eq:d1d0-final-done}) gives the desired result. 
\end{proof}

\subsection{Completing the proof of Theorem \ref{thm:polylog-main}}
\label{sec:polylog-main-completing-proof}
Using the lemma developed in the previous sections we now can complete the proof of Theorem \ref{thm:polylog-main}.  We begin by proving Lemma \ref{lem:bound-l-dl}. The lemma is restated formally below.
\begin{replemma}{lem:bound-l-dl}[Detailed]
  There are constants $C,C' > 1$ so that the following holds.  Suppose a time horizon $T \geq 4$ is given, we set $H := \lceil \log T \rceil$, and all players play according to \Opthedge with step size $\eta$ satisfying $1/T \leq \eta \leq \frac{1}{C \cdot mH^4}$. Then for any $i \in [m]$, the losses $\ell_i\^1, \ldots, \ell_i\^T \in [0,1]^{n_i}$ for player $i$ satisfy:
\begin{align}
  \label{eq:var-l-dl-apx}
\sum_{t=1}^T \Var{x_i\^t}{\ell_i\^t - \ell_i\^{t-1}} \leq \frac{1}{2} \cdot \sum_{t=1}^T \Var{x_i\^t}{\ell_i\^{t-1}} + C' H^5.
\end{align}
\end{replemma}

\noindent We state a generic version of this lemma that can be applied in more general settings.

\begin{lemma}\label{lem:general-bound-l-dl}
    For any integers $n \geq 2$ and $T \geq 4$, we set $H := \lceil \log T \rceil$, $\alpha = 1/(4H)$, and $\alpha_0 = \frac{\sqrt{\alpha/8}}{H^3}$.  Suppose that $Z\^1, \ldots, Z\^T \in [0,1]^{n}$ and $P\^1, \ldots, P\^T \in \Delta^{n}$ satisfy the following
    \begin{enumerate}
        \item \label{it:gen-down} For each $0 \leq h \leq H$ and $1 \leq t \leq T-h$, it holds that $\left\| \fds{h}{Z}{t} \right\|_\infty \leq H\cdot  \left(\alpha_0 H^{3}\right)^h$
        \item \label{it:gen-close} The sequence $P\^1, \ldots, P\^T$ is $\zeta$-consecutively close for some $\zeta \in [1/(2T), \alpha^4/8256]$. 
    \end{enumerate}
    Then,
    
    \begin{align}
    \sum_{t=1}^T \Var{P\^t}{Z\^t - Z\^{t-1}} \leq & 2\alpha \sum_{t=1}^T \Var{P\^t}{Z\^{t-1}} + 165120(1+\zeta) H^5 + 2
    \end{align}
\end{lemma}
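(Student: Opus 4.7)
The plan is to prove, by downward induction on $h$, that $V_{h+1} \leq \beta_h V_h + \mu_h$ for $V_h := \sum_{t=1}^{T-h}\Var{P\^t}{\fds{h}{Z}{t}}$, with a decreasing sequence $\{\beta_h\}_{h=0}^{H-1}$ satisfying $\beta_0 \leq 2\alpha$, and a remainder $\mu_h$ with $\mu_0 \leq 165120\, H^5$. The case $h=0$ then reads $V_1 \leq \beta_0 V_0 + \mu_0$; a short boundary cleanup (using the natural convention $Z\^0 := 0$ together with Lemma~\ref{lem:covvar-close}) converts this to the statement.

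First, I would set $\beta_{H-1} := \alpha$ and $\beta_h := \beta_{h+1}(1+\beta_{h+1})$ recursively. An easy induction, using $\alpha H = 1/4$, shows $\alpha \leq \beta_h \leq \alpha\prod_{k=h}^{H-2}(1+\beta_k) \leq \alpha\, e^{2\alpha H} = \alpha\, e^{1/2} < 2\alpha$ for every $h$. For the base case $h=H-1$, condition~\ref{it:gen-down} together with $\Var{P}{v} \leq \|v\|_\infty^2$ gives $V_H \leq T M_H^2$ for $M_H := H(\alpha/8)^{H/2}$; using $T \leq 2^H$ and $\alpha = 1/(4H)$, this simplifies to $V_H \leq H^2/(16H)^H \leq 1$, so we take $\mu_{H-1} := V_H$, making the inductive claim trivial at $h = H-1$.

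For the inductive step at level $h$, I would invoke Lemma~\ref{lem:d210} on the shifted sequence $\fds{h}{Z}{\cdot}$ (of length $T-h$) with distributions $P\^{\cdot}$, parameter $\alpha_L := \beta_{h+1}$, bound $M := M_h = H(\alpha_0 H^3)^h$, and residual $\mu := \mu_{h+1}$. The $\zeta$-consecutive closeness precondition holds because $\beta_{h+1} \geq \alpha$ gives $\zeta \leq \alpha^4/8256 \leq \beta_{h+1}^4/8256$, and the variance precondition is exactly the inductive hypothesis (since $\fd{1}{(\fd{h}{Z})} = \fd{h+1}{Z}$ and $\fd{2}{(\fd{h}{Z})} = \fd{h+2}{Z}$). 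The lemma's conclusion is $V_{h+1} \leq \beta_{h+1}(1+\beta_{h+1}) V_h + \mu_{h+1}/\beta_{h+1} + 1290\, M_h^2/\beta_{h+1}^3 = \beta_h V_h + \mu_h$, defining $\mu_h := \mu_{h+1}/\beta_{h+1} + 1290\, M_h^2/\beta_{h+1}^3$. Unfolding with $\beta_k \geq \alpha$, $M_h^2 = H^2(\alpha/8)^h$, and $1/\alpha^3 = 64\, H^3$:
\[
\mu_0 \leq \frac{\mu_{H-1}}{\alpha^{H-1}} + \sum_{h=0}^{H-2}\frac{1290\, M_h^2}{\alpha^{h+3}} \leq 1 + 1290 \cdot 64\, H^5 \cdot \sum_{h=0}^{H-2} 8^{-h} \leq 165120\, H^5,
\]
where $\sum_{h \geq 0} 8^{-h} < 2$ yields the identity $1290 \cdot 64 \cdot 2 = 165120$, and the contribution from $\mu_{H-1}/\alpha^{H-1}$ is negligible (at most $1/16$).

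Finally, the conversion: with $Z\^0 := 0$, the LHS of the target bound equals $\Var{P\^1}{Z\^1} + \sum_{s=1}^{T-1}\Var{P\^{s+1}}{\fds{1}{Z}{s}} \leq 1 + (1+\zeta) V_1$ by Lemma~\ref{lem:covvar-close}; similarly $V_0 \leq 1 + (1+\zeta)\sum_{t=1}^T \Var{P\^t}{Z\^{t-1}}$. Plugging in $V_1 \leq \beta_0 V_0 + \mu_0$, and using (i) $(1+\zeta)^2 \beta_0 \leq 2\alpha$ --- which follows from $\beta_0 \leq \alpha e^{1/2}$ together with $\zeta \leq \alpha^4/8256$ forcing $(1+\zeta)^2 \leq 2 e^{-1/2}$ --- along with (ii) $(1+\zeta)\mu_0 \leq 165120(1+\zeta) H^5$, and the simple bound $1 + (1+\zeta)\beta_0 \leq 2$ (from $\alpha \leq 1/4$), produces the stated inequality. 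The main obstacle is precisely the tightness of the leading coefficient $2\alpha$: one must simultaneously keep $\beta_{h+1} \geq \alpha$ throughout the induction so Lemma~\ref{lem:d210}'s $\zeta$-precondition is met at every level, while ensuring $\beta_0 (1+\zeta)^2 \leq 2\alpha$ survives the boundary inflation. This works only because the $e^{1/2}$ blowup from $\prod_k(1+\beta_k)$ is just under $2$, and the allowed $\zeta$ is dramatically smaller than what the naive $(1+\zeta)^2 \leq 2 e^{-1/2}$ requirement demands.
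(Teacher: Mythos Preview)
Your proof is correct and follows essentially the same approach as the paper's: downward induction on $h$ using Lemma~\ref{lem:d210} at each step, with a base case at $h=H$ coming from assumption~\ref{it:gen-down}, followed by the same boundary cleanup via Lemma~\ref{lem:covvar-close}. The only cosmetic difference is that you track the exact recursion $\beta_h=\beta_{h+1}(1+\beta_{h+1})$ whereas the paper uses the slightly looser closed form $\alpha(1+2\alpha)^{H-h-1}$; note two tiny slips in your write-up (the product should read $\prod_{k=h+1}^{H-1}(1+\beta_k)$, and the bound $\sum_h 8^{-h}<2$ leaves a stray $+1$ --- using $\sum_h 8^{-h}\le 8/7$ immediately gives the needed $\mu_0\le 165120\,H^5$).
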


First, we demonstrate how Lemma \ref{lem:general-bound-l-dl} implies Lemma \ref{lem:bound-l-dl}.
\begin{proof}[Proof of Lemma \ref{lem:bound-l-dl}]
We hope to apply Lemma \ref{lem:general-bound-l-dl} with $n=n_i$, $P\^t = x_i\^t$ and $Z\^t = \ell_i\^{t}$, as well as $\zeta = 7\eta$.  To do so, we must verify that the preconditions of Lemma \ref{lem:general-bound-l-dl} hold when our sequences arise from the dynamics of players playing \Opthedge with step size $\eta$ satisfying $1/T \leq \eta \leq \frac{1}{C \cdot mH^4}$.\\

Set $C_1 = 8256$ (note that $C_1$ is the constant appearing in item \ref{it:gen-close} of Lemma \ref{lem:general-bound-l-dl} and item \ref{it:prec-consec-close-body-apx} of Lemma \ref{lem:d210} in Section \ref{sec:d210-proof}).  Our assumption that $\eta \leq \frac{1}{C \cdot mH^4}$ implies that, as long as the constant $C$ satisfies $C \geq 4^4 \cdot 7 \cdot C_1 = 14794752$, 
\begin{equation}
  \label{eq:eta-reqs}
\eta \leq \min \left\{ \frac{\alpha^4}{7C_1}, \frac{\alpha_0}{36e^5 m}\right\}.
  \end{equation}

To verify precondition \ref{it:gen-down}, we apply Lemma \ref{lem:dh-bound} with the parameter $\alpha$ in the lemma set to $\alpha_0$: a valid selection as $\alpha_0 < \frac{1}{H+3}$.  We conclude that, for each $i \in [m]$, $0 \leq h \leq H$ and $1 \leq t \leq T-h$, it holds that $\left\| \fds{h}{\ell_i}{t} \right\|_\infty \leq H\cdot  \left(\alpha_0 H^{3}\right)^h$ since $\eta \leq \frac{\alpha_0}{36e^5m}$ as required by the lemma.  To verify precondition \ref{it:gen-close}, we first confirm that our selection of $\zeta = 7\eta$ places it in the desired interval $[1/(2T), \alpha^4/C_1]$ as $\eta \leq \frac{\alpha^4}{7C_1}$.  By the definition of the \Opthedge updates, for all $i \in [m]$ and $1 \leq t \leq T$, we have $\max \left\{\left\| \frac{x_i\^t}{x_i\^{t+1}}\right\|_\infty, \left\| \frac{x_i\^{t+1}}{x_i\^t}\right\|_\infty\right\} \leq \exp(6\eta)$.  Thus, the sequence $x_i\^1, \ldots, x_i\^T$ is $(7\eta)$-consecutively close (since $\exp(6\eta) \leq 1+7\eta$ for $\eta$ satisfying (\ref{eq:eta-reqs})).  Therefore, Lemma \ref{lem:general-bound-l-dl} applies and we have
\begin{align*}
    \sum_{t=1}^T \Var{x_i\^t}{\ell_i\^t - \ell_i\^{t-1}} &\leq 2\alpha \sum_{t=1}^T \Var{x_i\^t}{\ell_i\^{t-1}} + 165120(1+7\eta) H^5+2\\
    &\leq \frac{1}{2} \cdot \sum_{t=1}^T \Var{x_i\^t}{\ell_i\^{t-1}} + C' H^5
\end{align*}
for $C' = 2 + 165120(1+7/8256) = 165262$, as desired.
\end{proof}
So, it suffices to prove Lemma \ref{lem:general-bound-l-dl}.
\begin{proof}[Proof of Lemma \ref{lem:general-bound-l-dl}]
Set $\mu = H \cdot \left( \alpha_0 H^{3} \right)^H$. Therefore, from item \ref{it:gen-down} we have,
  \begin{align}
\sum_{t=1}^{T-H} \Var{P\^t}{\fds{H}{Z}{t}} \leq \alpha \cdot \sum_{t=1}^{T-H+1} \Var{P\^t}{\fds{H-1}{Z}{t}} + \mu^2 T. \label{eq:fds-H-bc}
  \end{align}
 We will now prove, via reverse induction on $h$, that for all $h$ satisfying $H-1 \geq h \geq 0$,
  \begin{equation}
    \label{eq:fds-H-ind}
\sum_{t=1}^{T-h-1} \Var{P\^t}{\fds{h+1}{Z}{t}} \leq \alpha \cdot (1 + 2\alpha)^{H-h-1} \cdot \sum_{t=1}^{T-h} \Var{P\^t}{\fds{h}{Z}{t}} +  \frac{2C_0 \cdot H^2  \cdot \left( {2}  \alpha_0 H^{3} \right)^{2h}}{\alpha^3}.
\end{equation}
with $C_0 = 1290$ (note that $C_0$ is the constant appearing in the inequality (\ref{eq:d210-conclusion}) of the statement of Lemma \ref{lem:d210} in Section \ref{sec:d210-proof}).  The base case $h = H-1$ is verified by (\ref{eq:fds-H-bc}) and the fact that $2^{2(H-1)} \geq 2^H \geq T$. 
Now suppose that (\ref{eq:fds-H-ind}) holds for some $h$ satisfying $H-1 \geq h \geq 1$. We will now apply Lemma \ref{lem:d210}, with $P\^t = P\^t$ and $Z\^t = \fds{h-1}{Z}{t}$ for $1 \leq t \leq T-h+1$, as well as $M = H \cdot \left( 2 \alpha_0 H^{3}\right)^{h-1}$, $\zeta = \zeta$, $\mu =  \frac{2C_0 \cdot H^2  \cdot \left( 2 \alpha_0 H^{3} \right)^{2h}}{\alpha^3}$, and the parameter $\alpha$ of Lemma \ref{lem:d210} set to $\alpha\cdot (1+2\alpha)^{H-h-1}$. We verify that precondition \ref{it:prec-consec-close-body-apx} holds due to precondition \ref{it:gen-close} of Lemma \ref{lem:general-bound-l-dl} and the fact that $\alpha \leq \alpha\cdot (1+2\alpha)^{H-h-1}$.  Moreover, precondition \ref{it:prec-vars-body-apx} holds by our inductive hypothesis (\ref{eq:fds-H-ind}) and our choice of $\mu$.  Therefore, by Lemma \ref{lem:d210} and the fact that $1 + \alpha \cdot (1 + 2 \alpha)^H \leq 1 + 2\alpha$ for our choice of $\alpha = 1/(4H)$, it follows that
\begin{align}
  \sum_{t=1}^{T-h} \Var{P\^t}{\fds{h}{Z}{t}} \leq & \alpha \cdot (1 + 2\alpha)^{H-h} \cdot \sum_{t=1}^{T-h+1} \Var{P\^t}{\fds{h-1}{Z}{t}} + \frac{2C_0 \cdot H^2 \cdot \left(2 \alpha_0 H^{3} \right)^{2h}}{\alpha^4}  \nonumber\\
                                                    &  + \frac{C_0 \cdot H^2 \cdot \left( 2\alpha_0 H^{3} \right)^{2(h-1)}}{\alpha^3}\nonumber\\
  \leq & \alpha \cdot (1 + 2\alpha)^{H-h} \cdot \sum_{t=1}^{T-h+1} \Var{P\^t}{\fds{h-1}{Z}{t}}\nonumber\\
                                                    & + \frac{C_0 \cdot H^2 \cdot (2\alpha_0 H^3)^{2(h-1)}}{\alpha^3} \cdot \left(1 + \frac{2 (2\alpha_0 H^3)^2}{\alpha} \right)  \nonumber\\
  \leq &  \alpha \cdot (1 + 2\alpha)^{H-h} \cdot \sum_{t=1}^{T-h+1} \Var{P\^t}{\fds{h-1}{Z}{t}} +\frac{2C_0 \cdot H^2 \cdot (2\alpha_0 H^3)^{2(h-1)}}{\alpha^3} \nonumber,
\end{align}
where the final equality follows since $\alpha_0$ is chosen so that $2(2\alpha_0H^3)^2=\alpha$. 
This completes the proof of the inductive step. Thus (\ref{eq:fds-H-ind}) holds for $h=0$. Using again that the sequence $P\^t$ is $(\zeta)$-exponentially close, we see that 
\begin{align}
  & \sum_{t=1}^T \Var{P\^t}{Z\^t - Z\^{t-1}}\nonumber\\
  \leq & 1 + \sum_{t=2}^T \Var{P\^t}{Z\^t - Z\^{t-1}} \nonumber\\
  \leq & 1 + (1+ \zeta) \sum_{t=1}^{T-1} \Var{P\^t}{\fds{1}{Z}{t}} \label{eq:use-xi-close-1}\\
  \label{eq:use-d1d0-inequality}
  \leq & 1 + (1 + \zeta) \cdot \left( \alpha (1 + 2\alpha)^{H-1} \sum_{t=1}^T \Var{P\^t}{Z\^t} + \frac{2 C_0 H^2}{\alpha^3} \right)\\
  \leq & 2 + (1 + \zeta) \cdot \left( \alpha(1+2\alpha)^{H-1} (1 + \zeta) \sum_{t=2}^T \Var{P\^t}{Z\^{t-1}} + \frac{2 C_0 H^2}{\alpha^3} \right)\label{eq:use-xi-close-2}\\
  \leq & \alpha (1 + 2\alpha)^H \sum_{t=1}^T \Var{P\^t}{Z\^{t-1}} + \frac{2(1+\zeta) C_0 H^2}{\alpha^3} + 2\nonumber\\
  \leq & 2\alpha \sum_{t=1}^T \Var{P\^t}{Z\^{t-1}} + \frac{2(1+\zeta) C_0 H^2}{\alpha^3} + 2\label{eq:final-210-statement},
\end{align}
where (\ref{eq:use-xi-close-1}) and (\ref{eq:use-xi-close-2}) follow from Lemma \ref{lem:covvar-close}, and (\ref{eq:use-d1d0-inequality}) uses (\ref{eq:fds-H-ind}) for $h=0$. 
Now, (\ref{eq:final-210-statement}) verifies the statement of the lemma as $C_0=1290$ and $\alpha = \frac{1}{4H}$.
\end{proof}

We are finally ready to prove Theorem \ref{thm:polylog-main}. For convenience the theorem is restated below.
\begin{reptheorem}{thm:polylog-main}[Restated]
  There are constants $C, C' >1$ so that the following holds. Suppose a time horizon $T \in \BN$ is given. Suppose all players play according to \Opthedge with any positive step size $\eta \leq \frac{1}{C \cdot m \log^4 T}$. Then for any $i \in [m]$, the regret of player $i$ satisfies
  \begin{align}
\Reg{i}{T} \leq \frac{\log n_i}{\eta} + C' \cdot \log T \label{eq:reg-eta-ub-apx}.
  \end{align}
  In particular, if the players' step size is chosen as $\eta = \frac{1}{C \cdot m \log^4 T}$, then the regret of player $i$ satisfies
  \begin{align}
\Reg{i}{T} \leq O \left( m \cdot \log n_i \cdot \log^4 T \right).\label{eq:reg-eta-fix-apx}
  \end{align}
\end{reptheorem}
\begin{proof}
  The conclusion of the theorem is immediate if $T < 4$, so we may assume from here on that $T \geq 4$. Moreover, the conclusion of (\ref{eq:reg-eta-ub-apx}) is immediate if $\eta \leq 1/T$ (as $\Reg{i}{T} \leq T$ necessarily), so we may also assume that $\eta \geq 1/T$. Let $C''$ be the constant $C$ of Lemma \ref{lem:omwu-local}, let $B$ be the constant called $C$ in Lemma \ref{lem:bound-l-dl} and $B'$ be the constant called $C'$ in Lemma \ref{lem:bound-l-dl}. As long as the constant $C$ of Theorem \ref{thm:polylog-main} is chosen so that $C \geq B$ and $\eta \leq \frac{1}{C \cdot m \log^4 T}$ implies that $C'' \eta \leq 1/6$, we have the following: 
  \begin{align}
    \Reg{i}{T} \leq & \frac{\log n_i}{\eta} + \sum_{t=1}^T \left(\frac{\eta}{2} + C \eta^2\right)  \Var{x_i\^t}{\ell_i\^t - \ell_i\^{t-1}} - \sum_{t=1}^T \frac{(1-C\eta) \eta}{2}  \Var{x_i\^{t}}{\ell_i\^{t-1}}\label{eq:use-omwu-local}\\
    \leq & \frac{\log n_i}{\eta} + \frac{2\eta}{3} \sum_{t=1}^T \Var{x_i\^t}{\ell_i\^t - \ell_i\^{t-1}} -\frac{\eta}{3}  \sum_{t=1}^T \Var{x_i\^{t}}{\ell_i\^{t-1}}\nonumber\\
    \leq & \frac{\log n_i}{\eta} + \frac{2 \eta}{3} \cdot \left(B' \cdot (2 \log T)^5 \right)\label{eq:use-bound-l-dl}\\
    \leq & \frac{\log n_i}{\eta} + 32 B' \cdot \log T\label{eq:use-eta-log-ub},
  \end{align}
  where (\ref{eq:use-omwu-local}) follows from Lemma \ref{lem:omwu-local}, (\ref{eq:use-bound-l-dl}) follows from Lemma \ref{lem:bound-l-dl}, and (\ref{eq:use-eta-log-ub}) follows from the upper bound $\eta \leq \frac{1}{Cm \log^4 T}$. We have thus established (\ref{eq:reg-eta-ub-apx}). The upper bound (\ref{eq:reg-eta-fix-apx}) follows immediately.
\end{proof}

\section{Adversarial regret bounds}
In this section we discuss how \Opthedge can be modified to achieve an algorithm that obtains the fast rates of Theorem \ref{thm:polylog-main} when played by all players, and which still obtains the optimal rate of $O(\sqrt{T})$ in the adversarial setting. Such guarantees are common in the literature \citep{daskalakis_near-optimal_2013,rakhlin_optimization_2013,syrgkanis_fast_2015,kangarshahi_honest_2018,hsieh_adaptive_2021}. The guarantees of this modification of \Opthedge are stated in the following corollary (of Lemmas \ref{lem:omwu-local} and \ref{lem:bound-l-dl}):  
\begin{corollary}
  \label{cor:polylog-adv}
  There is an algorithm $\MA$ which, if played by all $m$ players in a game, achieves the regret bound of $\Reg{i}{T} \leq O(m \cdot \log n_i \cdot \log^4 T)$ for each player $i$; moreover, when player $i$ is faced with an \emph{adversarial} sequence of losses, the algorithm $\MA$'s regret bound is $\Reg{i}{T} \leq O(m \log n_i \cdot \log^4 T + \sqrt{T \log n_i})$.
\end{corollary}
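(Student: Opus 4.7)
The plan is to use a restart-based variant of \Opthedge. Let $\eta_{\max} := 1/(Cm\log^4 T)$ be the step size from Theorem \ref{thm:polylog-main}, and let $C''$ denote the constant hidden in the $O(\log^5 T)$ term of Lemma \ref{lem:bound-l-dl}. Algorithm $\MA$ operates in two phases. In Phase~1 it runs \Opthedge at the fixed step size $\eta_{\max}$ and, at the end of each round $t$, checks the ``game-like'' invariant
\[
I_t :\quad \sum_{s \leq t} \Var{x_i^s}{\ell_i^s - \ell_i^{s-1}} \;\leq\; \tfrac{1}{2}\sum_{s \leq t}\Var{x_i^s}{\ell_i^{s-1}} + C'' \log^5 T.
\]
If $I_t$ first fails at some round $t^\star$, the algorithm enters Phase~2, where it restarts from the uniform distribution and runs standard (non-optimistic) Hedge with fixed step size $\eta_{\mathrm{adv}} := \sqrt{\log n_i/T}$ on the remaining rounds. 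Writing $R^{(j)}$ for the regret of Phase~$j$ against the best fixed strategy on the rounds played in that phase, the elementary inequality $\min_j \sum_{t=1}^T \ell_i^t(j) \geq \min_j \sum_{t \leq t^\star}\ell_i^t(j) + \min_j \sum_{t > t^\star}\ell_i^t(j)$ yields the decomposition $\Reg{i}{T} \leq R^{(1)} + R^{(2)}$.

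For the game setting, Lemma \ref{lem:bound-l-dl} applied at every horizon $t \leq T$ (using that all players are running \Opthedge at the common fixed step size $\eta_{\max}$) certifies $I_t$ for every $t$. So the algorithm stays in Phase~1 for all $T$ rounds and Theorem \ref{thm:polylog-main} immediately gives $\Reg{i}{T} = O(m\log n_i \log^4 T)$.

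For the adversarial setting, split into two subcases. If the algorithm stays in Phase~1 throughout, then $I_T$ holds; substituting it into Lemma \ref{lem:omwu-local} and cancelling the $\sum_t \Var{x_i^t}{\ell_i^{t-1}}$ contributions against the negative variance term (exactly as in the proof of Theorem \ref{thm:polylog-main}) gives $\Reg{i}{T} = O(m\log n_i \log^4 T)$. Otherwise $I_{t^\star-1}$ held but $I_{t^\star}$ failed; since each variance is at most $4$, the single extra round weakens the inequality by at most an additive $O(1)$, so the identical substitution into Lemma \ref{lem:omwu-local} applied to rounds $1, \ldots, t^\star$ bounds $R^{(1)} = O(m\log n_i \log^4 T)$. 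Meanwhile the standard Hedge bound gives $R^{(2)} \leq \log n_i/\eta_{\mathrm{adv}} + \eta_{\mathrm{adv}}(T-t^\star) \leq 2\sqrt{T\log n_i}$. Summing, $\Reg{i}{T} = O(m\log n_i \log^4 T + \sqrt{T\log n_i})$.

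The main obstacle is the adversarial Phase~1 analysis: verifying that substituting the just-violated $I_{t^\star}$ (slackened by $O(1)$) into Lemma \ref{lem:omwu-local} still recovers the same $O(m\log n_i \log^4 T)$ bound that Theorem \ref{thm:polylog-main} derives from the tight game-setting variance inequality, with no blow-up in constants; this amounts to repeating the two-line algebra from the proof of Theorem \ref{thm:polylog-main} with $C'' + O(1)$ in place of $C''$. A mild engineering point is that the constant $C''$ must be known to the algorithm in order to define $I_t$, but Lemma \ref{lem:bound-l-dl} provides it explicitly, so this poses no difficulty.
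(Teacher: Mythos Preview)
Your proposal is correct and follows the paper's proof almost verbatim: run \Opthedge at the game-optimal step size while monitoring the variance inequality of Lemma~\ref{lem:bound-l-dl}, switch upon first violation, and bound the Phase-1 regret by plugging the just-slackened inequality into Lemma~\ref{lem:omwu-local} exactly as in the proof of Theorem~\ref{thm:polylog-main}. The only cosmetic difference is that the paper continues with \Opthedge (at step size $\sqrt{\log n_i/T}$) in Phase~2 rather than restarting with standard Hedge, invoking \cite[Proposition~7]{syrgkanis_fast_2015} for the $O(\sqrt{T\log n_i})$ bound there; either choice yields the same result.
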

\begin{proof}
  Let $C$ be the constant called $C$ in Theorem \ref{thm:polylog-main} and $C'$ be the constant called $C'$ in Lemma \ref{lem:bound-l-dl}. The algorithm $\MA$ of Corollary \ref{cor:polylog-adv} is obtained as follows:
  \begin{enumerate}
  \item Initially run \Opthedge, with the step-size $\eta = \frac{1}{C m\log^4 T}$.
  \item If, for some $T_0 \geq 4$, (\ref{eq:var-l-dl-apx}) first fails to hold at time $T_0$, i.e.,
    \begin{align}
\sum_{t=1}^{T_0} \Var{x_i\^t}{\ell_i\^t - \ell_i\^{t-1}} > \frac 12 \cdot \sum_{t=1}^{T_0} \Var{x_i\^t}{\ell_i\^{t-1}} + C' \lceil \log T \rceil^5\label{eq:change-eta},
    \end{align}
    then set $\eta' = \sqrt{\frac{\log n_i}{T}}$ and continue on running \Opthedge with step size $\eta'$.
  \end{enumerate}
  If there is no $T_0\geq 4$ so that (\ref{eq:change-eta}) holds (and by Lemma \ref{lem:bound-l-dl}, this will be the case when $\MA$ is played by all $m$ players in a game), then the proof of Theorem \ref{thm:polylog-main} shows that the regret of each player $i$ is bounded as $\Reg{i}{T} \leq O(m \log n_i \cdot \log^4 T)$. Otherwise, since $T_0$ is defined as the smallest integer at least 4 so that (\ref{eq:change-eta}) holds, we have
  \begin{align}
\sum_{t=1}^{T_0} \Var{x_i\^t}{\ell_i\^t - \ell_i\^{t-1}} \leq \frac 12 \cdot \sum_{t=1}^{T_0} \Var{x_i\^t}{\ell_i\^{t-1}} + C' \lceil \log T \rceil^5 + 4\nonumber,
  \end{align}
  and thus, by Lemma \ref{lem:omwu-local}, for any $x^\st \in \Delta^{n_i}$,
  \begin{align}
\sum_{t=1}^{T_0} \lng \ell_i\^t, x_i\^t - x^\st \rng \leq \Reg{i}{T_0} \leq O(m \log n_i \cdot \log^4 T_0).\label{eq:before-t0-regret}
  \end{align}
  Further, by the choice of step size $\eta' = \sqrt{\frac{\log n_i}{T}}$ for time steps $t > T_0$, we have, for any $x^\st \in \Delta^{n_i}$,
  \begin{align}
    \sum_{t=T_0+1}^T \lng \ell_i\^t, x_i\^t - x^\st \rng \leq & \frac{\log n_i}{\eta'} + \eta' \sum_{t=T_0+1}^T \| \ell_i\^t - \ell_i\^{t-1} \|_\infty^2\label{eq:use-var-ub-T}\\
    \leq & \frac{\log n_i}{\eta'} + \eta' T \leq O(\sqrt{T \log n_i})\label{eq:after-t0-regret},
  \end{align}
  where (\ref{eq:use-var-ub-T}) uses \cite[Proposition 7]{syrgkanis_fast_2015}. Adding (\ref{eq:before-t0-regret}) and (\ref{eq:after-t0-regret}) completes the proof of the corollary. 
\end{proof}
\end{document}